\newcommand{\R}{\mathbb{R}}
\newcommand{\bfXi}{\boldsymbol \Xi}
\newcommand{\bfd}{\boldsymbol d}
\newcommand{\bfx}{\boldsymbol x}
\newcommand{\bfX}{\boldsymbol X}
\newcommand{\bff}{\boldsymbol f}
\newcommand{\bfy}{\boldsymbol y}
\newcommand{\Dcal}{\mathcal{D}}
\newcommand{\Vcal}{\mathcal{V}}
\newcommand{\bfeta}{\boldsymbol \eta}
\newcommand{\bfe}{\boldsymbol e}
\newcommand{\bfo}{\boldsymbol o}
\newcommand{\bfu}{\boldsymbol u}
\newcommand{\bfv}{\boldsymbol v}
\newcommand{\bfz}{\boldsymbol z}
\newcommand{\bfA}{\boldsymbol A}
\newcommand{\bfB}{\boldsymbol B}
\newcommand{\bfD}{\boldsymbol D}
\newcommand{\bfE}{\boldsymbol E}
\newcommand{\bfG}{\boldsymbol G}
\newcommand{\bfI}{\boldsymbol I}
\newcommand{\bfM}{\boldsymbol M}
\newcommand{\bfO}{\boldsymbol O}
\newcommand{\bfP}{\boldsymbol P}
\newcommand{\bfR}{\boldsymbol R}
\newcommand{\bfU}{\boldsymbol U}
\newcommand{\bfV}{\boldsymbol V}
\newcommand{\bfN}{\boldsymbol N}
\newcommand{\bfS}{\boldsymbol S}
\newcommand{\bfY}{\boldsymbol Y}
\newcommand{\bfZ}{\boldsymbol Z}
\newcommand{\bfxi}{\boldsymbol \xi}
\newcommand{\singValMax}{s_{\text{max}}}
\newcommand{\singValMin}{s_{\text{min}}}
\newcommand{\nh}{N}
\newcommand{\nr}{n}
\newcommand{\bbfX}{\breve{\bfX}}
\newcommand{\hbfA}{\hat{\bfA}}
\newcommand{\hbfB}{\hat{\bfB}}
\newcommand{\hbfx}{\hat{\bfx}}
\newcommand{\bbfx}{\breve{\bfx}}
\newcommand{\tbfA}{\tilde{\bfA}}
\newcommand{\tbfB}{\tilde{\bfB}}
\newcommand{\tbfx}{\tilde{\bfx}}
\newcommand{\tbfO}{\tilde{\bfO}}
\newcommand{\tbfo}{\tilde{\bfo}}
\newcommand{\hbfO}{\hat{\bfO}}
\newcommand{\hbfo}{\hat{\bfo}}
\newcommand{\bbfY}{\breve{\bfY}}
\newcommand{\rbfd}{\bar{\bfd}}
\newcommand{\Exp}{\mathbb{E}}
\newcommand{\SelectKron}{\bfS}
\newcommand{\tbfz}{\tilde{\bfz}}
\newcommand{\bbfy}{\breve{\bfy}}
\newcommand{\bbfZ}{\breve{\bfZ}}
\newcommand{\bbfz}{\breve{\bfz}}
\newcommand{\kibitz}[2]{\ifnum\Comments=1\textcolor{#1}{#2}\fi}
\newenvironment{keywords}%
   {\begin{trivlist}\item[]{\bfseries\sffamily Keywords:}\ }
   {\end{trivlist}}
\newtheorem{theorem}{Theorem}
\newtheorem{corollary}[theorem]{Corollary}
\newtheorem{lemma}[theorem]{Lemma}
\newtheorem{proposition}[theorem]{Proposition}
\numberwithin{equation}{section}
\title{Active operator inference for learning low-dimensional dynamical-system models from noisy data}
\author{Wayne Isaac Tan Uy, Yuepeng Wang, Yuxiao Wen, and Benjamin Peherstorfer\thanks{\{wayne.uy,pehersto\}@cims.nyu.edu, \{yw3114,yw3210\}@nyu.edu, Courant Institute of Mathematical Sciences, New York University, New York, NY 10012}}
\date{July 2021}
\begin{document}

\maketitle

\begin{abstract}

Noise poses a challenge for learning dynamical-system models because already small variations can distort the dynamics described by  trajectory data. This work builds on operator inference from scientific machine learning to infer low-dimensional models from high-dimensional state trajectories polluted with noise. The presented analysis shows that, under certain conditions, the inferred operators are unbiased estimators of the well-studied projection-based reduced operators from traditional model reduction. Furthermore, the connection between operator inference and projection-based model reduction enables bounding the mean-squared errors of predictions made with the learned models with respect to traditional reduced models. The analysis also motivates an active operator inference approach that judiciously samples high-dimensional trajectories with the aim of achieving a low mean-squared error by reducing the effect of noise. Numerical experiments with high-dimensional linear and nonlinear state dynamics demonstrate that predictions obtained with active operator inference have orders of magnitude lower mean-squared errors than operator inference with traditional, equidistantly sampled trajectory data.

\end{abstract}

\begin{keywords}scientific machine learning, non-intrusive model reduction, operator inference, design of experiments, reduced models, noise
\end{keywords}

\section{Introduction}
Noise poses a challenge for learning dynamical-system models because already small variations can distort the dynamics described by  trajectory data. In this work, we build on operator inference \cite{paper:PeherstorferW2016} from scientific machine learning to derive low-dimensional dynamical-system models from high-dimensional, noisy state trajectories. We introduce a sampling scheme to query the high-dimensional systems for data so that, under certain conditions, in particular if the high-dimensional system dynamics are polynomially nonlinear, the inferred operators are unbiased estimators of the well-studied reduced operators obtained via projection of the governing equations of the high-dimensional systems in classical model reduction \cite{book:Antoulas2005,RozzaPateraSurvey,paper:BennerGW2015}. Additionally, we show that the mean-squared error (MSE) of the states predicted with the learned models can be bounded independently of the dimensions of the high-dimensional systems and in terms of the noise-to-signal ratio of the trajectory data. Motivated by the analysis, we propose active operator inference that queries high-dimensional systems in a principled way to generate data with low noise-to-signal ratios, which reduces by a factor of up to three the number of data samples that are required from the high-dimensional systems to make accurate state predictions in our numerical experiments. For the same number of data samples, active operator inference achieves orders of magnitude lower MSEs than traditional, equidistant-in-time sampled trajectory data.

Learning models from data is an active research topic in the field of scientific machine learning. A prominent approach is to fit dynamical-system models to data via dynamic mode decomposition and Koopman-based methods \cite{paper:Schmid2010,FLM:6837872,Tu2014391,NathanBook,Williams2015,doi:10.1137/20M1338289}. In another research direction, sparse representations of governing equations are sought with tools from sparse regression and compressive sensing \cite{paper:BruntonPK2016,Schaeffer6634,Schaeffer2018,Rudye1602614}. There is also work on non-intrusive model reduction that learns coefficients of low-dimensional representations from data \cite{Hesthaven2018,Guo2018,Guo2019}. If frequency-domain or impulse-response data are available, then data-driven modeling methods from the systems and control community are often used, such as the Loewner approach \cite{ANTOULAS01011986,5356286,Mayo2007634,BeaG12,paper:AntoulasGI2016,paper:GoseaA2018,paper:IonitaA2014}, vector fitting \cite{772353,paper:DrmacGB2015}, and eigensystem realization \cite{doi:10.2514/3.20031,KramerERZ}. 

In terms of learning from noisy data, there is the work \cite{Tran2017} that establishes probabilistic recovery guarantees via compressive sensing of sparse systems. Noise-robust data-driven discovery of governing equations is considered in \cite{doi:10.1098/rspa.2018.0305,Zhang2021} using sparse Bayesian regression. A strategy is proposed to subsample the data utilized in solving the regression problem with the goal of reducing the influence of noise on the learned model. A signal-noise decomposition is pursued in \cite{Rudy2019} in which a neural network is trained to discover the underlying dynamics while simultaneously estimating the noise. In system identification, works such as \cite{LjungBook,VIDYASAGAR2008421,1024346,Benner1999,SIMA2007477,pmlr-v75-simchowitz18a} derive probabilistic error bounds for oftentimes linear models using tools from, e.g., random matrix theory. The effect of the presence of noise and perturbations in frequency-domain data have also been studied in data-driven interpolatory model reduction and Loewner methods \cite{BEATTIE20122916,Lefteriu2010,embree2019pseudospectra,DP19LoewnerNoise}. However, except for the interpolatory model reduction methods, which require frequency-domain data, no low-dimensional models are considered in these works. In contrast, our approach based on operator inference and re-projection \cite{paper:PeherstorferW2016,paper:Peherstorfer2019} aims to learn low-dimensional models that are suited for solving outer-loop applications such as design, control, and inverse problems. Operator inference can learn non-Markovian low-dimensional models \cite{UP21NonMarkovian} and it is also a building block for other learning methods such as lift \& learn introduced in \cite{QIAN2020132401,doi:10.2514/1.J058943}, which comes with a sensitivity analysis with respect to deterministic perturbations in data \cite[Chapter~4.3]{QianThesis}. In \cite{UP20OpInfError}, probabilistic \emph{a posteriori} error bounds for operator-inference models are derived for linear models; however, the bounds only hold when data are free of noise. In the following, we exploit the bridge between data-driven modeling with operator inference and traditional model reduction \cite{book:Antoulas2005,RozzaPateraSurvey,paper:BennerGW2015} to establish probabilistic guarantees for learning from noisy data and to inform in a principled way which data samples to query from the high-dimensional system to reduce the effect of noise on the MSE of state predictions. 

This manuscript is organized as follows. Section~\ref{sec:Preliminaries} discusses preliminaries about learning low-dimensional dynamical-system models from data via operator inference and re-projection. Section~\ref{sec:LearningLowDim} describes the sampling and inference problem for learning models from noisy trajectories with the proposed approach. Then, bounds are derived for the MSE of the inferred operators and of the state predictions with respect to projection-based reduced models from traditional model reduction. A design of experiments approach is proposed in Section~\ref{sec:ActiveOpInf}, which leads to active operator inference that selects data samples to reduce the effect of noise on the MSE of state predictions. Numerical results presented in Section~\ref{sec:NumExp} are in agreement with the analysis: the results indicate that active operator inference learns low-dimensional models with MSEs that are orders of magnitude more accurate than with an uninformed design of experiments.

\section{Preliminaries} \label{sec:Preliminaries}

We review operator inference \cite{paper:PeherstorferW2016} for learning low-dimensional models from data in Section~\ref{subsec:OpInf}.  Section~\ref{subsec:Reproj} describes operator inference with the re-projection data sampling scheme \cite{paper:Peherstorfer2019} to recover projection-based reduced models from data.

\subsection{Learning low-dimensional dynamical-system models from data with operator inference} \label{subsec:OpInf}

 Let $\bfx_1, \dots, \bfx_{K} \in \R^N$ be states at time steps $k = 1, \dots, K$ that are obtained by exciting a dynamical system 
    \begin{equation}
    \bfx_{k + 1} = \bff(\bfx_k, \bfu_k)\,,\qquad k = 0, \dots, K - 1,
    \label{eq:HighDimSys}
    \end{equation}
at the inputs $\bfu_0,\dots,\bfu_{K-1} \in \R^p$ and initial condition $\bfx_0 \in \mathbb{R}^{\nh}$. Let further $\Vcal \subset \R^N$ be a subspace of the $N$-dimensional state space $\R^N$. The subspace $\Vcal$ is spanned by the orthonormal columns of the basis matrix $\bfV = [\bfv_1,\dots,\bfv_n] \in \mathbb{R}^{N \times n}$. For example, the subspace $\Vcal$ can be obtained via principal component analysis applied to sampled state trajectories.

Operator inference introduced in \cite{paper:PeherstorferW2016} learns low-dimensional dynamical-system models with polynomial nonlinear terms that best fit the temporal evolution of the state in the subspace $\Vcal$ with respect to the Euclidean norm in a least-squares sense. Operator inference first projects the high-dimensional states $\bfx_0, \dots, \bfx_K$ onto the subspace $\Vcal$ to obtain the projected states $\bbfx_0, \dots, \bbfx_K$ with $\bbfx_k = \bfV^T\bfx_k \in \R^n$ for $k = 0, \dots, K$ and then solves the least-squares problem
   \begin{align} \label{eq:LSOpInf}
        \min_{\hbfA_1,\dots,\hbfA_{\ell},\hbfB}  \sum_{k=0}^{K-1} \left \Vert \sum\nolimits_{j=1}^{\ell} \hbfA_j \bbfx_k^j + \hbfB \bfu_k - \bbfx_{k+1} \right \Vert_2^2\,,
    \end{align}
    where $\ell \in \mathbb{N}$ is the polynomial order, $\hbfB \in \R^{n \times p}, \hbfA_j \in \R^{n \times n_j}$ with 
    \[
    n_j = \binom{n+j-1}{j}\,,\qquad j = 1, \dots, \ell\,,
    \]
and $\bbfx_k^j$ is obtained for $k = 0, \dots, K$ by forming the Kronecker product $j$ times  $\bbfx_k \otimes \cdots \otimes \bbfx_k$ and retaining only the factors whose components are unique up to permutation~\cite{paper:PeherstorferW2016}.
    
\subsection{Recovering projection-based reduced models from data with operator inference and re-projection} \label{subsec:Reproj}

 The re-projection data-sampling scheme introduced in \cite{paper:Peherstorfer2019} judiciously excites the high-dimensional system \eqref{eq:HighDimSys} to generate a re-projected trajectory $\bbfY = [\bbfy_1, \dots, \bbfy_K] \in \mathbb{R}^{\nr \times K}$. The following description follows the version of re-projection described in \cite{QIAN2020132401}. Let $\bar{\bfX} = [\bar{\bfx}_1, \dots, \bar{\bfx}_K]$ be a matrix where each column contains an $\nh$-dimensional vector. For example, in \cite{paper:Peherstorfer2019,QIAN2020132401}, it is proposed to generate $\bar{\bfX}$ by first querying the high-dimensional system \eqref{eq:HighDimSys} at an initial condition and inputs to sample the trajectory $\bfX = [\bfx_1,\dots,\bfx_K]$ and then setting $\bar{\bfX} = \bfX$. Let now $\bfU = [\bfu_1,\dots,\bfu_K]$ be an input trajectory and let $\bbfX = [\bbfx_1, \dots, \bbfx_K]$ be the projected trajectory obtained as $\bbfX = \bfV^T\bar{\bfX}$ from $\bar{\bfX}$. Re-projection then computes $\bfY = [\bfy_1, \dots, \bfy_K]$ by querying the high-dimensional system
    \[
    \bfy_k = \bff(\bfV\bbfx_k, \bfu_k)\,,\qquad k = 1, \dots, K\,,
    \]
    to obtain $\bbfY = \bfV^T\bfY$. The re-projection scheme can be applied to black-box dynamical systems that can be queried at arbitrary initial conditions in $\R^N$ and inputs in $\R^p$. 
    
As shown in \cite{paper:Peherstorfer2019,QianThesis}, if the high-dimensional system \eqref{eq:HighDimSys} from which data are sampled has polynomial form, i.e., 
    \begin{align} \label{eq:PolySys}
        \bff(\bfx, \bfu) = \sum_{j = 1}^{\ell}\bfA_j\bfx^j + \bfB\bfu\,,
    \end{align}
    and if there are sufficiently many data samples, then the solution of the least-squares problem
    \begin{equation}
    \min_{\hbfA_1, \dots, \hbfA_{\ell}, \hbfB} \bar{J}(\hbfA_1, \dots, \hbfA_{\ell}, \hbfB; \bbfX, \bbfY, \bfU)
    \label{eq:LSProb}
    \end{equation}
    with objective
    \begin{equation}
    \bar{J}(\hbfA_1, \dots, \hbfA_{\ell}, \hbfB; \bbfX, \bbfY, \bfU) = \sum_{k = 1}^K\left\|\sum\nolimits_{i = 1}^{\ell} \hbfA_i\bbfx_k^j + \hbfB\bfu_k - \bbfy_k\right\|_2^2
    \label{eq:LSOpInfObj}
    \end{equation}
    is unique and coincides with the projected operators 
     \begin{equation}
     \begin{aligned}
        \tbfB & = \bfV^T \bfB\,, \\
        \tbfA_j & = \bfV^T \bfA_j \bfS_j (\bfV \otimes \cdots \otimes \bfV) \bfR_j, \qquad j = 1,\dots,\ell,
    \end{aligned}
    \label{eq:intrusiveOperators}
    \end{equation}
    where the matrices $\bfS_j \in \R^{N_j \times N^j}$ and $\bfR_j \in \R^{n^j \times n_j}$ satisfy \begin{align*}
        \bfz^j = \bfS_j(\bfz \otimes \cdots \otimes \bfz), \qquad \tbfz \otimes \cdots \otimes \tbfz = \bfR_j \tbfz^j
    \end{align*}
    for all $\bfz \in \R^{N}, \tbfz \in \R^{n}$ and $j = 1, \dots, \ell$ and the Kronecker is applied $j$ times. Notice that the re-projected trajectory $\bbfY$ enters in the objective in the least-squares problem \eqref{eq:LSProb}, whereas only the projected trajectory $\bbfX$ enters in problem \eqref{eq:LSOpInf}.
    
In traditional model reduction, see, e.g., \cite{book:Antoulas2005,RozzaPateraSurvey,paper:BennerGW2015}, the projected operators $\tbfA_1, \dots, \tbfA_{\ell}$, $\tbfB$ are computed directly by computing the matrix-matrix products in the projection step \eqref{eq:intrusiveOperators}. Thus, such traditional model reduction methods are intrusive in the sense that they require the high-dimensional operators $\bfA_1, \dots, \bfA_{\ell}, \bfB$ either in assembled form or implicitly via matrix-vector products. 

\section{Learning low-dimensional models from noisy data} \label{sec:LearningLowDim}

This work investigates operator inference and re-projection for learning low-dimensional models of noisy dynamical systems,
    \begin{equation}
    \bfx_{k + 1} = \bff(\bfx_k, \bfu_k) + \bfxi_k\,,\qquad k = 0, \dots, K-1\,,
    \label{eq:NoisyHighDimSys}
    \end{equation}
    where $\bfxi_0, \dots, \bfxi_{K - 1}$ represent noise. The random vectors $\bfxi_0,\dots,\bfxi_{K-1}$ are independent and each noise vector $\bfxi_k \sim N(\boldsymbol{0},\sigma^2\bfI)$, for $k = 0, \dots, K - 1$, is an $\nh$-dimensional Gaussian random vector with a diagonal covariance matrix and standard deviation $\sigma > 0$ in all directions. In the following, for ease of exposition, the noisy high-dimensional system \eqref{eq:NoisyHighDimSys} can be queried at any initial condition in $\R^N$ with any input in $\R^p$; however, the space of initial conditions and inputs can be restricted to subsets of $\R^N$ and $\R^p$ if necessary.
    
    Section~\ref{subsec:opinfNoisy} applies operator inference and re-projection to learn low-dimensional models from noisy trajectories and derives conditions under which the inferred operators are unbiased estimators of the projection-based reduced operators. The MSE of the learned low-dimensional operators is quantified in terms of the noise-to-signal ratio. In Section~\ref{subsec:ExpError}, we derive bounds on the bias and the MSE of the predicted states of the system described by the learned low-dimensional model with the learned operators for linear and polynomially nonlinear dynamics, respectively. The bounds scale with respect to the noise-to-signal ratio.

\subsection{Operator inference with re-projection with noisy state trajectories} \label{subsec:opinfNoisy}

\begin{figure}
\centering
\fbox{
\begin{tikzpicture}
 \matrix[matrix of math nodes,column sep=2.5em,row
  sep=4em,cells={nodes={circle,draw,minimum width=3em,inner sep=0pt}},
  column 1/.style={nodes={rectangle,draw=none}},
  column 5/.style={nodes={rectangle,draw=none}},
  ampersand replacement=\&] (m) {\begin{tabular}{l}
    projected \\
    trajectory
\end{tabular}
  \&
  \bbfx_1 \& \bbfx_2\& \bbfx_3\& \cdots \& \bbfx_K\\
 \begin{tabular}{l}
    noisy \\
    re-projected \\
    trajectory
\end{tabular} \& 
  \bbfz_1 \& \bbfz_2 \& \bbfz_3 \& \cdots \& \bbfz_K\\
 };
 \foreach \X in {2,3,4,5}
 {\draw[-latex] (m-1-\X) -- (m-1-\the\numexpr\X+1);
 \ifnum\X=5
 \draw[-latex] (m-1-6) -- (m-2-6) node[midway,left,align=left]{query\\  \eqref{eq:NoisyHighDimSys}};
 \else
 \draw[-latex] (m-1-\X) -- (m-2-\X) node[midway,left]{\begin{tabular}{l}
    query \\
    \eqref{eq:NoisyHighDimSys}
\end{tabular}};
 \fi}
\end{tikzpicture}
}
\caption{Applying re-projection to query the noisy high-dimensional system \eqref{eq:NoisyHighDimSys} leads to unbiased estimators of the projected operators \eqref{eq:intrusiveOperators}, which are the very same operators that are obtained with classical, intrusive model reduction.}
\label{fig:NoisyObs}
\end{figure}
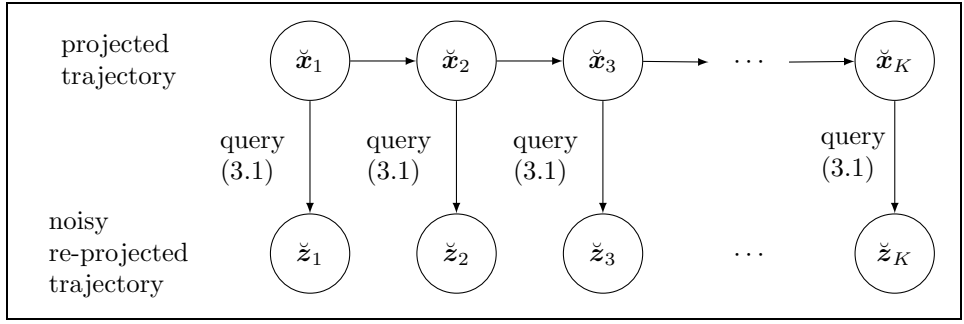

Let $\bar{\bfX}$ be a matrix with $\nh$-dimensional columns (cf.~Section~\ref{subsec:Reproj}) and let $\bfU = [\bfu_1, \dots, \bfu_K]$ be an input trajectory. Note that $\bar{\bfX}$ can also be a realization, a deterministic trajectory, generated by simulating \eqref{eq:NoisyHighDimSys}. We then apply re-projection to obtain $\bfZ = [\bfz_1, \dots, \bfz_K]$ by querying the noisy high-dimensional system \eqref{eq:NoisyHighDimSys} as  
\begin{equation}
\bfz_k = \bff(\bfV\bbfx_k, \bfu_k) + \bfxi_k\,,\qquad k = 1, \dots, K
\label{eq:NoisyReProj}
\end{equation}
where the columns of $\bfXi = [\bfxi_1, \dots, \bfxi_K]$ are independent random noise vectors defined above and $\bbfX = [\bbfx_1, \dots, \bbfx_K] = \bfV^T\bar{\bfX}$ is the projection of $\bar{\bfX}$. The noisy re-projected state trajectory is $\bbfZ = \bfV^T\bfZ = [\bbfz_1,\dots,\bbfz_K]$. 

The corresponding operator-inference problem is
\begin{equation}
\min_{\hbfA_1, \dots, \hbfA_{\ell}, \hbfB} J(\hbfA_1, \dots, \hbfA_{\ell}, \hbfB; \bbfX, \bbfZ, \bfU)
\label{eq:LSQNoisy}
\end{equation}
where the noisy re-projected trajectory $\bbfZ$ enters in the objective \eqref{eq:LSOpInfObj}. To analyze the solution of \eqref{eq:LSQNoisy}, it is beneficial to write \eqref{eq:LSQNoisy} in matrix form as
\begin{equation}
\min_{\bfO} \| \bfD\bfO - \bbfZ^T\|_F^2\,,
\label{eq:LeastSquaresNoisy}
\end{equation}
where the data matrix is $\bfD = [\bbfX^T, (\bbfX^2)^T, \dots, (\bbfX^{\ell})^T, \bfU^T]$ with $\bbfX^i = [\bbfx_1^i, \dots, \bbfx_K^i]$ for $i = 2, \dots, \ell$. The operators $\hbfA_1, \dots, \hbfA_{\ell}, \hbfB$ that we seek are submatrices of $\bfO = [\hbfA_1, \dots, \hbfA_{\ell}, \hbfB]^T$. The size of the data matrix $\bfD$ is $K \times M$ with $M = p + \sum_{j = 1}^{\ell}n_j$. Correspondingly, the size of $\bfO$ is $M \times \nr$.

We now characterize the solution of \eqref{eq:LeastSquaresNoisy} with respect to the noise that is added during the re-projection step. Recall that the procedure to generate $\bbfZ$ is to query the noisy high-dimensional system \eqref{eq:NoisyHighDimSys} at the columns of the projected trajectory $\bbfX$, which is deterministic because $\bar{\bfX}$ is deterministic. Thus, the data matrix $\bfD$ in the regression problem \eqref{eq:LeastSquaresNoisy} is deterministic  while the noisy re-projected trajectory $\bbfZ$ is a random matrix.

Following standard results of least-squares regression, the following proposition summarizes that operator inference together with re-projection leads to an unbiased estimator of the projected operators \eqref{eq:intrusiveOperators} whose variance grows linearly with the variance of the noise.  Additionally, the upper bound of the MSE of the estimator  is controlled by the noise-to-signal ratio $\sigma/\singValMin(\bfD)$, where $\singValMin(\cdot)$ is the minimum singular value of the matrix argument.

    \begin{proposition} \label{prop:LSOperators}
    If $K \geq M$ and $\bfD$ is full rank, then the solution of problem \eqref{eq:LeastSquaresNoisy} is
    \[
    \hbfO = [\hbfA_1, \dots, \hbfA_{\ell}, \hbfB]^T = \tbfO + (\bfD^T\bfD)^{-1}\bfD^T(\bfV^T\bfXi)^T\,,
    \]
    where $\tbfO = [\tbfA_1,\dots,\tbfA_{\ell},\tbfB]^T \in \R^{M \times n}$. In particular, the inferred operators are unbiased estimators of the projection-based reduced operators in the sense that $\Exp[\hbfA_j] = \tbfA_j$ for $j=1,\dots,\ell$ and $\Exp[\hbfB] = \tbfB$. The columns $\hbfo_1, \dots, \hbfo_{\nr}$ of $\hbfO$ are independent random vectors that are distributed as $\hbfo_i \sim N(\tbfo_i, \sigma^2 (\bfD^T \bfD)^{-1})$ for $i = 1, \dots, \nr$ where $\tbfo_1, \dots, \tbfo_{\nr} \in \mathbb{R}^{M}$ are the columns of $\tbfO$. In addition, the MSE is bounded as
    \begin{equation}
    \Exp[\|\hbfO - \tbfO\|_F^2] \le nM \left(\frac{\sigma}{\singValMin(\bfD)} \right)^2.
    \label{eq:MSEBound}
    \end{equation}
    \end{proposition}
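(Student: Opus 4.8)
The plan is to recognize problem \eqref{eq:LeastSquaresNoisy} as a standard multivariate linear least-squares regression with a deterministic design matrix $\bfD$ and a random response $\bbfZ^T = \bfV^T\bfZ$, then extract the distribution of the estimator column-by-column. First I would write $\bfz_k = \bff(\bfV\bbfx_k,\bfu_k) + \bfxi_k$ and project: since the high-dimensional system has polynomial form \eqref{eq:PolySys}, the noise-free part $\bfV^T\bff(\bfV\bbfx_k,\bfu_k)$ is exactly $\tbfO^T$ times the $k$-th row of $\bfD$, by the same re-projection algebra \eqref{eq:intrusiveOperators} that underlies problem \eqref{eq:LSProb}. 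Stacking over $k$ gives $\bbfZ^T = \bfD\tbfO + (\bfV^T\bfXi)^T$. Substituting into the normal-equation solution $\hbfO = (\bfD^T\bfD)^{-1}\bfD^T\bbfZ^T$, which is well-defined and unique because $K \ge M$ and $\bfD$ is full rank, immediately yields the claimed closed form $\hbfO = \tbfO + (\bfD^T\bfD)^{-1}\bfD^T(\bfV^T\bfXi)^T$.

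From here, unbiasedness follows from $\Exp[\bfXi] = \boldsymbol 0$: the perturbation term has zero mean because $\bfD$ and $\bfV$ are deterministic, so $\Exp[\hbfO] = \tbfO$, which gives $\Exp[\hbfA_j] = \tbfA_j$ and $\Exp[\hbfB] = \tbfB$ after reading off the appropriate submatrices. To get the column distributions, I would fix $i$ and note that the $i$-th column $\hbfo_i$ equals $\tbfo_i + (\bfD^T\bfD)^{-1}\bfD^T \bfw_i$, where $\bfw_i$ is the $i$-th column of $(\bfV^T\bfXi)^T$, i.e., the vector whose $k$-th entry is $\bfv_i^T\bfxi_k$. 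Because the $\bfxi_k$ are independent $N(\boldsymbol 0,\sigma^2\bfI)$ and $\bfV$ has orthonormal columns, each $\bfw_i$ is Gaussian with $\Exp[\bfw_i\bfw_i^T] = \sigma^2\bfI_K$, so $\hbfo_i$ is an affine image of a Gaussian and hence $\hbfo_i \sim N\big(\tbfo_i, \sigma^2 (\bfD^T\bfD)^{-1}\bfD^T\bfD(\bfD^T\bfD)^{-1}\big) = N(\tbfo_i,\sigma^2(\bfD^T\bfD)^{-1})$, using $\bfv_i^T\bfv_i = 1$ to collapse the covariance. Independence across columns comes from $\Exp[\bfw_i\bfw_{i'}^T] = (\bfv_i^T\bfv_{i'})\sigma^2\bfI_K = \boldsymbol 0$ for $i \ne i'$ by orthonormality, together with joint Gaussianity.

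Finally, for the MSE bound I would compute $\Exp[\|\hbfO-\tbfO\|_F^2] = \sum_{i=1}^{n}\Exp[\|\hbfo_i-\tbfo_i\|_2^2] = \sum_{i=1}^{n}\sigma^2\operatorname{tr}\big((\bfD^T\bfD)^{-1}\big)$, since the expected squared norm of a mean-zero Gaussian is the trace of its covariance. Bounding the trace by $M$ times the largest eigenvalue of $(\bfD^T\bfD)^{-1}$, which is $1/\singValMin(\bfD)^2$, gives $\Exp[\|\hbfO-\tbfO\|_F^2] \le n M \sigma^2/\singValMin(\bfD)^2$, which is exactly \eqref{eq:MSEBound}. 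The main obstacle, and the only genuinely substantive step, is the first one: verifying cleanly that the re-projection construction makes the deterministic part of $\bbfZ^T$ equal $\bfD\tbfO$ with no residual, so that the regression model is \emph{exact} rather than merely approximate. This relies crucially on the polynomial structure \eqref{eq:PolySys} and on re-projection evaluating $\bff$ at $\bfV\bbfx_k$ rather than at a noisy state; everything after that is routine properties of Gaussian vectors and orthonormal bases.
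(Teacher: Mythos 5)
Your proposal is correct and follows essentially the same route as the paper: normal equations for the full-rank deterministic design matrix, the decomposition $\bbfZ^T = \bfD\tbfO + (\bfV^T\bfXi)^T$ (which the paper inherits from the re-projection identity of Section~\ref{subsec:Reproj} and you verify explicitly), columnwise Gaussianity and independence via orthonormality of $\bfV$, and the trace bound $n\sigma^2\operatorname{tr}((\bfD^T\bfD)^{-1}) \le nM\sigma^2/\singValMin(\bfD)^2$. The only difference is one of emphasis — you spell out the step the paper leaves to the cited references — so there is nothing further to add.
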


    \begin{proof} The following are standard arguments from least-squares regression:     because the data matrix $\bfD$ is full rank and $K \geq M$, the solution of \eqref{eq:LeastSquaresNoisy} is given by the normal equations
     \begin{align*}
        \hbfO = (\bfD^T \bfD)^{-1} \bfD^T \bbfZ^T = \tbfO + (\bfD^T \bfD)^{-1} \bfD^T \breve{\bfXi}^T\,,
    \end{align*}
    where $\breve{\bfXi} = \bfV^T\bfXi$. Since the random vectors $\bfxi_1, \dots, \bfxi_K$ have zero mean, the expectation of $\hbfO$ is $\mathbb{E}_{\xi}[\hbfO] = \tbfO$. Additionally, since $\bfV^T\bfV = \bfI$ is the identity matrix, the entries of $\breve{\bfXi}$ are iid $N(0,\sigma^2)$ random variables which means that the columns of $\breve{\bfXi}^T$ are independent Gaussian random vectors of dimension $K$ with an identity covariance matrix scaled by $\sigma^2$. Thus, the columns of $\hbfO$ are Gaussian with covariance $\sigma^2(\bfD^T\bfD)^{-1}$, which leads to the MSE  
    \[
    \mathbb{E}[\|\hbfO - \tbfO\|_F^2] = \sum_{i = 1}^{\nr} \sum_{j=1}^M \operatorname{Var}[\bfe_j^T \hbfo_i] = \nr \operatorname{tr}( (\bfD^T\bfD)^{-1})\sigma^2 \leq  M\nr\left(\frac{\sigma}{\singValMin(\bfD)}\right)^2,
    \]
    where $\bfe_1,\dots,\bfe_M$ are the canonical basis vectors of $\R^M$. The first equality follows from the unbiasedness of $\hbfO$. 
    \end{proof}

    The independence of the columns of the random matrix $\hbfO$ leads to the independence of the rows of each of the random matrices $\hbfB$ and $\hbfA_j$ for $j=1,\dots,\ell$. However, since the covariance matrix $\sigma^2(\bfD^T \bfD)^{-1}$ of $\hbfo_i^T$ is not necessarily block diagonal, the random matrices $\hbfB$ and $\hbfA_j$ for $j=1,\dots,\ell$ are not necessarily independent.

    In \cite[Chapter~4.3]{QianThesis}, a sensitivity analysis of lift \& learn is presented that just as well applies to operator inference. The analysis leads to bounds with similar right-hand sides as our bound \eqref{eq:MSEBound} on the MSE; however, the analysis in \cite{QianThesis} is restricted to deterministic perturbations and no bounds of the error in the state predictions (as in Section~\ref{subsec:ExpError}) are presented.  
    
\subsection{Error of predicted states with respect to noise-to-signal ratio} \label{subsec:ExpError}

We now consider the random states $\hbfx_1, \dots, \hbfx_K$ predicted by the system described by the learned model
    \begin{align} \label{eq:ROMestimator}
        \hbfx_{k+1} = \sum_{j=1}^{\ell} \hbfA_j \hbfx_k^j + \hbfB \bfu_k, \quad k = 0,\dots,K-1
    \end{align}
with a deterministic initial state $\hbfx_0 \in \mathbb{R}^{\nr}$, which potentially is different from the training initial conditions used to generate the re-projected trajectory. Since the operators $\hbfB, \hbfA_j, j = 1,\dots,\ell$ are random matrices, $\hbfx_k$ is a random vector for $k\ge 1$. In the following, we bound the bias which is the expectation of the difference between the states $\hbfx_1, \dots, \hbfx_K$ and the deterministic states $\tbfx_1, \dots, \tbfx_K$ of the reduced model from intrusive model reduction
    \begin{align} \label{eq:ROM}
        \tbfx_{k+1} & = \sum_{j=1}^{\ell} \tbfA_j \tbfx_k^j + \tbfB \bfu_k, \quad k=0,\dots,K-1,
    \end{align}
    with the operators $\tbfB,\tbfA_j,j=1,\dots,\ell$ defined in \eqref{eq:intrusiveOperators}. Bounds for the MSE between the random states $\hbfx_1, \dots, \hbfx_K$ and the deterministic states $\tbfx_1, \dots, \tbfx_K$ are also deduced.
    
    \subsubsection{Technical preliminaries}
    
It will be useful to account for the difference between the inferred operators and the operators from intrusive model reduction. Let $\bfE_{\hbfA_j}, \bfE_{\hbfB}$ be $n \times n_j$ and $n \times p$ random matrices, respectively, such that 
    \begin{align*}
        \hbfA_j = \tbfA_j + \bfE_{\hbfA_j}, j=1,\dots,\ell, \quad \text{and} \quad \hbfB = \tbfB + \bfE_{\hbfB}.
    \end{align*}
    The distribution of the rows of $\bfE_{\hbfA_j},\bfE_{\hbfB}$ can be described as follows. Define the selection matrices $\bfP_{\bfA_j} \in \R^{n_j \times M}$ for $j=1,\dots,\ell$ and $\bfP_{\bfB} \in \R^{p \times M}$ which satisfy 
    \begin{align*}
        \bfP_{\bfA_j} \hbfO = \hbfA_j^T \quad \text{and} \quad \bfP_{\bfB} \hbfO= \hbfB^T.
    \end{align*}
    For $i=1,\dots,n$, the $i$-th row of $\bfE_{\hbfA_j}$ and $\bfE_{\hbfB}$ are zero-mean multivariate Gaussian random vectors with covariance matrices $ \sigma^2 \Sigma_{\hbfA_j}$ and $\sigma^2 \Sigma_{\hbfB}$, respectively, 
    where $\Sigma_{\hbfA_j} = \bfP_{\bfA_j} (\bfD^T \bfD)^{-1} \bfP_{\bfA_j}^T$ and $\Sigma_{\hbfB} = \bfP_{\bfB} (\bfD^T \bfD)^{-1} \bfP_{\bfB}^T$. Observe that
    \begin{align} \label{eq:NormSigmaB}
         \|\Sigma_{\hbfB}^{1/2}\|_2 = \|\bfP_{\bfB} (\bfD^T \bfD)^{-1} \bfP^T_{\bfB}\|^{1/2}_2 \le \|(\bfD^T \bfD)^{-1}\|^{1/2}_2 = \sqrt{\singValMax((\bfD ^T\bfD)^{-1})} = \frac{1}{\singValMin(\bfD)}
    \end{align}
    where $\singValMax(\cdot)$ is the largest singular value of the matrix argument. Analogously, we have
    \begin{align} \label{eq:NormSigmaA}
        \|\Sigma_{\hbfA_j}^{1/2}\|_2 \le \frac{1}{\singValMin(\bfD)}, \quad j = 1,\dots,\ell.
    \end{align}

The following is a technical lemma derived from  \cite[Theorem 5.32 and Proposition 5.34]{paper:Vershynin2012} that provides an upper bound for the expected value of the powers of the norm of a Gaussian random matrix, which will be utilized in the calculations below; cf.~Appendix~\ref{appendix:GaussNormPowerBoundProof} for the proof.
    
     \begin{lemma}[see, e.g., Theorem 5.32 and Proposition 5.34 in \cite{paper:Vershynin2012}] \label{lemma:GaussNormPowerBound}
        Let $\bfG$ be an $n \times p$ random matrix whose entries are independent standard normal random variables. For $ l \in \mathbb{N}$,
        \begin{align}
            \Exp[\|\bfG\|_2^l] \le (\sqrt{n} + \sqrt{p} + 2^{1/l}\sqrt{l})^l.
        \end{align}
\end{lemma}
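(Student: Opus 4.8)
The plan is to regard $\|\bfG\|_2$ as a deterministic ``location'' $\mu := \sqrt{n} + \sqrt{p}$ plus a nonnegative sub-Gaussian fluctuation, and to control the $l$-th moment of that fluctuation. First I would invoke Theorem 5.32 of \cite{paper:Vershynin2012}, which gives $\Exp[\|\bfG\|_2] \le \mu$, together with the Gaussian concentration inequality of Proposition 5.34 (the spectral norm is a $1$-Lipschitz function of the $np$ independent standard normal entries with respect to the Frobenius norm). Combining the two yields a tail bound of the form $\Pr\{\|\bfG\|_2 > \mu + t\} \le 2\exp(-t^2/2)$ for every $t \ge 0$. Setting $Y := (\|\bfG\|_2 - \mu)_+$, one has $\|\bfG\|_2 \le \mu + Y$ pointwise, so by Minkowski's inequality in $L^l$ (valid since $l \ge 1$),
\[
\left(\Exp[\|\bfG\|_2^l]\right)^{1/l} \le \mu + \left(\Exp[Y^l]\right)^{1/l}.
\]
It therefore suffices to prove $\Exp[Y^l] \le 2\, l^{l/2}$, because then $\left(\Exp[Y^l]\right)^{1/l} \le 2^{1/l}\sqrt{l}$ and raising the displayed inequality to the $l$-th power gives the claim.

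Next I would estimate the moment of $Y$ using the layer-cake representation together with the tail bound,
\[
\Exp[Y^l] = l\int_0^\infty t^{l-1}\Pr\{Y > t\}\,dt \le 2l\int_0^\infty t^{l-1}e^{-t^2/2}\,dt.
\]
The substitution $u = t^2/2$ converts the remaining integral into a Gamma function and yields $\Exp[Y^l] \le l\,2^{l/2}\,\Gamma(l/2)$. The concluding step is the elementary inequality $\Gamma(l/2) \le (l/2)^{l/2-1}$, valid for $l \ge 2$; inserting it and simplifying the powers of $2$ and $l$ collapses the right-hand side to exactly $2\, l^{l/2}$, as required.

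The case $l = 1$ is not captured by the Gamma estimate (the inequality $\Gamma(m)\le m^{m-1}$ fails at $m = 1/2$), but it is immediate from Theorem 5.32 alone: $\Exp[\|\bfG\|_2] \le \mu \le \mu + 2 = \sqrt{n} + \sqrt{p} + 2^{1/1}\sqrt{1}$, so I would simply dispatch $l=1$ separately. The only genuine obstacle is the Gamma inequality $\Gamma(l/2)\le(l/2)^{l/2-1}$ for $l\ge2$: it must be tight enough that the factor $2$ coming from the two-sided concentration bound survives precisely as the $2^{1/l}$ appearing in the statement. Everything else is a routine assembly of the two cited results, Minkowski's inequality, and the layer-cake formula for the moment.
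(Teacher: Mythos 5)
Your proposal is correct and follows essentially the same route as the paper's proof in Appendix~B: Minkowski's inequality to split off the mean, the layer-cake formula with the two-sided Gaussian concentration bound, the substitution $u=t^2/2$ to reach a Gamma function, and an elementary Gamma estimate yielding $2\,l^{l/2}$. Your separate treatment of $l=1$ is in fact slightly more careful than the paper's, which invokes $\Gamma(x+1)\le x^x$ ``for $x\ge 0$'' even though this fails at $x=1/2$ (i.e., $\Gamma(3/2)=\sqrt{\pi}/2>\sqrt{1/2}$), so the $l=1$ case there really does need the direct argument you give.
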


\subsubsection{Error in states for linear systems} \label{subsubsec:auto}

 In this section, we consider only systems with $\ell=1$ and therefore drop the subscript in $\bfA_1, \tbfA_1,\hbfA_1$. The operator-inference model is $\hbfx_{k+1} = \hbfA \hbfx_k + \hbfB \bfu_k$ and the model from intrusive model reduction is 
$\tbfx_{k+1} = \tbfA \tbfx_k + \tbfB \bfu_k$.

\begin{proposition} \label{proposition:linearExpErr}
     Let $\hbfx_0 = \tbfx_0$. Suppose that the conditions of Proposition~\ref{prop:LSOperators} hold. If the high-dimensional system \eqref{eq:NoisyHighDimSys} from which data are sampled and the learned low-dimensional model have linear state dependence, for $k \in \mathbb{N}$ with $k \ge 1$, the bias of the state predictions is bounded as
	\begin{align} \label{eq:linearExpErr}
		\|\Exp[\hbfx_k - \tbfx_k]\|_2 \le \sum_{l=2}^k C_l \left(\frac{\sigma}{\singValMin(\bfD)}\right)^l\,,
	\end{align}	     
     where $0 < C_2, \dots, C_k$ are constants that are not functions of $\sigma$ and $\singValMin(\bfD)$. The constants are
     \begin{multline}
     	C_l   = (2\sqrt{n} + 2^{1/l} \sqrt{l})^l \left[ \binom{k}{l} \|\tbfA\|_2^{k-l}\|\tbfx_0\|_2 + \sum_{i=l}^{k-1} \binom{i}{l}  \|\tbfA\|_2^{i-l}\|\tbfB \bfu_{k-1-i}\|_2 \right] \\
     	 + \sum_{i=l-1}^{k-1} \binom{i}{l-1}  \|\tbfA\|^{i-l+1}_2 \|\bfu_{k-1-i}\|_2 \left(2\sqrt{n} + 2^{\frac{1}{2(i-l+1)}} \sqrt{2(i-l+1)}\right)^{i-l+1}(\sqrt{n} + \sqrt{p} + 2),
     \end{multline}
    for $l = 2, \dots, k$.
     \end{proposition}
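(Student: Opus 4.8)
The plan is to solve both recurrences in closed form. Iterating the learned model \eqref{eq:ROMestimator} and the intrusive reduced model \eqref{eq:ROM} with $\ell=1$ and using $\hbfx_0=\tbfx_0$ gives the variation-of-constants representations $\hbfx_k = \hbfA^k\tbfx_0 + \sum_{i=0}^{k-1}\hbfA^i\hbfB\bfu_{k-1-i}$ and $\tbfx_k = \tbfA^k\tbfx_0 + \sum_{i=0}^{k-1}\tbfA^i\tbfB\bfu_{k-1-i}$, so that $\hbfx_k-\tbfx_k = (\hbfA^k-\tbfA^k)\tbfx_0 + \sum_{i=0}^{k-1}(\hbfA^i\hbfB - \tbfA^i\tbfB)\bfu_{k-1-i}$. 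I would then substitute $\hbfA=\tbfA+\bfE_{\hbfA}$ and $\hbfB=\tbfB+\bfE_{\hbfB}$ and expand each power $\hbfA^m=(\tbfA+\bfE_{\hbfA})^m$ as a noncommutative binomial sum, i.e.\ a sum over the $\binom{m}{r}$ ordered words containing $r$ factors $\bfE_{\hbfA}$ and $m-r$ factors $\tbfA$, and regroup every resulting term according to its total degree $l$ in the noise matrices $\bfE_{\hbfA},\bfE_{\hbfB}$.

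The decisive structural point is that the degree-$0$ contribution of $\hbfx_k-\tbfx_k$ is exactly $\tbfx_k-\tbfx_k=\boldsymbol 0$, while every degree-$1$ term is linear in a single factor $\bfE_{\hbfA}$ or $\bfE_{\hbfB}$ multiplied by deterministic quantities ($\tbfA,\tbfB,\tbfx_0,\bfu_j$); since $\Exp[\bfE_{\hbfA}]=\boldsymbol 0$ and $\Exp[\bfE_{\hbfB}]=\boldsymbol 0$ by Proposition~\ref{prop:LSOperators}, these vanish in expectation. Hence $\Exp[\hbfx_k-\tbfx_k]$ equals the expectation of the degree-$\ge 2$ part only, which is precisely why the bound \eqref{eq:linearExpErr} starts at $l=2$. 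Applying Jensen's inequality $\|\Exp[\,\cdot\,]\|_2\le\Exp[\|\cdot\|_2]$ and the triangle inequality over degrees then reduces the problem to bounding, for each $l\ge 2$, the expected norm of the degree-$l$ part.

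For the degree-$l$ bound I would collect three groups of words: (i) from $(\hbfA^k-\tbfA^k)\tbfx_0$, giving $\binom{k}{l}$ words each bounded by $\|\tbfA\|_2^{k-l}\|\bfE_{\hbfA}\|_2^{l}\|\tbfx_0\|_2$; (ii) from $(\hbfA^i-\tbfA^i)\tbfB\bfu_{k-1-i}$, giving $\binom{i}{l}$ words each bounded by $\|\tbfA\|_2^{i-l}\|\bfE_{\hbfA}\|_2^{l}\|\tbfB\bfu_{k-1-i}\|_2$; and (iii) from $\hbfA^i\bfE_{\hbfB}\bfu_{k-1-i}$, which carries exactly one factor $\bfE_{\hbfB}$ and $l-1$ factors $\bfE_{\hbfA}$, giving $\binom{i}{l-1}$ words each bounded by $\|\tbfA\|_2^{i-l+1}\|\bfE_{\hbfA}\|_2^{l-1}\|\bfE_{\hbfB}\|_2\|\bfu_{k-1-i}\|_2$. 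To control the noise norms I would use the representations $\bfE_{\hbfA}=\sigma\bfG_A\Sigma_{\hbfA}^{1/2}$ and $\bfE_{\hbfB}=\sigma\bfG_B\Sigma_{\hbfB}^{1/2}$ with standard Gaussian $\bfG_A\in\R^{n\times n}$ (here $n_1=n$) and $\bfG_B\in\R^{n\times p}$; submultiplicativity together with \eqref{eq:NormSigmaA} and \eqref{eq:NormSigmaB} yields $\|\bfE_{\hbfA}\|_2\le(\sigma/\singValMin(\bfD))\|\bfG_A\|_2$ and the analogue for $\bfE_{\hbfB}$, after which Lemma~\ref{lemma:GaussNormPowerBound} supplies the factors $(2\sqrt n+2^{1/l}\sqrt l)^l$ for groups (i)--(ii) and $(\sqrt n+\sqrt p+2)$ for the single $\bfE_{\hbfB}$ in group (iii), and produces the overall power $(\sigma/\singValMin(\bfD))^l$.

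The step I expect to be the main obstacle, beyond the noncommutative bookkeeping of verifying the word counts $\binom{k}{l},\binom{i}{l},\binom{i}{l-1}$ and the cancellation of the degree-$1$ terms, is group (iii): because the covariance $\sigma^2(\bfD^T\bfD)^{-1}$ need not be block-diagonal, $\bfE_{\hbfA}$ and $\bfE_{\hbfB}$ are dependent (as noted after Proposition~\ref{prop:LSOperators}), so $\Exp[\|\bfE_{\hbfA}\|_2^{l-1}\|\bfE_{\hbfB}\|_2]$ does not factor. I would resolve this by Cauchy--Schwarz, $\Exp[\|\bfE_{\hbfA}\|_2^{l-1}\|\bfE_{\hbfB}\|_2]\le\sqrt{\Exp[\|\bfE_{\hbfA}\|_2^{2(l-1)}]}\,\sqrt{\Exp[\|\bfE_{\hbfB}\|_2^{2}]}$, and only then apply Lemma~\ref{lemma:GaussNormPowerBound} with the doubled powers $2(l-1)$ and $2$, which is the origin of the doubled power appearing inside the Gaussian-norm factor of the third group of terms in $C_l$. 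Summing the three groups over the admissible ranges of $i$ and factoring out $(\sigma/\singValMin(\bfD))^l$ then assembles the constants $C_l$ and completes the proof.
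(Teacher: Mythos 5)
Your proposal is correct and follows essentially the same route as the paper's proof: the same variation-of-constants representation, the same noncommutative binomial expansion of $\hbfA^m=(\tbfA+\bfE_{\hbfA})^m$ with word counts $\binom{k}{l}$, $\binom{i}{l}$, $\binom{i}{l-1}$, the same observation that the degree-$0$ part cancels and the degree-$1$ part vanishes in expectation, and the same use of $\|\Sigma_{\hbfA}^{1/2}\|_2,\|\Sigma_{\hbfB}^{1/2}\|_2\le 1/\singValMin(\bfD)$ together with Lemma~\ref{lemma:GaussNormPowerBound}. You also correctly identify the one delicate point — the dependence between $\bfE_{\hbfA}$ and $\bfE_{\hbfB}$ in the mixed term — and resolve it exactly as the paper does, via Cauchy--Schwarz with the doubled powers $2(i-l+1)$ and $2$, which is the source of the $(2\sqrt{n}+2^{\frac{1}{2(i-l+1)}}\sqrt{2(i-l+1)})^{i-l+1}(\sqrt{n}+\sqrt{p}+2)$ factor in $C_l$.
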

    
     \begin{proof}
    
 Define the $n \times n$ random matrix $\bfG_{\hbfA}$ as $\bfG_{\hbfA} = \frac{1}{\sigma} \Sigma^{-1/2}_{\hbfA} \bfE_{\hbfA}^T$ and the $p \times n$ random matrix  $\bfG_{\hbfB}$ as $\frac{1}{\sigma} \Sigma_{\hbfB}^{-1/2} \bfE_{\hbfB}^T$. Observe that the entries of $\bfG_{\hbfA}, \bfG_{\hbfB}$ are independent standard random variables.    
      At time step $k$, the solution to the reduced system using the inferred operators is 
     \begin{equation}
     \hbfx_k = \hbfA^k \tbfx_0 + \sum_{i=0}^{k-1} \hbfA^i \hbfB \bfu_{k-1-i}.
     \label{eq:StateAtTimeK}
     \end{equation}

	 We now introduce the following notation: Let $\bfM,\bfN$ be square matrices of the same size. For $m,i \in \mathbb{N}$, denote by $\rho_1(\bfM,\bfN; i,m-i), \dots, \rho_{\binom{m}{i}}(\bfM,\bfN;i,m-i)$ all the $\binom{m}{i}$ possible matrix products with $i$ multiplications of $\bfM$ and $m-i$ multiplications of $\bfN$. For example, if $i=1,m=3$ then $\rho_1(\bfM,\bfN;1,2) = \bfM \bfN^2, \rho_2(\bfM,\bfN;1,2) = \bfN \bfM \bfN$, and $\rho_3(\bfM,\bfN;1,2) = \bfN^2 \bfM$. 
	 
	 We then have with $\hat{\bfA} = \tbfA + \bfE_{\hbfA}$ that 
	 \[
	 \hbfA^k =  \sum_{l=0}^k \sum_{j=1}^{\binom{k}{l}} \rho_j(\tbfA,\bfE_{\hbfA};k-l,l),
	 \]
     which we substitute into \eqref{eq:StateAtTimeK} at time step $k$, to obtain
     \begin{align} \label{eq:xhatTimeStepLin}
         \hbfx_k & =  \sum_{l=0}^k \sum_{j=1}^{\binom{k}{l}} \rho_j(\tbfA,\bfE_{\hbfA};k-l,l) \tbfx_0 + \sum_{i=0}^{k-1} \sum_{l=0}^i \sum_{j=1}^{\binom{i}{l}} \rho_j(\tbfA,\bfE_{\hbfA};i-l,l) \hbfB \bfu_{k-1-i} \notag \\
         & = \sum_{l=0}^k \sum_{j=1}^{\binom{k}{l}} \rho_j(\tbfA,\bfE_{\hbfA};k-l,l) \tbfx_0 + \sum_{l=0}^{k-1} \sum_{i=l}^{k-1} \sum_{j=1}^{\binom{i}{l}} \rho_j(\tbfA,\bfE_{\hbfA};i-l,l) \hbfB \bfu_{k-1-i} \notag \\
         & = \sum_{l=0}^k \sum_{j=1}^{\binom{k}{l}} \rho_j(\tbfA,\bfE_{\hbfA};k-l,l) \tbfx_0 + \sum_{l=0}^{k-1} \sum_{i=l}^{k-1} \sum_{j=1}^{\binom{i}{l}} \rho_j(\tbfA,\bfE_{\hbfA};i-l,l) \tbfB \bfu_{k-1-i} \\
         & \quad \quad
         + \sum_{l=0}^{k-1} \sum_{i=l}^{k-1} \sum_{j=1}^{\binom{i}{l}} \rho_j(\tbfA,\bfE_{\hbfA};i-l,l) \bfE_{\hbfB} \bfu_{k-1-i} \notag
     \end{align}
     where in the second equality, we interchanged the order of the summation for the second term in the sum and in the third equality, we used $\hbfB = \tbfB + \bfE_{\hbfB}$.
      Notice that for the state obtained with intrusive model reduction we have
     \begin{equation}
         \tbfx_k = \sum_{j=1}^{\binom{k}{0}} \rho_j(\tbfA,\bfE_{\hbfA};k,0) \tbfx_0 +  \sum_{i=0}^{k-1} \sum_{j=1}^{\binom{i}{0}} \rho_j(\tbfA,\bfE_{\hbfA};i,0) \tbfB \bfu_{k-1-i}
         \label{eq:xtildetDetTimeStepLin}
     \end{equation}
     which corresponds to the first 2 terms of \eqref{eq:xhatTimeStepLin} but with $l=0$ fixed. Thus, \eqref{eq:xtildetDetTimeStepLin} consists of all terms in \eqref{eq:xhatTimeStepLin} where the random matrices $\bfE_{\hbfA},\bfE_{\hbfB}$ are absent.
      Hence,
     \begin{align*}
         \hbfx_k - \tbfx_k & = \sum_{l=1}^k \sum_{j=1}^{\binom{k}{l}} \rho_j(\tbfA,\bfE_{\hbfA};k-l,l) \tbfx_0 + \sum_{l=1}^{k-1} \sum_{i=l}^{k-1} \sum_{j=1}^{\binom{i}{l}} \rho_j(\tbfA,\bfE_{\hbfA};i-l,l) \tbfB \bfu_{k-1-i} \\
         & \quad \quad
         + \sum_{l=0}^{k-1} \sum_{i=l}^{k-1} \sum_{j=1}^{\binom{i}{l}} \rho_j(\tbfA,\bfE_{\hbfA};i-l,l) \bfE_{\hbfB} \bfu_{k-1-i}. 
     \end{align*}
      Additionally, when $l = 1$, the terms  $\Exp[\rho_j(\tbfA,\bfE_{\hbfA};k-l,l) ]$ and $\Exp[\rho_j(\tbfA,\bfE_{\hbfA};i-l,l) ]$ are zero because $\bfE_{\hbfA}$ has zero mean. Similarly, for $l = 0$, the terms $ \Exp[\rho_j(\tbfA,\bfE_{\hbfA};i,0) \bfE_{\hbfB}] $ are zero.
      This means that 
     \begin{align}\label{eq:LinExpDiscrepThreeTerms}
         \|\Exp[\hbfx_k - \tbfx_k]\|_2 \le \tau_1 + \tau_2 + \tau_3
     \end{align}
     where
     \begin{align*}
         \tau_1 & = \sum_{l=2}^k \sum_{j=1}^{\binom{k}{l}} \| \Exp [\rho_j(\tbfA,\bfE_{\hbfA};k-l,l) \tbfx_0] \|_2, \\
         \tau_2 & = \sum_{l=2}^{k-1} \sum_{i=l}^{k-1} \sum_{j=1}^{\binom{i}{l}} \| \Exp[\rho_j(\tbfA,\bfE_{\hbfA};i-l,l) \tbfB \bfu_{k-1-i}] \|_2, \\
         \tau_3 & = \sum_{l=1}^{k-1} \sum_{i=l}^{k-1} \sum_{j=1}^{\binom{i}{l}} \| \Exp[ \rho_j(\tbfA,\bfE_{\hbfA};i-l,l) \bfE_{\hbfB} \bfu_{k-1-i}] \|_2.
     \end{align*}
      It remains to bound each of $\tau_1,\tau_2,
     \tau_3$.
    Since
    \begin{align*}
    	\|\Exp [\rho_j(\tbfA,\bfE_{\hbfA};k-l,l) \tbfx_0] \|_2 & =\|\Exp[ \rho_j(\tbfA,\sigma \bfG_{\hbfA}^T \Sigma^{1/2}_{\hbfA};k-l,l) ]\tbfx_0 \|_2  \\
    	& \le \sigma^l \|\Sigma_{\hbfA}^{1/2}\|^l_2 \|\tbfA\|_2^{k-l}  \|\tbfx_0\|_2 \Exp [\|\bfG_{\hbfA}\|^l_2] \\
    	& \le \left(\frac{\sigma}{s_{min} (\bfD)} \right)^l \|\tbfA\|_2^{k-l}  \|\tbfx_0\|_2 \Exp [\|\bfG_{\hbfA}\|_2^l],
    \end{align*}
we obtain
     \begin{align*}
         \tau_1  & \le \sum_{l=2}^k \binom{k}{l} \left(\frac{\sigma}{\singValMin (\bfD)} \right)^l \|\tbfA\|_2^{k-l}\|\tbfx_0\|_2 \Exp[\|\bfG_{\hbfA}|\|^l_2] \\
         & \le \sum_{l=2}^k \binom{k}{l} \left(\frac{\sigma}{\singValMin (\bfD)} \right)^l \|\tbfA\|_2^{k-l}\|\tbfx_0\|_2 (2\sqrt{n} + 2^{1/l} \sqrt{l})^l
     \end{align*}
by applying Lemma~\ref{lemma:GaussNormPowerBound}.
      Likewise, 
     \begin{align*}
         \tau_2  & \le \sum_{l=2}^{k-1} \sum_{i=l}^{k-1} \binom{i}{l} \left(\frac{\sigma}{\singValMin (\bfD)} \right)^l \|\tbfA\|_2^{i-l}\|\tbfB \bfu_{k-1-i}\|_2 \Exp[\|\bfG_{\hbfA}|\|^l_2] \\
         & \le  \sum_{l=2}^{k-1} \sum_{i=l}^{k-1} \binom{i}{l} \left(\frac{\sigma}{\singValMin (\bfD)} \right)^l \|\tbfA\|_2^{i-l}\|\tbfB \bfu_{k-1-i}\|_2 (2\sqrt{n} + 2^{1/l} \sqrt{l})^l.
     \end{align*}
      Finally, 
     \begin{align*}
      \| \Exp[ \rho_j(\tbfA,\bfE_{\hbfA};i-l,l) \bfE_{\hbfB} \bfu_{k-1-i}]\|_2 
     & \le \Exp [\| \rho_j(\tbfA,\sigma \bfG_{\hbfA}^T \Sigma_{\hbfA}^{1/2}; i-l,l) \sigma \bfG_{\hbfB}^T \Sigma_{\hbfB}^{1/2} \bfu_{k-1-i} \|_2] \\
     & \le \sigma^{l+1} \|\Sigma_{\hbfA}^{1/2}\|_2^l \|\Sigma_{\hbfB}^{1/2}\|_2 \|\tbfA\|^{i-l}_2 \|\bfu_{k-1-i}\|_2 \Exp[\|\bfG_{\hbfA}\|^{i-l}_2 \|\bfG_{\hbfB}\|_2] \\
     & \le \left( \frac{\sigma}{\singValMin(\bfD)} \right)^{l+1} \|\tbfA\|^{i-l}_2 \|\bfu_{k-1-i}\|_2 \Exp[ \|\bfG_{\hbfA}\|^{i-l}_2 \|\bfG_{\hbfB}\|_2 ]
 \end{align*}
     so that 
     \begin{multline}
         \tau_3  
         \le \sum_{l=2}^{k} \sum_{i=l-1}^{k-1} \binom{i}{l-1} \left( \frac{\sigma}{\singValMin(\bfD)} \right)^{l} \|\tbfA\|^{i-l+1}_2 \|\bfu_{k-1-i}\|_2 \\
         \left(2\sqrt{n} + 2^{\frac{1}{2(i-l+1)}} \sqrt{2(i-l+1)}\right)^{i-l+1}(\sqrt{n} + \sqrt{p} + 2)
     \end{multline}
     because the Cauchy-Schwarz inequality and Lemma~\ref{lemma:GaussNormPowerBound} lead to
     \begin{align*}
      \left \vert \Exp[ \|\bfG_{\hbfA}\|^{i-l+1}_2 \|\bfG_{\hbfB}\|_2 ] \right \vert & \le \sqrt{\Exp[  \|\bfG_{\hbfA}\|^{2(i-l+1)}_2 ] \Exp[  \|\bfG_{\hbfB}\|_2^2 ]} \\
      & \le \left(2\sqrt{n} + 2^{\frac{1}{2(i-l+1)}} \sqrt{2(i-l+1)}\right)^{i-l+1}(\sqrt{n} + \sqrt{p} + 2).
     \end{align*}
      The result follows by combining the upper bounds for $\tau_1,\tau_2,\tau_3$.
    \end{proof} 
    
    \begin{corollary} \label{prop:StateErrExpAuto}
    Let $\hbfx_0 = \tbfx_0$. If $\ell = 1$ and high-dimensional system \eqref{eq:NoisyHighDimSys} from which data are sampled is autonomous, then 
    \begin{align} \label{eq:autoerrorbound}
        \|\Exp[\hbfx_k - \tbfx_k]\|_2 \le \sum_{l=2}^k \binom{k}{l}   \left(\frac{\sigma}{\singValMin(\bfD)}\right)^l (2\sqrt{n} + 2^{1/l} \sqrt{l})^l \|\tbfA\|_2^{k-l} \|\tbfx_0\|_2,
    \end{align}
    for $k \in \mathbb{N}$ with $k \ge 1$. 
    \end{corollary}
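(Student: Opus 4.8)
The plan is to obtain the corollary as a direct specialization of Proposition~\ref{proposition:linearExpErr} to the case in which the dynamics carry no input. Since the system \eqref{eq:NoisyHighDimSys} is autonomous, there are no inputs, so $\bfu_k = \boldsymbol 0$ for all $k = 0, \dots, K-1$; equivalently, every term $\hbfB\bfu_k$ and $\tbfB\bfu_k$ in the recursions \eqref{eq:ROMestimator} and \eqref{eq:ROM} vanishes. All hypotheses of Proposition~\ref{proposition:linearExpErr} remain in force because $\hbfx_0 = \tbfx_0$ and the conditions of Proposition~\ref{prop:LSOperators} are assumed, so I may reuse its proof verbatim up to the point where the input is invoked.

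Concretely, I would return to the decomposition \eqref{eq:LinExpDiscrepThreeTerms}, namely $\|\Exp[\hbfx_k - \tbfx_k]\|_2 \le \tau_1 + \tau_2 + \tau_3$, and inspect what setting $\bfu_k = \boldsymbol 0$ does to each term. The summand of $\tau_2$ carries the factor $\|\tbfB \bfu_{k-1-i}\|_2$ and the summand of $\tau_3$ carries the factor $\|\bfu_{k-1-i}\|_2$; both factors are zero once $\bfu_{k-1-i} = \boldsymbol 0$. Hence $\tau_2 = \tau_3 = 0$, and the bias collapses to $\|\Exp[\hbfx_k - \tbfx_k]\|_2 \le \tau_1$.

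It then remains only to quote the bound on $\tau_1$ already established in the proof of Proposition~\ref{proposition:linearExpErr}, namely
\[
\tau_1 \le \sum_{l=2}^k \binom{k}{l} \left(\frac{\sigma}{\singValMin(\bfD)}\right)^l \|\tbfA\|_2^{k-l}\|\tbfx_0\|_2 \left(2\sqrt{n} + 2^{1/l}\sqrt{l}\right)^l,
\]
which was obtained by writing $\bfE_{\hbfA} = \sigma \bfG_{\hbfA}^T \Sigma_{\hbfA}^{1/2}$, using the zero-mean property of $\bfE_{\hbfA}$ to discard the $l=0,1$ contributions, bounding $\|\Sigma_{\hbfA}^{1/2}\|_2 \le 1/\singValMin(\bfD)$ via \eqref{eq:NormSigmaA}, and applying Lemma~\ref{lemma:GaussNormPowerBound} to control $\Exp[\|\bfG_{\hbfA}\|_2^l]$. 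This right-hand side is exactly \eqref{eq:autoerrorbound}, completing the proof.

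Since the corollary is a pure specialization, I do not anticipate a genuine obstacle: the single point to verify is that the autonomous assumption annihilates precisely the two input-dependent terms $\tau_2$ and $\tau_3$ while leaving $\tau_1$ untouched, which it does because those are the only terms carrying factors of $\bfu_{k-1-i}$.
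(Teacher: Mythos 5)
Your proposal is correct and matches the paper's own argument: the paper likewise specializes the proof of Proposition~\ref{proposition:linearExpErr}, observing that $\hbfB$, $\tbfB$, and hence $\bfE_{\hbfB}$ are zero matrices in the autonomous case, which annihilates $\tau_2$ and $\tau_3$ and leaves exactly the bound on $\tau_1$. Whether one phrases the vanishing via zero operators or via zero inputs $\bfu_k$ is immaterial; both kill the same two terms.
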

    
    \begin{proof}
    	The proof follows that of Proposition~\ref{proposition:linearExpErr} noting that $\hbfB,\tbfB$ and hence $\bfE_{\hbfB}$ are zero matrices.
    \end{proof}
   
Several remarks are in order. As the time step $k$ increases, i.e., as we move forward in time, the bound \eqref{eq:linearExpErr} also increases, which is expected because the bias  of the state estimators of previous time steps is accumulated. Notice that the bound also depends on $n$, the dimension of the reduced space, and on $p$, the dimension of the input. The bound \eqref{eq:linearExpErr} further suggests that if the noise-to-signal ratio $\sigma/\singValMin(\bfD) < 1$, the term associated with $(\sigma/\singValMin(\bfD))^2$ at time step $k = 2$ dominates the upper bound as $\sigma/\singValMin(\bfD) \rightarrow 0$. Hence, we expect that for  $\sigma/\singValMin(\bfD)$ sufficiently small, an order of magnitude decrease in the noise-to-signal ratio yields at least a decrease of 2 orders of magnitude in the bias of the predicted states.

    \subsubsection{Re-sampling operators for unbiased state predictions} 
    
We now devise a strategy to sample \eqref{eq:ROMestimator} for $\ell=1$ that guarantees that predicted states are unbiased, i.e., $\Exp[\hbfx_k] = \tbfx_k$ for all $k \in \mathbb{N}$.
    
    \begin{proposition} \label{prop:indepCopies}
        Suppose that the conditions of Proposition~\ref{prop:LSOperators} hold. For $k=1,\dots,K$, let $\hbfA^{(1)},\dots,\hbfA^{(K)}$ be independent samples of $\hbfA$ such that $\hbfA^{(1)},\dots,\hbfA^{(K)},\hbfB$ are mutually independent. Let the high-dimensional system \eqref{eq:NoisyHighDimSys} from which data are sampled and the learned low-dimensional model be linear. If the reduced state $\hbfx_k$ is computed by integrating the time varying dynamical-system model
        $$\hbfx_{k+1} = \hbfA^{(k+1)}\hbfx_k + \hbfB \bfu_k$$
        for $k=0,\dots,K-1$, then if $\hbfx_0 = \tbfx_0$, the predicted states are unbiased in the sense $\Exp[\hbfx_k] = \tbfx_k, k = 0,\dots,K$.
    \end{proposition}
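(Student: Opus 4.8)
The plan is to prove the claim by induction on $k$, exploiting the fact that each fresh copy $\hbfA^{(k+1)}$ is drawn independently of everything used to build $\hbfx_k$, so that the expectation of the product $\hbfA^{(k+1)}\hbfx_k$ factors. This is precisely the mechanism that was absent in Proposition~\ref{proposition:linearExpErr}: there, reusing a single random $\hbfA$ produced correlated factors $\bfE_{\hbfA}$ whose products have nonzero mean (e.g.\ $\Exp[\bfE_{\hbfA}^2] \neq \boldsymbol 0$), which is the source of the bias. Refreshing the operator at each time step turns every such product into a product of independent zero-mean factors, killing all the cross terms.

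First I would dispatch the base case. Since $\hbfx_0 = \tbfx_0$ is deterministic, $\Exp[\hbfx_0] = \tbfx_0$ holds trivially. For the inductive step, assume $\Exp[\hbfx_k] = \tbfx_k$. By the recursion $\hbfx_{j+1} = \hbfA^{(j+1)}\hbfx_j + \hbfB\bfu_j$, the state $\hbfx_k$ is a measurable function of $\hbfA^{(1)},\dots,\hbfA^{(k)}$ and $\hbfB$ only. By the mutual-independence hypothesis, $\hbfA^{(k+1)}$ is independent of this collection and hence of $\hbfx_k$. Taking expectations of $\hbfx_{k+1} = \hbfA^{(k+1)}\hbfx_k + \hbfB\bfu_k$ and factoring the expectation of the product then gives
\begin{equation*}
\Exp[\hbfx_{k+1}] = \Exp[\hbfA^{(k+1)}]\,\Exp[\hbfx_k] + \Exp[\hbfB]\,\bfu_k = \tbfA\,\tbfx_k + \tbfB\,\bfu_k = \tbfx_{k+1},
\end{equation*}
where I used $\Exp[\hbfA^{(k+1)}] = \tbfA$ and $\Exp[\hbfB] = \tbfB$ from Proposition~\ref{prop:LSOperators} together with the induction hypothesis. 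This closes the induction and establishes $\Exp[\hbfx_k] = \tbfx_k$ for all $k$.

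The one point that requires care, and which I regard as the main (if modest) obstacle, is the factorization $\Exp[\hbfA^{(k+1)}\hbfx_k] = \Exp[\hbfA^{(k+1)}]\,\Exp[\hbfx_k]$: this is not automatic for matrices and should be justified entrywise, writing $\Exp[(\hbfA^{(k+1)}\hbfx_k)_i] = \sum_j \Exp[\hbfA^{(k+1)}_{ij}]\,\Exp[(\hbfx_k)_j]$ and invoking the independence of each entry of $\hbfA^{(k+1)}$ from each entry of $\hbfx_k$.

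Alternatively, one can bypass the induction by unrolling the recursion into the closed form
\begin{equation*}
\hbfx_k = \bigl(\hbfA^{(k)}\cdots\hbfA^{(1)}\bigr)\tbfx_0 + \sum_{i=0}^{k-1}\bigl(\hbfA^{(k)}\cdots\hbfA^{(i+2)}\bigr)\hbfB\,\bfu_i,
\end{equation*}
with the empty product read as the identity, and taking expectations term by term. Mutual independence lets every product of expectations collapse, so each coefficient reduces to $\tbfA^{k}$ or $\tbfA^{k-1-i}\tbfB$, reproducing $\tbfx_k = \tbfA^k\tbfx_0 + \sum_{i=0}^{k-1}\tbfA^{k-1-i}\tbfB\,\bfu_i$. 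I would present the induction, since it is the shortest, and mention the unrolled form only as a remark.
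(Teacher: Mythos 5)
Your proof is correct and follows essentially the same route as the paper: induction on the time step, using that $\hbfx_k$ is a function of $\hbfA^{(1)},\dots,\hbfA^{(k)},\hbfB$ only, so the freshly drawn $\hbfA^{(k+1)}$ is independent of it and the expectation of the product factors. Your extra care about justifying the matrix--vector factorization entrywise, and the unrolled closed-form remark, are fine additions but do not change the argument.
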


    \begin{proof}
        From the given assumptions, the operators $\hbfA^{(1)},\dots,\hbfA^{(K)},\hbfB$ are unbiased following Proposition~\ref{prop:LSOperators}. We now proceed via induction. When $k=0$, since $\hbfx_0 = \tbfx_0$, $\hbfx_1 = \hbfA^{(1)}\tbfx_0 + \hbfB \bfu_0$. Therefore, $\Exp[\hbfx_1] = \Exp[\hbfA^{(1)}] \tbfx_0 + \Exp[\hbfB] \bfu_0 = \tbfA \tbfx_0 + \tbfB \bfu_0 = \tbfx_1$. 
        For a time step $j \in \{1,\dots,K-1\}$, suppose that $\Exp[\hbfx_k] = \tbfx_k$ for $k=0,\dots,j-1$. Observe that $\hbfx_{j-1}$ is a function of $\hbfA^{(j-1)},\dots,\hbfA^{(1)},\hbfB,\tbfx_0$. Independence of $\hbfA^{(j)}$ to $\hbfA^{(j-1)},\dots,\hbfA^{(1)},\hbfB$ implies that $\hbfA^{(j)}$ and $\hbfx_{j-1}$ are independent as well. Therefore, 
        \begin{align*}
            \Exp[\hbfx_j] & = \Exp[\hbfA^{(j-1)}\hbfx_{j-1} + \hbfB \bfu_{j-1}]  = \Exp[\hbfA^{(j-1)}] \Exp[\hbfx_{j-1}] + \Exp[\hbfB] \bfu_{j-1} \\
            & = \tbfA \tbfx_{j-1} + \tbfB \bfu_{j-1} = \tbfx_j.
        \end{align*}

    \end{proof}
    
    The random matrices $\hbfA^{(1)},\dots,\hbfA^{(K)}, \hbfB$, which satisfy the conditions stated in Proposition~\ref{prop:indepCopies}, can be generated by solving \eqref{eq:LeastSquaresNoisy} $K+1$ times, each time with a new, mutually independent, re-projected noisy state trajectories. Generating $\hbfA^{(1)},\dots,\hbfA^{(K)}$ can be computationally expensive because the high-dimensional system needs to be queried for a potentially large number of trajectories.

\subsubsection{Error of state predictions with polynomially nonlinear systems} \label{subsubsec:NonlinearPoly}
    
We now derive bounds for the bias $\|\Exp[\hbfx_k - \tbfx_k]\|_2$ and the MSE $\Exp[\|\hbfx_k - \tbfx_k\|_2^2]$ of state predictions where data are sampled from polynomially nonlinear systems. 

We start by writing the state $\hbfx_k$ at time step $k$ (which only involves the initial condition and previous inputs) as a sum of vectors, each of which is formed as a combination of matrix and Kronecker products.
    
    \begin{lemma} \label{lemma:TimeStepNonLin}
           The state $\hbfx_k$ at time step $k, k \in \mathbb{N},$  of the polynomially nonlinear model \eqref{eq:ROMestimator} is 
        \begin{align} \label{eq:timeStepExpr}
            \hbfx_k = \sum_{l = 0}^{Q_k} \sum_{\substack{j_1,\dots,j_{\ell +1} \in \mathbb{N} \\j_{1}+\dots+j_{\ell + 1} = l}} \zeta_{j_1,\dots,j_{\ell + 1}}(\bfE_{\hbfA_1},\dots,\bfE_{\hbfA_{\ell}},\bfE_{\hbfB}),
        \end{align}
        where $Q_k = (\ell^k-1)/(\ell-1)$ and $\zeta_{j_1,\dots,j_{\ell + 1}}$ is a sum of vectors in $\R^n$, where each term is a combination of matrix and Kronecker products involving the deterministic quantities $\hbfx_0,\bfu_0,\dots,\bfu_{k-1},\tbfA_1,\dots,\tbfA_{\ell},\tbfB$ and the random matrices $\bfE_{\hbfA_1},\dots,\bfE_{\hbfA_{\ell}},\bfE_{\hbfB}$. Each term in the sum $\zeta_{j_1,\dots,j_{\ell + 1}}$ consists of $j_l$ multiplications of $\bfE_{\hbfA_l}$ for $l = 1,\dots,\ell$ and $j_{\ell+1}$ multiplications of $\bfE_{\hbfB}$.
    \end{lemma}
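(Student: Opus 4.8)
The plan is to prove the decomposition \eqref{eq:timeStepExpr} by induction on the time step $k$, bookkeeping at each step the total number of random-matrix factors ($\bfE_{\hbfA_1},\dots,\bfE_{\hbfA_\ell},\bfE_{\hbfB}$) appearing in each summand. The base case $k=0$ is immediate: the initial state $\hbfx_0$ is deterministic, $Q_0 = (\ell^0-1)/(\ell-1) = 0$, and the only summand is $\zeta_{0,\dots,0} = \hbfx_0$, which contains no random matrices. For the inductive step I would substitute the assumed form of $\hbfx_k$ into the recursion \eqref{eq:ROMestimator}, written as $\hbfx_{k+1} = \sum_{j=1}^\ell (\tbfA_j + \bfE_{\hbfA_j})\hbfx_k^j + (\tbfB + \bfE_{\hbfB})\bfu_k$ after inserting $\hbfA_j = \tbfA_j + \bfE_{\hbfA_j}$ and $\hbfB = \tbfB + \bfE_{\hbfB}$.

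The key step is to expand each compressed Kronecker power $\hbfx_k^j$. Since each $\hbfx_k^j$ is a linear image of the full Kronecker power $\hbfx_k \otimes \cdots \otimes \hbfx_k$ (cf.\ the selection/duplication matrices $\bfS_j,\bfR_j$ of Section~\ref{subsec:Reproj}), multilinearity of the Kronecker product lets me distribute the power over the inductive decomposition of $\hbfx_k$, yielding a sum of terms each of which is a Kronecker product of $j$ of the summands $\zeta_{j_1,\dots,j_{\ell+1}}$ of $\hbfx_k$. Crucially, the number of factors of each random matrix in such a product is the sum of the corresponding counts of its $j$ factors, so the type-multiplicity vectors \emph{add}. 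Left-multiplying by $\tbfA_j + \bfE_{\hbfA_j}$ either leaves the count vector unchanged (the $\tbfA_j$ branch) or increments the $\bfE_{\hbfA_j}$-count by one (the $\bfE_{\hbfA_j}$ branch); likewise $(\tbfB + \bfE_{\hbfB})\bfu_k$ contributes one summand with no random factors and one with a single $\bfE_{\hbfB}$ factor. Every resulting summand is therefore still a combination of matrix and Kronecker products of the deterministic quantities $\hbfx_0,\bfu_0,\dots,\bfu_k,\tbfA_1,\dots,\tbfA_\ell,\tbfB$ and the random matrices, so I would \emph{define} $\zeta_{j_1,\dots,j_{\ell+1}}$ at time $k+1$ as the collection of all such summands carrying exactly $j_m$ factors of $\bfE_{\hbfA_m}$ for $m=1,\dots,\ell$ and $j_{\ell+1}$ factors of $\bfE_{\hbfB}$.

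It then remains to pin down the range of the outer sum, i.e.\ that the largest total count $l = j_1 + \cdots + j_{\ell+1}$ produced at step $k+1$ equals $Q_{k+1} = (\ell^{k+1}-1)/(\ell-1)$. By the inductive hypothesis every summand of $\hbfx_k$ carries at most $Q_k$ random factors; a $j$-fold Kronecker product therefore carries at most $j Q_k$ of them, and the left multiplication by $\tbfA_j + \bfE_{\hbfA_j}$ adds at most one more. Maximizing over $j \in \{1,\dots,\ell\}$ yields the recursion $Q_{k+1} = \ell Q_k + 1$, which together with $Q_0 = 0$ solves to the claimed closed form, while the $(\tbfB+\bfE_{\hbfB})\bfu_k$ term contributes at most one factor and never exceeds this bound.

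I expect the main obstacle to be the combinatorial bookkeeping in the Kronecker-power expansion: one must justify distributing the power over a sum for the \emph{compressed} powers $\hbfx_k^j$ (handled by passing through the full power and invoking multilinearity) and verify that additivity of the type-multiplicity vectors under Kronecker products, combined with the degree recursion $Q_{k+1} = \ell Q_k + 1$, is precisely what produces the geometric growth $\ell^k$ in $Q_k$. The structural claim that each summand remains a combination of matrix and Kronecker products is preserved automatically under these operations, so the substance of the argument lies entirely in the index and degree accounting rather than in any analytic estimate.
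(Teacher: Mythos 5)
Your proposal is correct and follows essentially the same route as the paper's proof: unroll the recursion with $\hbfA_j = \tbfA_j + \bfE_{\hbfA_j}$, $\hbfB = \tbfB + \bfE_{\hbfB}$, group the resulting summands by the multiplicities of the random-matrix factors, and establish the degree bound via the recursion $Q_{k+1} = \ell Q_k + 1$ (the paper anchors the induction at $k=1$ with $Q_1=1$ rather than $k=0$, which is immaterial). Your explicit appeal to multilinearity of the Kronecker power and additivity of the type-multiplicity vectors is a slightly more careful rendering of the bookkeeping the paper leaves informal.
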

    
    \begin{proof}
        We recast the system \eqref{eq:ROMestimator} as
    \begin{align} \label{eq:ROMestimatorKron}
        \hbfx_{k+1} = \sum_{j=1}^{\ell} \hbfA_j \SelectKron_j (\hbfx_k \otimes \cdots \otimes \hbfx_k) + \hbfB \bfu_k,
    \end{align}
    where $\SelectKron_j \in \R^{n_j \times n^j}$ is a selection matrix such that $\SelectKron_j (\hbfx_k \otimes \cdots \otimes \hbfx_k) = \hbfx_k^j$ for $j=1,\dots,\ell$. Thus, the state $\hbfx_{k + 1}$ at time step $k + 1$ is the result of recursively applying \eqref{eq:ROMestimatorKron} until the right-hand side contains only the initial condition $\hbfx_0$ and the inputs $\bfu_0,\dots,\bfu_{k-1}$. Since $\hbfA_j = \tbfA_j + \bfE_{\hbfA_j}$ for $j=1,\dots,\ell$ and $\hbfB = \tbfB + \bfE_{\hbfB}$, the right-hand side is a sum of combinations of matrix and Kronecker products involving the deterministic vectors $\hbfx_0,\bfu_0,\dots,\bfu_{k-1}$, the deterministic matrices $\tbfA_1,\dots,\tbfA_{\ell},\tbfB, \bfS_1,\dots,\bfS_{\ell}$ and the random matrices $\bfE_{\hbfA_1},\dots,\bfE_{\hbfA_{\ell}},\bfE_{\hbfB}$. The right-hand side can then be ordered with respect to the number of times there is a multiplication with a random matrix which is represented by the outer sum in \eqref{eq:timeStepExpr}. The outer sum is further partitioned according to how often there is a multiplication involving $\bfE_{\hbfA_1},\dots,\bfE_{\hbfA_{\ell}},\bfE_{\hbfB}$ with corresponding frequencies of multiplications $j_1,\dots,j_{\ell+1}$ times. The frequencies $j_1,\dots,j_{\ell+1}$  serve as  indices of the inner sum \eqref{eq:timeStepExpr}. 
    
    It remains to show that the state at time step $k$ is obtained using at most $Q_k$ multiplications with a random matrix. We proceed via induction. When $k=1$,
    \begin{align*}
        \hbfx_{1} = \sum_{j=1}^{\ell} \tbfA_j \SelectKron_j (\hbfx_0 \otimes \cdots \otimes \hbfx_0) + \tbfB \bfu_0 + \sum_{j=1}^{\ell} \bfE_{\hbfA_1} \SelectKron_j (\hbfx_0 \otimes \cdots \otimes \hbfx_0) + \bfE_{\hbfB} \bfu_0,
    \end{align*}
    thereby implying that there is at most one ($Q_1=1$) random-matrix multiplication to obtain $\hbfx_1$. Suppose that at time step $k=m$, obtaining the state $\hbfx_m$ requires at most $Q_m$ random-matrix multiplications. At time step $k=m+1$, the maximum number of random-matrix multiplications is determined by the expression $\bfE_{\hbfA_{\ell}} \bfS_{\ell} (\hbfx_m \otimes \cdots \otimes \hbfx_m)$. From the induction step, $\hbfx_m$ has at most $Q_m = (\ell^m-1)/(\ell-1)$ random matrix multiplications which means that $\hbfx_{m+1}$ has at most $Q_m\ell + 1 =(\ell^{m+1}-1)/(\ell-1)=Q_{m+1}$ random matrix multiplications due to the $\ell$ Kronecker products of $\hbfx_m$ and the random matrix $\bfE_{\hbfA_{\ell}}$.
    \end{proof}
    
The following proposition shows that the bound for the bias $\|\Exp[\hbfx_k - \tbfx_k]\|_2$ of state predictions is still polynomial in terms of the noise-to-signal ratio even when data are sampled from polynomially nonlinear systems and polynomially nonlinear models are learned. In particular, when $\sigma/\singValMin(\bfD) < 1$ and $\sigma/\singValMin(\bfD) \rightarrow 0$, the behavior of the upper bound is dominated by the term associated with $(\sigma/\singValMin(\bfD))^2$.
     
\begin{proposition} \label{prop:nonlinearpoly}
         Let $\hbfx_0 = \tbfx_0$. Suppose that the conditions of Proposition~\ref{prop:LSOperators} hold. If the high-dimensional system \eqref{eq:NoisyHighDimSys} from which data are sampled and the learned low-dimensional model are polynomially nonlinear, for $k \in \mathbb{N}$ with $k \ge 1$, it holds that
    $$\|\Exp[\hbfx_k - \tbfx_k]\|_2 \le \sum_{l=2}^{Q_k} \bar{C}_l \left(\frac{\sigma}{\singValMin(\bfD)}\right)^l $$
    for some constants $0 < \bar{C}_l < \infty, l =2,\dots,Q_k$, which are not functions of $\sigma$ and $\bfD$.     
     \end{proposition}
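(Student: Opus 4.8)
The plan is to follow the strategy of the proof of Proposition~\ref{proposition:linearExpErr}, but to organize the expansion around the decomposition already supplied by Lemma~\ref{lemma:TimeStepNonLin}. I would first write $\hbfx_k = \sum_{l=0}^{Q_k} \sum_{j_1 + \dots + j_{\ell+1} = l} \zeta_{j_1,\dots,j_{\ell+1}}$ and identify the single $l=0$ summand with $\tbfx_k$: setting all random matrices $\bfE_{\hbfA_1},\dots,\bfE_{\hbfA_\ell},\bfE_{\hbfB}$ to zero in the recursion \eqref{eq:ROMestimatorKron} replaces each $\hbfA_j$ by $\tbfA_j$ and $\hbfB$ by $\tbfB$, which together with $\hbfx_0 = \tbfx_0$ reproduces exactly the intrusive recursion \eqref{eq:ROM}. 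Hence $\hbfx_k - \tbfx_k = \sum_{l=1}^{Q_k} \sum_{j_1 + \dots + j_{\ell+1} = l} \zeta_{j_1,\dots,j_{\ell+1}}$.

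Next I would eliminate the $l=1$ contribution. Every term of a $\zeta_{j_1,\dots,j_{\ell+1}}$ with $l=1$ is a product of deterministic quantities and exactly one random factor, namely a single copy of some $\bfE_{\hbfA_m}$ or of $\bfE_{\hbfB}$; since these matrices have zero mean, linearity of expectation makes the expectation of each such term vanish. Therefore $\Exp[\hbfx_k - \tbfx_k] = \sum_{l=2}^{Q_k} \sum_{j_1 + \dots + j_{\ell+1} = l} \Exp[\zeta_{j_1,\dots,j_{\ell+1}}]$, and Jensen's inequality gives $\|\Exp[\hbfx_k - \tbfx_k]\|_2 \le \sum_{l=2}^{Q_k} \sum_{j_1 + \dots + j_{\ell+1} = l} \Exp[\|\zeta_{j_1,\dots,j_{\ell+1}}\|_2]$.

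The core estimate is to bound $\Exp[\|\,\cdot\,\|_2]$ of a single summand of $\zeta_{j_1,\dots,j_{\ell+1}}$ carrying $l = j_1 + \dots + j_{\ell+1} \ge 2$ random factors. As in the linear proof I would substitute each random factor by its Gaussian form, defining $\bfG_{\hbfA_m} = \frac{1}{\sigma}\Sigma_{\hbfA_m}^{-1/2}\bfE_{\hbfA_m}^T$ and $\bfG_{\hbfB} = \frac{1}{\sigma}\Sigma_{\hbfB}^{-1/2}\bfE_{\hbfB}^T$, whose entries are standard normal, so that $\bfE_{\hbfA_m} = \sigma\,\bfG_{\hbfA_m}^T\Sigma_{\hbfA_m}^{1/2}$ and likewise for $\bfE_{\hbfB}$. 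Submultiplicativity of the spectral norm together with the identity $\|\bfM \otimes \bfN\|_2 = \|\bfM\|_2\|\bfN\|_2$ bounds the norm of the term by the product of the norms of its factors: each deterministic factor ($\tbfA_j$, $\tbfB$, $\bfS_j$, $\hbfx_0$, $\bfu_i$) contributes a fixed constant, while each random factor contributes $\|\bfE\|_2 \le \frac{\sigma}{\singValMin(\bfD)}\|\bfG\|_2$ by \eqref{eq:NormSigmaB} and \eqref{eq:NormSigmaA}. The $l$ random factors thus pull out $(\sigma/\singValMin(\bfD))^l$, leaving $\Exp$ of a product of $l$ spectral norms of the $\bfG$ matrices. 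Applying the generalized H\"older inequality with all exponents equal to $l$ (so that the $l$ reciprocals sum to one), and then Lemma~\ref{lemma:GaussNormPowerBound} to each resulting factor $\Exp[\|\bfG\|_2^l]$, produces a finite bound depending only on $l$ and the dimensions $n,p,n_1,\dots,n_\ell$. Summing these $(\sigma/\singValMin(\bfD))^l$ estimates over the finitely many summands of each $\zeta$ and the finitely many partitions $j_1 + \dots + j_{\ell+1} = l$ collects everything into a constant $\bar C_l$ that depends on the deterministic data and the dimensions but not on $\sigma$ or $\bfD$, which is the claimed bound.

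The \emph{main obstacle} is the bookkeeping in the core estimate: unlike the linear case, a random matrix $\bfE_{\hbfA_m}$ may appear inside the Kronecker products of copies of $\hbfx_k$ generated by the recursion, so I must verify that submultiplicativity together with the Kronecker-norm identity still allows each random factor, wherever it is located, to be extracted cleanly as $\frac{\sigma}{\singValMin(\bfD)}\|\bfG\|_2$. Once this extraction is justified for an arbitrary term of the structured form guaranteed by Lemma~\ref{lemma:TimeStepNonLin}, the remaining H\"older-plus-Lemma~\ref{lemma:GaussNormPowerBound} argument is a routine, though notationally heavy, generalization of the $\tau_1,\tau_2,\tau_3$ bounds in Proposition~\ref{proposition:linearExpErr}.
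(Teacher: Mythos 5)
Your proposal is correct and follows essentially the same route as the paper's proof: the decomposition from Lemma~\ref{lemma:TimeStepNonLin}, identification of $\tbfx_k$ with the $l=0$ term, cancellation of the zero-mean $l=1$ terms, Jensen's inequality, and extraction of $(\sigma/\singValMin(\bfD))^l$ via submultiplicativity together with $\|\bfM\otimes\bfN\|_2=\|\bfM\|_2\|\bfN\|_2$ all match. The only cosmetic difference is in decoupling the expectation of the product of Gaussian matrix norms: you use one application of the generalized H\"older inequality with equal exponents, whereas the paper applies the Cauchy--Schwarz inequality recursively; both reduce to Lemma~\ref{lemma:GaussNormPowerBound} and yield finite constants.
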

     
    \begin{proof}
    Define the $n_j \times n$ random matrix $\bfG_{\hbfA_j}$ as $\bfG_{\hbfA_j} = \frac{1}{\sigma} \Sigma_{\hbfA_j}^{-1/2} \bfE_{\hbfA_j}^T$ for $j=1,\dots,\ell$ and the $p \times n$ random matrix $\bfG_{\hbfB}$ as $\bfG_{\hbfB} = \frac{1}{\sigma} \Sigma_{\hbfB}^{-1/2} \bfE_{\hbfB}^T$. Observe that the entries of $\bfG_{\hbfA_j}$ for $j=1,\dots,\ell$ and $\bfG_{\hbfB}$ are independent standard normal random variables, however, in general, the random matrices are dependent due to the dependence between $\hbfB,\hbfA_j, j =1,\dots,\ell$.
    According to Lemma~\ref{lemma:TimeStepNonLin}, the state $\hbfx_k$ at time step $k$ is
    \begin{align} \label{eq:NonlinxHatTimeStep}
        \hbfx_k = \sum_{l = 0}^{Q_k} \sum_{\substack{j_1,\dots,j_{\ell +1} \in \mathbb{N} \\j_{1}+\dots+j_{\ell + 1} = l}} \zeta_{j_1,\dots,j_{\ell + 1}}(\bfE_{\hbfA_1},\dots,\bfE_{\hbfA_{\ell}},\bfE_{\hbfB}).
    \end{align}
   Since the system is polynomially nonlinear, the state obtained with intrusive model reduction, 
        \begin{align}\label{eq:NonlinxTildeTimeStep}
            \tbfx_k =  \zeta_{0,\dots,0}(\bfE_{\hbfA_1},\dots,\bfE_{\hbfA_{\ell}},\bfE_{\hbfB})
        \end{align}
    is comprised of those terms for which no random matrix is present in the multiplications ($l=0$). 
   We now isolate the terms in \eqref{eq:NonlinxHatTimeStep} in which a single random matrix is involved in the multiplication. By linearity of expectation and using that the random matrices $\bfE_{\hbfB}, \bfE_{\hbfA_{s}}, s = 1,\dots,\ell$ have zero mean,
    \begin{align*}
            \Exp \left[  \sum_{\substack{j_{1},\dots,j_{\ell+1}\\j_{1}+\dots+j_{\ell+1} = 1}} \zeta_{j_1,\dots,j_{\ell+1}}(\bfE_{\hbfA_1},\dots,\bfE_{\hbfA_{\ell}},\bfE_{\hbfB}) \right] = \boldsymbol{0}.
    \end{align*}
    Therefore, with the triangle and Jensen's inequality follows
    \begin{equation}
    \|\Exp[\hbfx_k - \tbfx_k]\|_2 \leq \sum_{l = 2}^{Q_k} \sum_{\substack{j_1,\dots,j_{\ell +1} \in \mathbb{N} \\j_{1}+\dots+j_{\ell + 1} = l}}  \Exp \left[ \| \zeta_{j_1,\dots,j_{\ell+1}}(\bfE_{\hbfA_1},\dots,\bfE_{\hbfA_{\ell}},\bfE_{\hbfB}) \|_2 \right]\,.\label{eq:ProofInEqBias}
    \end{equation}
    It remains to bound $\Exp \left[ \| \zeta_{j_1,\dots,j_{\ell+1}}(\bfE_{\hbfA_1},\dots,\bfE_{\hbfA_{\ell}},\bfE_{\hbfB}) \|_2 \right]$. We use that for matrices $\bfA,\bfB$ of appropriate dimensions, $\|\bfA \bfB\|_2 \le \|\bfA\|_2 \|\bfB\|_2$ and that $\|\bfA \otimes \bfB\|_2 = \|\bfA\|_2 \|\bfB\|_2$. Note also that $\|\SelectKron_j\|_2 = 1$  for $j=1,\dots,\ell$. Recall that $\tbfx_0$,$\bfu_0,\dots,\bfu_{k-1}$, $\tbfA_1,\dots,\tbfA_{\ell}$, $\tbfB$, $\SelectKron_1,\dots,\SelectKron_{\ell}$ are deterministic quantities with finite norm.
        We thus have, for some finite constant $\bar{C} (j_1,\dots,j_{\ell + 1})  >  0$, the bound
        \begin{align} \label{eq:ZetaBound}
               & \Exp \left[ \| \zeta_{j_1,\dots,j_{\ell+1}}(\bfE_{\hbfA_1},\dots,\bfE_{\hbfA_{\ell}},\bfE_{\hbfB}) \|_2 \right] \\
            & \le \bar{C} (j_1,\dots,j_{\ell + 1})  \Exp [\|\bfE_{\hbfA_1}\|_2^{j_{1}} \cdots \|\bfE_{\hbfA_{\ell}}\|_2^{j_{\ell}} \|\bfE_{\hbfB}\|_2^{j_{\ell+1}}] \notag \\
            & \le \bar{C} (j_1,\dots,j_{\ell + 1})  \sigma^{j_1+\dots+j_{\ell+1}} \|\Sigma_{\hbfA_1}^{1/2}\|_2^{j_1} \cdots \|\Sigma_{\hbfA_{\ell}}^{1/2}\|_2^{j_{\ell}} \|\Sigma_{\hbfB}^{1/2}\|_2^{j_{\ell+1}} \Exp [\| \bfG_{\hbfA_1}\|_2^{j_{1}} \cdots \| \bfG_{\hbfA_{\ell}} \|_2^{j_{\ell}} \| \bfG_{\hbfB}\|_2^{j_{\ell + 1}}]\notag \\
            & \le \bar{C} (j_1,\dots,j_{\ell + 1}) 
            \left( \frac{\sigma}{\singValMin(\bfD)} \right)^{j_{1}+\dots+j_{\ell+1}} \Exp [\| \bfG_{\hbfA_1}\|_2^{j_{1}} \cdots \| \bfG_{\hbfA_{\ell}} \|_2^{j_{\ell}} \| \bfG_{\hbfB}\|_2^{j_{\ell + 1}}] \notag
        \end{align}
    where we utilized \eqref{eq:NormSigmaB}, \eqref{eq:NormSigmaA}. 
    
    Recursively applying the Cauchy-Schwarz inequality and invoking concentration inequalities on $\|\bfG_{\hbfB}\|_2, \|\bfG_{\hbfA_s}\|_2, s = 1,\dots,\ell$  shows that $\Exp [\| \bfG_{\hbfA_1}\|_2^{j_1} \cdots \| \bfG_{\hbfA_{\ell}} \|_2^{j_{\ell}} \| \bfG_{\hbfB}\|_2^{j_{\ell + 1}}]$  is finite. To illustrate this, consider 
    \begin{multline} \label{eq:CauchySchwarzBnd}
        \biggl | \Exp [\| \bfG_{\hbfA_1}\|_2^{j_{1}} \cdots \| \bfG_{\hbfA_{\ell}} \|_2^{j_{\ell}} \| \bfG_{\hbfB}\|_2^{j_{\ell + 1}}] \biggr | \\ \le \sqrt{\Exp [\| \bfG_{\hbfA_1}\|_2^{2j_{1}}] \Exp[ \| \bfG_{\hbfA_{2}} \|_2^{2j_{2}} \cdots \| \bfG_{\hbfA_{\ell}} \|_2^{2j_{\ell}} \| \bfG_{\hbfB}\|_2^{2j_{\ell + 1}}]}.
    \end{multline}    
    By invoking Lemma~\ref{lemma:GaussNormPowerBound}, we can obtain a bound for $\Exp[\|\bfG_{\hbfA_1}\|_2^{2j_1}]$. The Cauchy-Schwarz inequality is then applied to $\Exp[ \| \bfG_{\hbfA_{2}} \|_2^{2j_2} \cdots \| \bfG_{\hbfA_{\ell}} \|_2^{2j_{\ell}} \| \bfG_{\hbfB}\|_2^{2j_{\ell + 1}}]$ after which concentration inequalities are invoked to bound $\Exp[\|\bfG_{\hbfA_2}\|_2^{4j_2}]$, which is repeated $\ell$ times until the expected value of products is decomposed into a product of expected values.
    The proposition then follows from \eqref{eq:ZetaBound} by summing over the indices for which $j_{1}+\dots+j_{\ell+1} = l$ with $l = 2, \dots, Q_k$.
    \end{proof}
    
We now derive a bound for the MSE of the predicted states, which shows that for polynomially nonlinear systems, the MSE in the asymptotic regime $\sigma/\singValMin(\bfD) \rightarrow 0$ is dominated by  $(\sigma/\singValMin(\bfD))^2$.
\begin{proposition} \label{prop:nonlinearpolyMSE}
         Let $\hbfx_0 = \tbfx_0$. Suppose that the conditions of Proposition~\ref{prop:LSOperators} hold. If the high-dimensional system \eqref{eq:NoisyHighDimSys} from which data are sampled and the learned low-dimensional model are polynomially nonlinear,  then
    $$\Exp[\|\hbfx_k - \tbfx_k\|^2_2] \le \sum_{l=2}^{2Q_k} \hat{C}_l \left(\frac{\sigma}{\singValMin(\bfD)}\right)^l\,,\qquad 1 \leq k \in \mathbb{N}, $$
    for some constants $0 < \hat{C}_l < \infty, l =2,\dots,2Q_k$, which are not functions of $\sigma$ and $\bfD$.     
     \end{proposition}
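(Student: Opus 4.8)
The plan is to follow the proof of Proposition~\ref{prop:nonlinearpoly} but to expand the \emph{squared} norm of the discrepancy rather than the norm of its mean. First I would invoke Lemma~\ref{lemma:TimeStepNonLin} and remove the deterministic term $\zeta_{0,\dots,0}=\tbfx_k$ to write
\[
\hbfx_k - \tbfx_k = \sum_{l=1}^{Q_k}\;\sum_{\substack{j_1,\dots,j_{\ell+1}\in\mathbb{N}\\ j_1+\dots+j_{\ell+1}=l}} \zeta_{j_1,\dots,j_{\ell+1}}(\bfE_{\hbfA_1},\dots,\bfE_{\hbfA_{\ell}},\bfE_{\hbfB}),
\]
so that the sum begins at degree $l=1$. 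Writing $\zeta_a$ for a generic term with multi-index $a=(j_1,\dots,j_{\ell+1})$ of total degree $|a|=j_1+\dots+j_{\ell+1}$, the triangle inequality followed by squaring gives the pathwise bound
\[
\|\hbfx_k-\tbfx_k\|_2^2 \le \Big(\sum_{a}\|\zeta_a\|_2\Big)^2 = \sum_{a,b}\|\zeta_a\|_2\,\|\zeta_b\|_2,
\]
so after taking expectations it suffices to bound $\Exp[\|\zeta_a\|_2\|\zeta_b\|_2]$ for every pair of multi-indices $a,b$ with $|a|,|b|\in\{1,\dots,Q_k\}$.

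For a fixed pair, I would reuse verbatim the estimate that produced \eqref{eq:ZetaBound}: submultiplicativity of the spectral norm, the identity $\|\bfM\otimes\bfN\|_2=\|\bfM\|_2\|\bfN\|_2$, and $\|\SelectKron_j\|_2=1$ give the pathwise bound $\|\zeta_a\|_2\le \bar C_a \prod_{s=1}^{\ell}\|\bfE_{\hbfA_s}\|_2^{j_s}\|\bfE_{\hbfB}\|_2^{j_{\ell+1}}$ with a finite deterministic constant $\bar C_a$, and likewise for $\zeta_b$ with index $b=(j_1',\dots,j_{\ell+1}')$. Multiplying the two bounds, substituting $\bfE_{\hbfA_s}=\sigma\bfG_{\hbfA_s}^T\Sigma_{\hbfA_s}^{1/2}$ and $\bfE_{\hbfB}=\sigma\bfG_{\hbfB}^T\Sigma_{\hbfB}^{1/2}$, and pulling out the spectral-norm bounds \eqref{eq:NormSigmaA}--\eqref{eq:NormSigmaB} yields
\[
\Exp[\|\zeta_a\|_2\|\zeta_b\|_2] \le \bar C_a \bar C_b \left(\frac{\sigma}{\singValMin(\bfD)}\right)^{|a|+|b|} \Exp\!\Big[\prod_{s=1}^{\ell}\|\bfG_{\hbfA_s}\|_2^{\,j_s+j_s'}\,\|\bfG_{\hbfB}\|_2^{\,j_{\ell+1}+j_{\ell+1}'}\Big],
\]
so each pair contributes exactly the power $|a|+|b|$ of the noise-to-signal ratio.

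It then remains only to show that the residual expectation of a product of powers of the (dependent) Gaussian-matrix norms is finite, which I would establish exactly as in Proposition~\ref{prop:nonlinearpoly}: apply the Cauchy--Schwarz inequality recursively to peel off one factor at a time, at each step reducing to a single moment $\Exp[\|\bfG_{\hbfA_s}\|_2^{2m}]$ or $\Exp[\|\bfG_{\hbfB}\|_2^{2m}]$ that is finite by Lemma~\ref{lemma:GaussNormPowerBound}, until after at most $\ell+1$ steps the expectation is bounded by a finite product of finite moments. Grouping the double sum by the total degree $l=|a|+|b|$ and collecting into $\hat C_l$ all the (finitely many) finite constants with $|a|+|b|=l$ then produces the claimed bound, with $l$ ranging from $2$ to $2Q_k$. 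Two structural points deserve emphasis: the lower limit is $l=2$ because each of $\zeta_a,\zeta_b$ carries at least one random-matrix factor, so that --- unlike the bias bound of Proposition~\ref{prop:nonlinearpoly}, which needed the zero-mean property to cancel the $l=1$ contribution --- squaring automatically doubles the minimum degree; and the upper limit is $2Q_k$ since each index is at most $Q_k=(\ell^k-1)/(\ell-1)$. The main obstacle is precisely the dependence among $\bfE_{\hbfA_1},\dots,\bfE_{\hbfA_\ell},\bfE_{\hbfB}$, which prevents factoring the expectation of the product directly; the recursive Cauchy--Schwarz argument is exactly what sidesteps this, at the only cost of enlarging the constants $\hat C_l$.
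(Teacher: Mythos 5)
Your proposal is correct and follows essentially the same route as the paper: expand $\hbfx_k - \tbfx_k$ via Lemma~\ref{lemma:TimeStepNonLin} starting at degree $l=1$, bound each $\|\zeta_a\|_2$ pathwise as in \eqref{eq:ZetaBound} to extract the factor $(\sigma/\singValMin(\bfD))^{|a|}$, square and take expectations so that the cross terms carry powers $|a|+|b|\in\{2,\dots,2Q_k\}$, and control the resulting moments of products of Gaussian-matrix norms by recursive Cauchy--Schwarz together with Lemma~\ref{lemma:GaussNormPowerBound}. Your explicit remark on why the minimum degree is $2$ here (squaring doubles the minimum degree, no zero-mean cancellation needed) matches the paper's observation that its sum starts at $j_1+\dots+j_{\ell+1}=1$ rather than $2$.
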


\begin{proof}
    From \eqref{eq:NonlinxHatTimeStep} and \eqref{eq:NonlinxTildeTimeStep}, we obtain
    \begin{align*}
        \hbfx_k - \tbfx_k = \sum_{l=1}^{Q_k} \sum_{\substack{j_1,\dots,j_{\ell +1} \in \mathbb{N} \\j_{1}+\dots+j_{\ell + 1} = l}} \zeta_{j_1,\dots,j_{\ell + 1}}(\bfE_{\hbfA_1},\dots,\bfE_{\hbfA_{\ell}},\bfE_{\hbfB}).
    \end{align*}
    Thus, for some finite constant $\hat{C}(j_1,\dots,j_{\ell + 1}) > 0$,
    \begin{align*}
        & \|\hbfx_k - \tbfx_k\|_2 \\
        &  \le \sum_{l=1}^{Q_k} \sum_{\substack{j_1,\dots,j_{\ell +1} \in \mathbb{N} \\j_{1}+\dots+j_{\ell + 1} = l}} \| \zeta_{j_1,\dots,j_{\ell+1}}(\bfE_{\hbfA_1},\dots,\bfE_{\hbfA_{\ell}},\bfE_{\hbfB}) \|_2 \\
        &  \le  \sum_{l=1}^{Q_k} \sum_{\substack{j_1,\dots,j_{\ell +1} \in \mathbb{N} \\j_{1}+\dots+j_{\ell + 1} = l}} \hat{C} (j_1,\dots,j_{\ell + 1})  \|\bfE_{\hbfA_1}\|_2^{j_{1}} \cdots \|\bfE_{\hbfA_{\ell}}\|_2^{j_{\ell}} \|\bfE_{\hbfB}\|_2^{j_{\ell+1}} \\
        & \le \sum_{l=1}^{Q_k} \sum_{\substack{j_1,\dots,j_{\ell +1} \in \mathbb{N} \\j_{1}+\dots+j_{\ell + 1} = l}} \hat{C} (j_1,\dots,j_{\ell + 1}) \left( \frac{\sigma}{\singValMin(\bfD)} \right)^{j_{1}+\dots+j_{\ell+1}} \| \bfG_{\hbfA_1}\|_2^{j_{1}} \cdots \| \bfG_{\hbfA_{\ell}} \|_2^{j_{\ell}} \| \bfG_{\hbfB}\|_2^{j_{\ell + 1}}
    \end{align*}
    following calculations in \eqref{eq:ZetaBound} where $\bfG_{\hbfA_1},\dots,\bfG_{\hbfA_{\ell}},\bfG_{\hbfB}$ are the same random matrices defined in the proof of Proposition~\ref{prop:nonlinearpoly}. Note that in the above inequality, the powers of the noise-to-signal ratio range from $j_1+\dots+j_{\ell+1} = 1$ to $j_1+\dots+j_{\ell+1} = Q_k$, whereas in the proof of Proposition~\ref{prop:nonlinearpoly} the inequality \eqref{eq:ProofInEqBias} starts at $j_1+\dots+j_{\ell+1} = 2$. The conclusion now follows by squaring both sides of the inequality, applying expectation, and performing calculations similar to \eqref{eq:CauchySchwarzBnd} to show that the resulting constants are finite.
\end{proof}

\section{Active operator inference for selecting training data} \label{sec:ActiveOpInf}
This section proposes active operator inference, which selects from a dictionary at which initial condition and inputs to sample the high-dimensional system for generating data with low noise-to-signal ratios. The proposed active operator inference is motivated by the bounds derived in Section~\ref{subsec:ExpError}, which show that the noise-to-signal ratio $\sigma/\singValMin(\bfD)$ controls the MSE of the learned operators as well as the bias and the MSE of the state dynamics. 

Section~\ref{subsec:dictionary} formalizes the dictionary whose elements are candidates for sampling the high-dimensional system at. The proposed selection of elements of the dictionary is described in Section~\ref{subsec:ODEIM} and builds on ideas from selecting points \cite{PeherstorferODEIM2020} in empirical interpolation \cite{BARRAULT2004667,doi:10.1137/090766498}. The computational procedure for active operator inference is presented in Section~\ref{subsec:Aopinf}, which summarizes the proposed workflow for learning low-dimensional models from noisy data.

\subsection{Dictionary of candidate states and inputs} \label{subsec:dictionary}

Consider a dictionary $\Dcal \in \R^{L \times M}$ of candidate states and inputs given by 
\[
\Dcal = [\bbfX_L^T, (\bbfX_L^2)^T, \dots, (\bbfX_L^{\ell})^T, \bfU_L^T]\,,
\]
where $\bbfX_L \in \R^{n \times L}, \bfU_L \in \R^{p \times L}$ are defined identically as $\bbfX, \bfU$ in Section~\ref{subsec:opinfNoisy} but with $L$ states and inputs, i.e. $\bbfX_L = [\bbfx_1,\dots,\bbfx_L]$ and $\bfU_L = [\bfu_1,\dots,\bfu_L]$. Let $\bfP_K \in \{0, 1\}^{L \times K}$ be a selection operator that selects $K \leq L$ rows of $\Dcal$ via $\bfP_K^T\Dcal = \bfD$ so that $\bfD$ can serve as a data matrix in the sense of \eqref{eq:LeastSquaresNoisy}. Observe that using all rows of $\Dcal$ to construct the data matrix $\bfD$ in the least squares problem \eqref{eq:LeastSquaresNoisy} is computationally expensive as the high-dimensional system has to be queried for each of the $L$ initial conditions and inputs. 

\subsection{A design of experiments approach via oversampled empirical interpolation}
\label{subsec:ODEIM}
Propositions~\ref{proposition:linearExpErr}, \ref{prop:nonlinearpoly}, and \ref{prop:nonlinearpolyMSE} demonstrate that a low noise-to-signal ratio $\sigma / \singValMin(\bfD)$ is desirable. Since the standard deviation $\sigma$ is fixed, we propose a design of experiments strategy that forms a data matrix $\bfD$ by selecting rows of $\Dcal$ so that $\singValMin(\bfD)$ is large.
To find a selection of rows, we follow the procedure proposed in \cite{PeherstorferODEIM2020} that pursues an equivalent objective for selecting points for empirical interpolation \cite{BARRAULT2004667,doi:10.1137/090766498,Drma2016}; see \cite{4668528,8361090,9152984,doi:10.1137/16M1057668} for other design of experiment approaches based on similar linear-algebra concepts. Set $K \ge M$. The method introduced in \cite{PeherstorferODEIM2020} constructs a selection matrix $\bfP_K$ which selects $K$ rows of $\Dcal$ with the objective of maximizing $\singValMin(\bfP_K^T \Dcal)$. First, the selection matrix $\bfP_M \in \R^{M \times M}$ is initialized with the approach introduced in \cite{Drma2016}. Then, new rows of $\Dcal$ are selected in a greedy fashion. To describe the greedy update, suppose we have the selection matrix $\bfP_m$, which selects $m$ rows of $\Dcal$, with $ M \le m < K$. Let the SVD of $\bfP_m^T \Dcal$ be  $\bfP_m^T \Dcal = \boldsymbol{\Phi}_m \boldsymbol{\Sigma}_m \boldsymbol{\Psi}_m^T$ where $\boldsymbol{\Phi}_m \in \R^{m \times M}$ is the matrix of left-singular vectors, $\boldsymbol{\Sigma}_m \in \R^{ M \times  M}$ is the diagonal matrix of singular values $s_1^{(m)},\dots,s_{M}^{(m)}$ in descending order, and $\boldsymbol{\Psi}_m \in \R^{M \times  M}$ is the matrix of right-singular vectors. Define the gap $g = (s_{M-1}^{(m)})^2 - (s_{M}^{(m)})^2$ and set $\rbfd_+ = \boldsymbol{\Psi}_m^T \bfd_+^T$, where $\bfd_+ \in \R^{1 \times M}$ is a candidate row of $\Dcal$ that has not been selected by $\bfP_m$. Further, let $\bfe \in \R^{M}$ be the canonical basis vector with all entries 0 except for the last component that is set to 1. It is shown in \cite{doi:10.1137/070682745}, see also the discussion in \cite{PeherstorferODEIM2020}, that
    \begin{align} \label{eq:singValUpdate}
        \singValMin(\bfP_{m+1}^T \Dcal)^2 - \singValMin(\bfP_{m}^T \Dcal)^2 \ge \frac{1}{2} \left( g+ \|\rbfd_+\|_2^2 - \sqrt{(g+\|\rbfd_+\|_2^2)^2 - 4g(\bfe^T \rbfd_{+})^2} \right)
    \end{align}
    which suggests that the new row $\bfd_{+}$ should be selected that maximizes the lower bound \eqref{eq:singValUpdate}. 
    This greedy step is then repeated until the desired number of rows $K$ is reached.
    
 Based on the just described greedy scheme, we use a modified greedy update rule, which was proposed in an earlier preprint version of \cite{PeherstorferODEIM2020}:  we choose the new row $\bfd_+$ that maximizes
    \begin{align} \label{eq:oldODEIMCriterion}
        (\bfe^T \rbfd_+)^2\,,
    \end{align}
    which is obtained by simplifying the lower bound \eqref{eq:singValUpdate}.

		Choosing the new row $\bfd_+$ by maximizing the lower bound \eqref{eq:singValUpdate} was tested in \cite{PeherstorferODEIM2020} for the case where the columns of $\Dcal$ are orthonormal. Since this condition does not necessarily hold in our setting, the lower bound in \eqref{eq:singValUpdate} for the greedy update can lead to cancellation errors especially when $4g(\bfe^T \rbfd_{+})^2$ is small, which  has been first observed in \cite{doi:10.2514/6.2021-1371}. 
		
\subsection{Active operator inference} \label{subsec:Aopinf}

The proposed active operator inference approach to learn low-dimensional models from noisy data is summarized in  Algorithm \ref{alg:ODEIMOversampling}. The inputs of the algorithm are the dictionary $\Dcal$ and the number of times $K$ to query the high-dimensional system, i.e., the number of rows of the data matrix $\bfD$. Line 2 of Algorithm~\ref{alg:ODEIMOversampling} initializes the sampling matrix via QDEIM \cite{Drma2016} by computing the QR decomposition of $\Dcal^T$ with pivoting. For $m \in \{M, M+1, \dots, K-1\}$, the SVD of $\bfP_m^T\Dcal$ is obtained in line 4 and the candidate row $\bfd_{+}$ of $\Dcal$ that maximizes \eqref{eq:oldODEIMCriterion} is selected in in lines 5--6 to update $\bfP_m$ to $\bfP_{m+1}$. In lines 7--9, re-projection \eqref{eq:NoisyReProj} is performed using the projected states and the inputs in the data matrix $\bfD = \bfP_K^T \Dcal$ to obtain the re-projected trajectory $\bbfZ$. The least-squares problem \eqref{eq:LeastSquaresNoisy} is then solved to learn the low-dimensional operators.

\begin{algorithm}[t] 
\caption{Active operator inference based on QDEIM \cite{Drma2016} and oversampling \cite{PeherstorferODEIM2020}}
\begin{algorithmic}[1]

\Procedure{AOpInf}{$\Dcal,K$} 
    \State Initialize $\bfP_M$ with QDEIM \cite{Drma2016}
    \For{$m = M,\dots, K-1$} \Comment{Follow \cite{PeherstorferODEIM2020} with criterion \eqref{eq:oldODEIMCriterion}}
    	\State Compute the SVD of $\bfP_m^T \Dcal = \boldsymbol{\Phi}_m \boldsymbol{\Sigma}_m \boldsymbol{\Psi}_m^T$ 
    	\State Find the row $\bfd_+$ of $\Dcal$ not in $\bfP_m^T \Dcal$ such that $\rbfd_+ = \boldsymbol{\Psi}_m^T \bfd_+^T$ maximizes \eqref{eq:oldODEIMCriterion}
        \State Update $\bfP_m$ to $\bfP_{m+1}$
     \EndFor
    \State Construct $\bfD$ via $\bfP_K^T \Dcal = \bfD$
    \State Perform re-projection as in \eqref{eq:NoisyReProj} to generate $\bbfZ$
    \State Solve the least-squares problem \eqref{eq:LeastSquaresNoisy} to obtain $\hbfA_1,\dots,\hbfA_{\ell},\hbfB$
\EndProcedure
\Return $\hbfA_1,\dots,\hbfA_{\ell},\hbfB$ 
\end{algorithmic}\label{alg:ODEIMOversampling}
\end{algorithm}

\section{Numerical experiments} \label{sec:NumExp}

We now numerically demonstrate that the proposed active operator inference leads to predicted states with orders of magnitude lower biases and MSEs than an uninformed equidistant-in-time selection of data samples. Additionally, we demonstrate that the bias and MSE of predicted states decay with the noise-to-signal ratio in agreement with the analysis developed in Section~\ref{subsec:ExpError}. Numerical results for a linear state dynamics are shown in Section~\ref{subsec:SteelRail} and for quadratic dynamics in Section~\ref{subsec:DiffLotkaVolterra}. In all experiments within one example, we use the same basis matrix $\bfV$, which ensures consistent comparisons among different noise-to-signal ratios. 

\subsection{Heat transfer problem for cooling of steel profiles} \label{subsec:SteelRail}
The model and problem setup are described in Section~\ref{subsec:SteelModel} and the numerical results are presented in Section~\ref{subsec:SteelResults}.

\subsubsection{Model of cooling steel profiles} \label{subsec:SteelModel}

 We describe a mathematical model for the cooling process of steel rail profiles in a rolling mill following \cite{BenS05b}. Set $\Omega \subset \R^2$ as the spatial domain and denote by $x(\bfeta,t)$ the temperature at the spatial point $\bfeta \in \Omega$ and time $t > 0$. The heat transfer model is
    \begin{align} \label{eq:HeatEqn}
        \frac{\partial x(\bfeta,t)}{\partial t} & = \frac{\lambda}{c \rho} \Delta x(\bfeta,t), \quad (\bfeta,t) \in \Omega \times [0,T], \\
        \nabla x(\bfeta,t) \cdot \textbf{n} & = \begin{cases}
         \frac{\kappa}{\lambda}(u_j(t) - x(\bfeta,t)) & \quad \text{for} \quad \bfeta \in \Gamma_j, j = 1,\dots,7,\\
            0 & \quad \text{for} \quad \bfeta \in \Gamma_0,
        \end{cases} 
        \notag \\
        x(\bfeta,0) & =  500, \notag
    \end{align}
    where $\lambda$ is the heat conductivity, $c$ the specific heat capacity, $\rho$ the profile density, $\kappa$ the heat transfer coefficient, $\Gamma_j, j = 0,\dots,7$ are segments of the domain boundary $\partial \Omega$ such that $\partial \Omega = \cup_{j=0}^7 \Gamma_j$ and $u_j(t), j =1,\dots,7$ is the external temperature applied to each boundary segment. The domain is visualized in Figure~\ref{fig:SteelProfileDomain}. The values of the constants are chosen as $\lambda = 26.4, c = 7620, \rho = 654, \kappa = 69.696$. 
    
Equation \eqref{eq:HeatEqn} is spatially discretized using the finite element method with linear triangular elements and temporally discretized with implicit Euler with step size $\delta t = 0.01$ to yield the high-dimensional system \eqref{eq:NoisyHighDimSys} with right-hand side function~\eqref{eq:PolySys} with $\ell = 1$, where $\bfx_k \in \R^N, N = 1357$ and $\bfu_k \in \R^p, p = 7$. We utilized the Python\footnote{  \href{https://gitlab.mpi-magdeburg.mpg.de/models/fenicsrail/-/tree/master/}{https://gitlab.mpi-magdeburg.mpg.de/models/fenicsrail/-/tree/master/}} code based on the FEniCS Project to generate the computational mesh and the system matrices; see also \cite{BenS05b}.
    
The basis matrix $\bfV$ is computed from snapshots $\bfx^{\text{basis}}_k, k = 0,\dots,L, L = 10000$, of the high-dimensional system  driven by the input $\bfu_k^{\text{basis}}$ whose $i$-th component, $i=1,\dots,7$ is given by $500(1 - \tanh(k \delta t/i^2)) + 250\gamma_{i,k}$. Here,  $\gamma_{i,0} = 0$ while $\gamma_{i,k}$ for $k>0$ is a realization of a uniform random variable on $[0,1]$. The projected states $\bfV^T \bfx_k^{\text{basis}}$ and the input $\bfu_k^{\text{basis}}$ for $k = 0,\dots,L-1$ constitute the 10000 rows of the dictionary $\Dcal \in \R^{L \times (n+p)}$ from which we select the rows of the data matrix $\bfD$ for operator inference. In the simulations below, we consider 5 equally spaced values in the logarithm scale for the standard deviation $\sigma$ of the noise between $1\times 10^{-3}$ and $1 \times 10^{-1}$. The test input $\bfu_k^{\text{test}}$ at time step $k \in \mathbb{N}$ has components given by $500(1 - \tanh(k \delta t/i^2))$ for $i = 1,\dots,7$.
   
\begin{figure}
  \centering
    \includegraphics[width=0.7\textwidth,trim={10cm 10cm 10cm 10cm},clip]{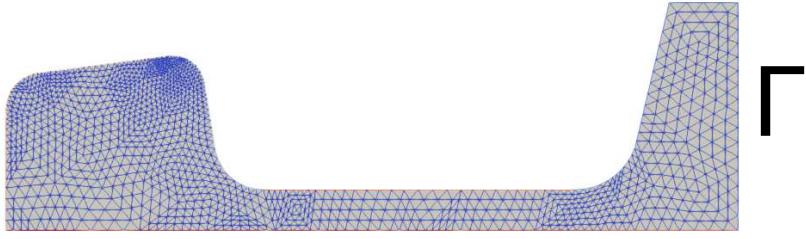}
    \caption{Steel profile domain $\Omega$ for the heat transfer problem in Section~\ref{subsec:SteelRail}.}
    \label{fig:SteelProfileDomain}
\end{figure}

\subsubsection{Results} \label{subsec:SteelResults}

We learn a low-dimensional model of dimension $n=7$ and $n=10$ from noisy data. For $n=7$, active operator inference is applied to select 15 rows from $\Dcal$, which leads to a data matrix $\bfD$ with $\singValMin(\bfD) = 1.661$. For $n=10$, 25 rows are selected resulting in $\singValMin(\bfD) = 0.8713$. Denote by $\hbfx_k^{\text{test}}$ the predicted state at time step $k$ of the low-dimensional model with inferred operators $\hbfA,\hbfB$ corresponding to the test input $\bfu_k^{\text{test}}$. Likewise, let $\tbfx_k^{\text{test}}$ be the low-dimensional state from intrusive model reduction for the same input. Recall that $\tbfx_k^{\text{test}}$ is deterministic while $\hbfx_k^{\text{test}}$ is a random vector.

\begin{figure}
  \begin{subfigure}[b]{0.45\textwidth}
    \begin{center}
{{\Large\resizebox{1.15\columnwidth}{!}{\input{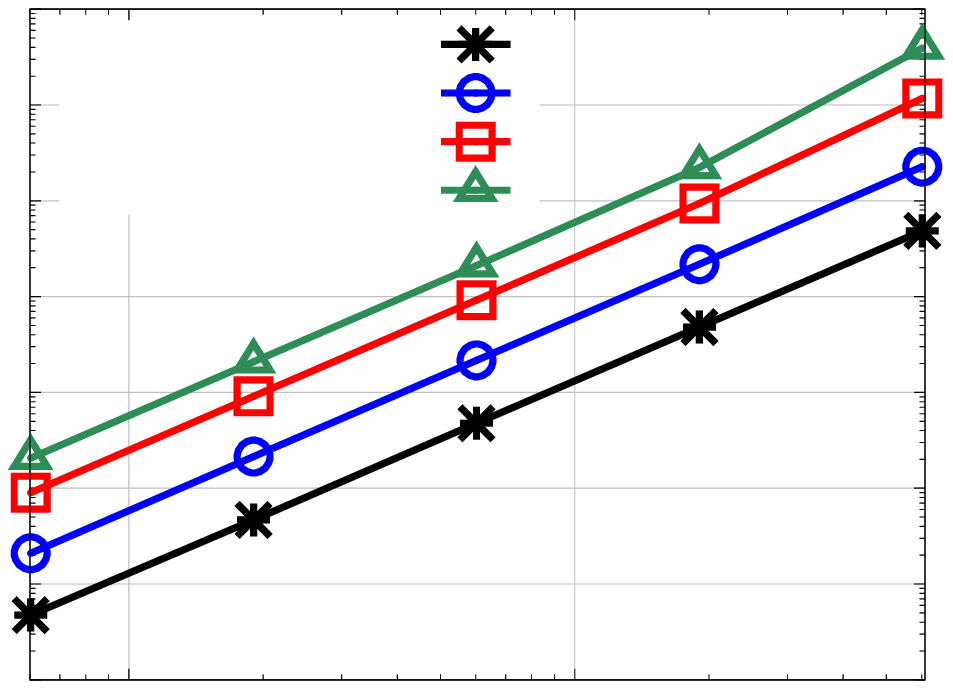}}}}
\end{center}
    \caption{dimension $n = 7$}
  \end{subfigure}
  \quad \quad \quad
  \begin{subfigure}[b]{0.45\textwidth}
    \begin{center}
{{\Large\resizebox{1.15\columnwidth}{!}{\input{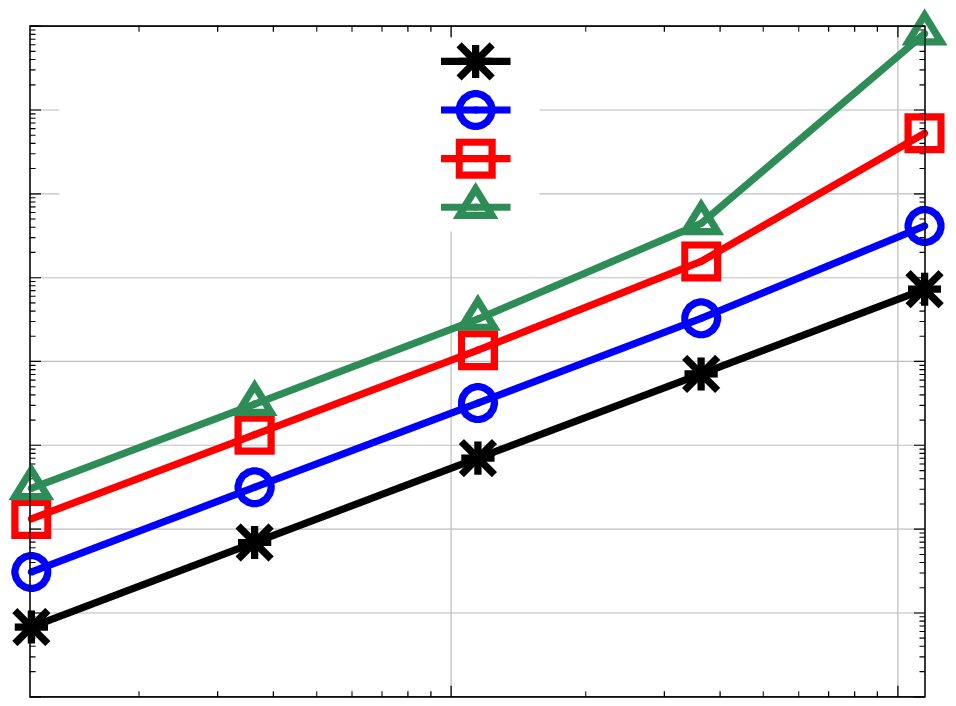}}}}
\end{center}
    \caption{dimension $n=10$}
  \end{subfigure}
  \caption{Cooling of steel profiles (Section~\ref{subsec:SteelRail}). The estimated bias decays by 2 orders of magnitude per 1 order of magnitude decrease in the noise-to-signal ratio in the asymptotic regime. The results are in agreement with Proposition~\ref{proposition:linearExpErr}.} 
  \label{fig:SteelProfile_decay}
\end{figure}
	
Figure~\ref{fig:SteelProfile_decay} shows a Monte Carlo estimate of the bias $\|\Exp[\hbfx_k^{\text{test}} - \tbfx_k^{\text{test}}]\|_2$ as a function of the noise-to-signal ratio $\sigma/\singValMin(\bfD)$ for various time steps $k$. A Monte Carlo estimate of the MSE  $\Exp[\|\hbfx_k^{\text{test}} - \tbfx_k^{\text{test}}\|_2^2]$ is shown in the left panel of Figure~\ref{fig:SteelProfile_MSE}. We use $7.5 \times 10^7$ samples to approximate the expected value with Monte Carlo. The plots illustrate that in the asymptotic regime, when $\sigma/\singValMin(\bfD) \rightarrow 0$, an order decrease in the noise-to-signal ratio leads to a decrease of two orders of magnitude in the approximation of the bias and the MSE, which agrees with Propositions \ref{proposition:linearExpErr} and \ref{prop:nonlinearpolyMSE}. Notice that for $n=10$, the behavior of the bias and the MSE for the largest noise value $\sigma$ is already dominated by constants, rather than the noise-to-signal ratio, which explains the quicker error increase.

\begin{figure}
  \begin{subfigure}[b]{0.45\textwidth}
    \begin{center}
{{\Large\resizebox{1.15\columnwidth}{!}{\input{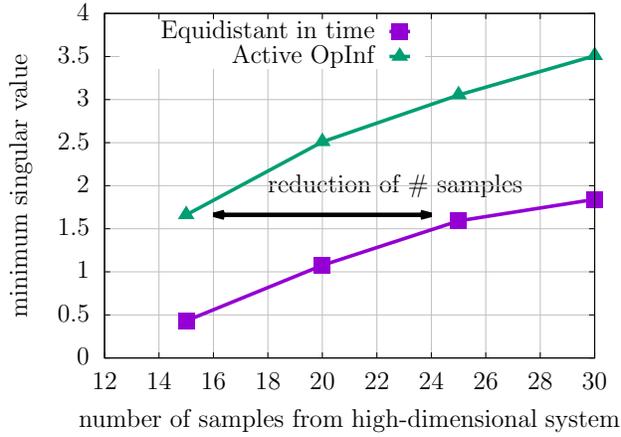}}}}
\end{center}
    \caption{dimension $n=7$}
  \end{subfigure}
  \quad \quad \quad
  \begin{subfigure}[b]{0.45\textwidth}
    \begin{center}
{{\Large\resizebox{1.15\columnwidth}{!}{\input{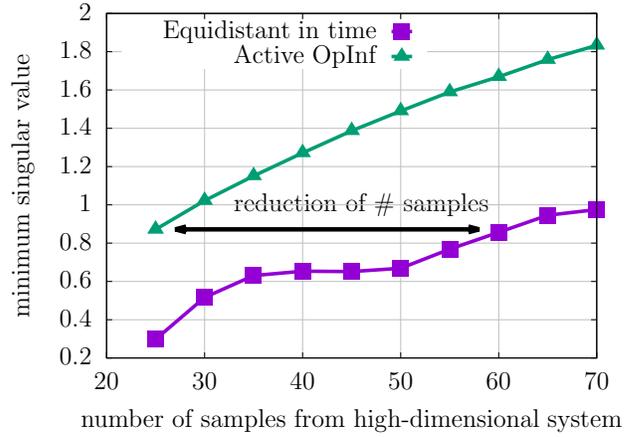}}}}
\end{center}
    \caption{dimension $n=10$}
  \end{subfigure}
  \caption{Cooling of steel profiles (Section~\ref{subsec:SteelRail}). To achieve the same noise-to-signal ratio, active operator inference requires almost 3 times fewer queries to the high-dimensional system than a traditional selection of equidistant-in-time samples.}
  \label{fig:SteelProfile_DOE_singval}
\end{figure}

\begin{figure}
  \begin{subfigure}[b]{0.45\textwidth}
    \begin{center}
{{\Large\resizebox{1.15\columnwidth}{!}{\input{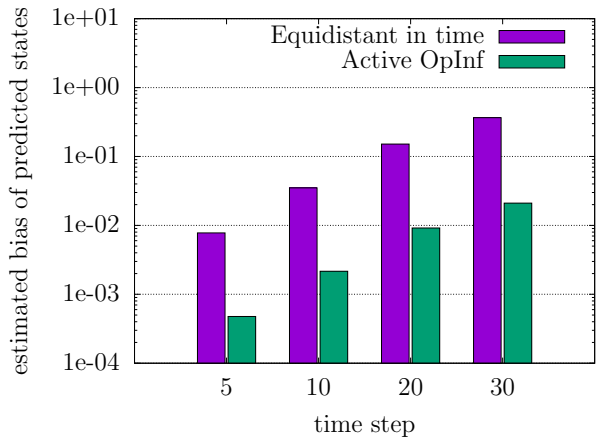}}}}
\end{center}
    \caption{dimension $n=7$}
  \end{subfigure}
  \quad \quad \quad
  \begin{subfigure}[b]{0.45\textwidth}
    \begin{center}
{{\Large\resizebox{1.15\columnwidth}{!}{\input{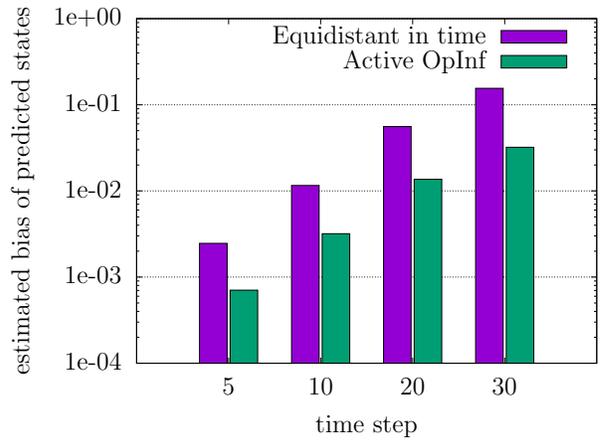}}}}
\end{center}
    \caption{dimension $n=10$}
  \end{subfigure}
  \caption{Cooling of steel profiles (Section~\ref{subsec:SteelRail}). Active operator inference yields predictions which have a lower bias compared to the predictions delivered by sampling equidistantly in time in the dictionary. The reduction in the estimated bias achieved by active operator inference is up to 1.5 orders in magnitude.}
  \label{fig:SteelProfile_DOE}
\end{figure}

We now compare active operator inference, which carefully selects rows of the data matrix $\bfD$ to keep the noise-to-signal ratio low, with a traditional sample selection that queries the high-dimensional system equidistantly in time, i.e., picks columns corresponding to equidistant times from the dictionary $\Dcal$. Figure~\ref{fig:SteelProfile_DOE_singval} compares the minimum singular value of the data matrix for both approaches over the number of queries to the high-dimensional system. Equidistant sampling requires up to 3 times as many queries to the high-dimensional system to achieve the same noise-to-signal ratio as active operator inference in our experiment. The estimate of the bias $\|\Exp[\hbfx_k^{\text{test}} - \tbfx_k^{\text{test}}]\|_2$ for $\sigma = 1\times 10^{-2}$ for the equidistant and active operator inference approach is presented in Figure~\ref{fig:SteelProfile_DOE}. The same comparison for the MSE $\Exp[\|\hbfx_k^{\text{test}} - \tbfx_k^{\text{test}}\|_2^2]$ is shown in the right panel of Figure \ref{fig:SteelProfile_MSE} for $n=10$. The results show that active operator inference yields a reduction in the estimated bias and the MSE of up to 1.5 and 0.5 orders in magnitude, respectively. Lastly, we consider the MSE of the predicted state further in time. The bottom panel of Figure \ref{fig:SteelProfile_MSE} plots the estimated MSE of the predicted state at 10000 time steps for $n=10$ using 10 Monte Carlo samples only. An order decay in the noise standard deviation leads to 2 orders decay in the estimated MSE. For fixed $\sigma$, the model learned through active operator inference achieves a smaller MSE.
	
In Figure~\ref{fig:SteelProfile_DOE_States} we visualize the 15 high-dimensional states corresponding to the rows of $\Dcal$ selected according to the design of experiments schemes we compare for $n=7$. The respective inputs are not shown. By examining the segments of the steel profile boundary with Robin condition, the equidistant scheme tends to select more states with lower temperature at the boundary, many of which correspond to later time steps. In contrast, active operator inference selects more states at the beginning of the cooling process.

\begin{figure}
  \begin{subfigure}[b]{0.45\textwidth}
    \begin{center}
{{\Large\resizebox{1.15\columnwidth}{!}{\input{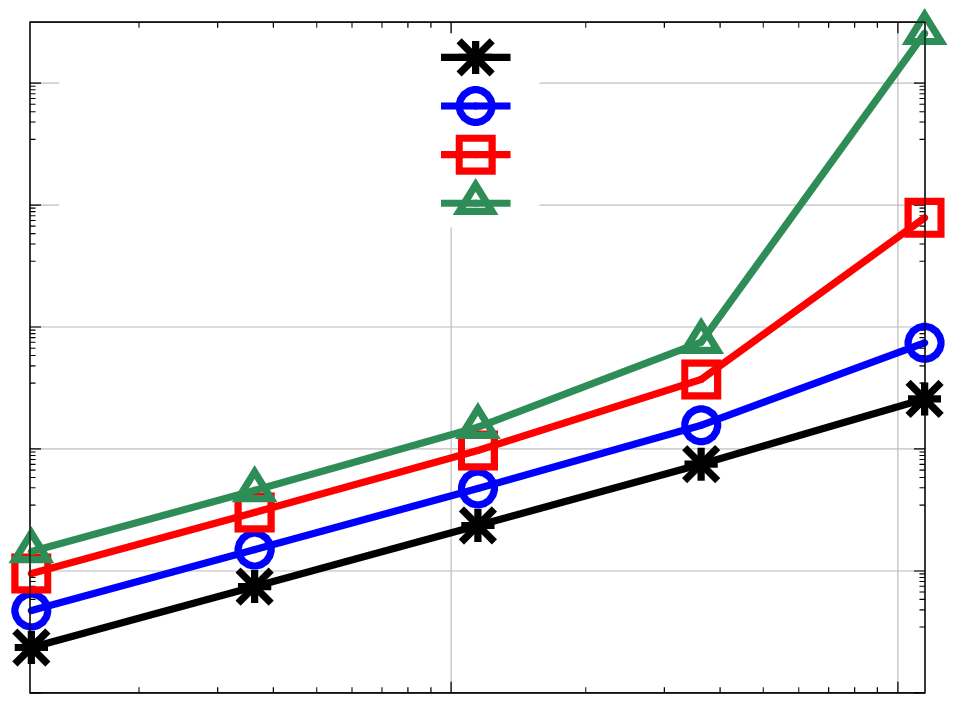}}}}
\end{center}
    \caption{decay of MSE, $n=10$}
  \end{subfigure}
  \quad \quad \quad
  \begin{subfigure}[b]{0.45\textwidth}
    \begin{center}
{{\Large\resizebox{1.15\columnwidth}{!}{\input{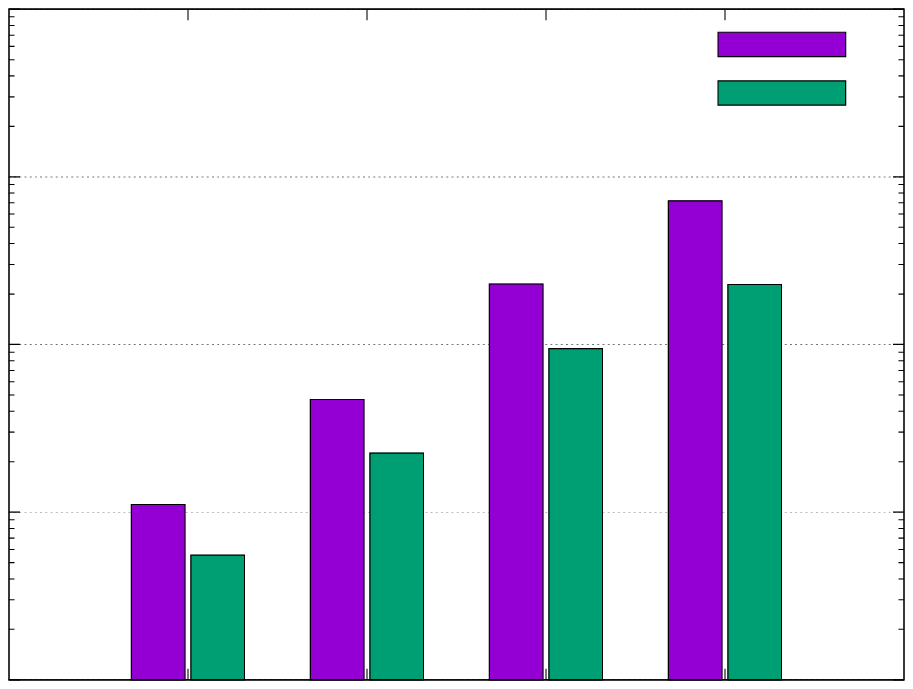}}}}
\end{center}
    \caption{equidistant vs Active OpInf, $n=10$}
  \end{subfigure}
 \begin{center}
  \begin{subfigure}[b]{0.45\textwidth}
    \begin{center}
{{\Large\resizebox{1.15\columnwidth}{!}{\input{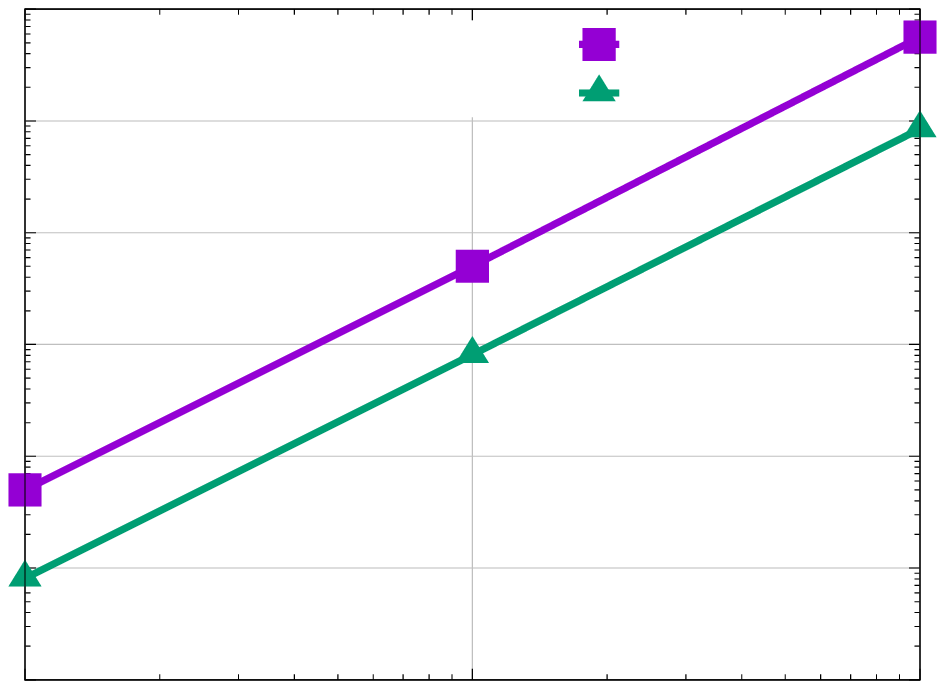}}}}
\end{center}
    \caption{time step 10000}
  \end{subfigure}
 \end{center}
  \caption{Cooling of steel profiles (Section~\ref{subsec:SteelRail}). In the asymptotic regime, the estimated MSE decays by 2 orders of magnitude per 1 order of magnitude decrease in the noise-to-signal ratio, which agrees with Proposition~\ref{prop:nonlinearpolyMSE}. The predictions obtained with active operator inference have a lower MSE than those obtained from equidistant-in-time samples.}
  \label{fig:SteelProfile_MSE}
\end{figure}

\begin{figure}
  \begin{subfigure}[b]{0.22\textwidth}
    \begin{center}
\includegraphics[width=0.45\textwidth,trim={6cm 1cm 6cm 1cm},clip]{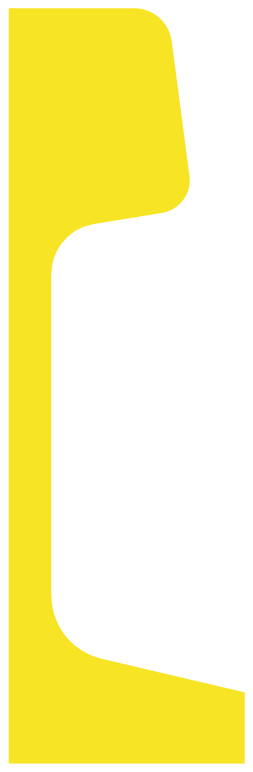}
\includegraphics[width=0.45\textwidth,trim={6cm 1cm 6cm 1cm},clip]{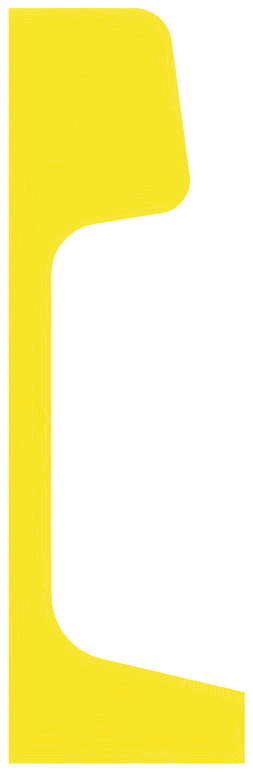}
\end{center}
    \caption{State 1}
  \end{subfigure}
  \hspace{0.7em}
  \begin{subfigure}[b]{0.22\textwidth}
    \begin{center}
\includegraphics[width=0.45\textwidth,trim={6cm 1cm 6cm 1cm},clip]{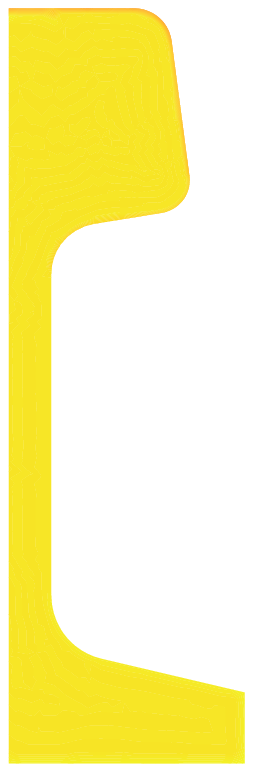}
\includegraphics[width=0.45\textwidth,trim={6cm 1cm 6cm 1cm},clip]{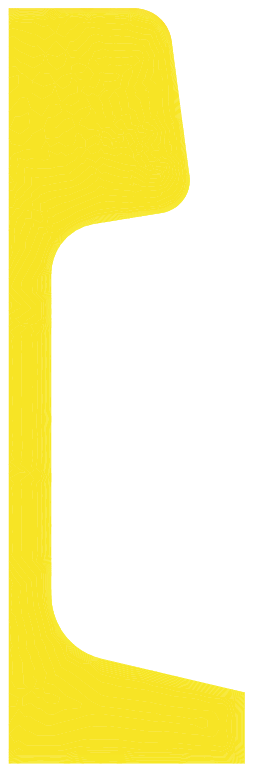}
\end{center}
    \caption{State 2}
  \end{subfigure}
  \hspace{0.7em}
  \begin{subfigure}[b]{0.22\textwidth}
    \begin{center}
\includegraphics[width=0.45\textwidth,trim={6cm 1cm 6cm 1cm},clip]{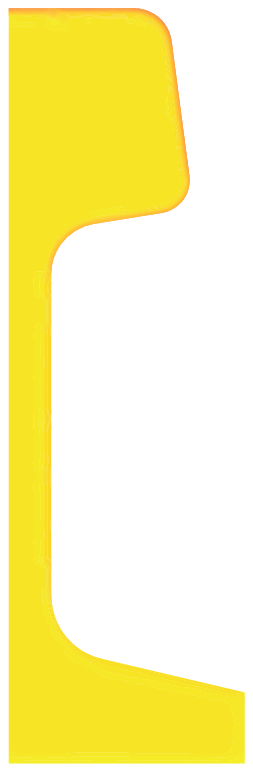}
\includegraphics[width=0.45\textwidth,trim={6cm 1cm 6cm 1cm},clip]{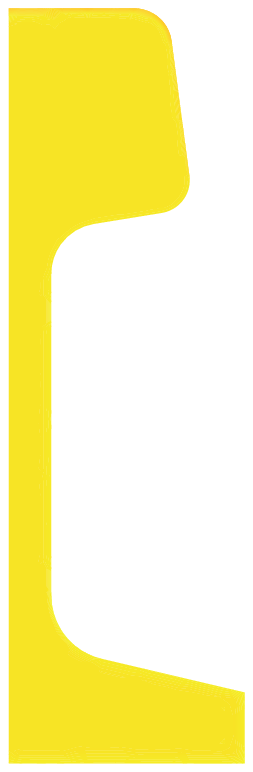}
\end{center}
    \caption{State 3}
  \end{subfigure}
  \hspace{0.7em}
  \begin{subfigure}[b]{0.22\textwidth}
    \begin{center}
\includegraphics[width=0.45\textwidth,trim={6cm 1cm 6cm 1cm},clip]{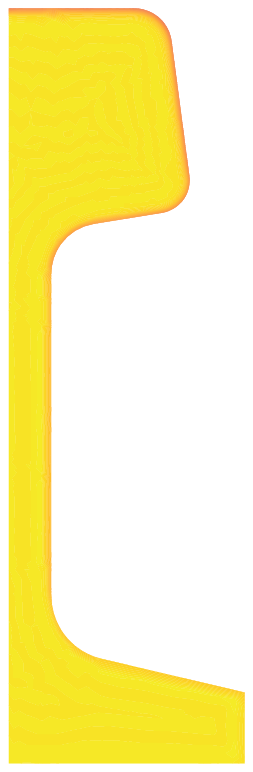}
\includegraphics[width=0.45\textwidth,trim={6cm 1cm 6cm 1cm},clip]{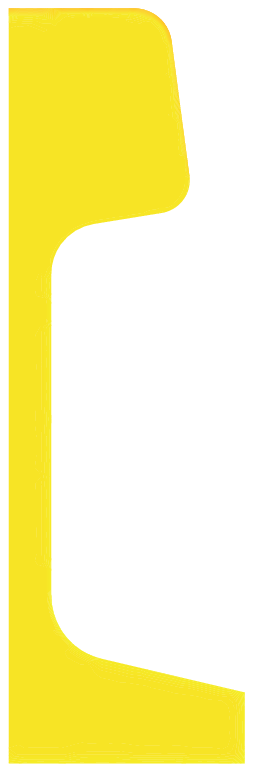}
\end{center}
    \caption{State 4}
  \end{subfigure}
  
   \begin{subfigure}[b]{0.22\textwidth}
    \begin{center}
\includegraphics[width=0.45\textwidth,trim={6cm 1cm 6cm 1cm},clip]{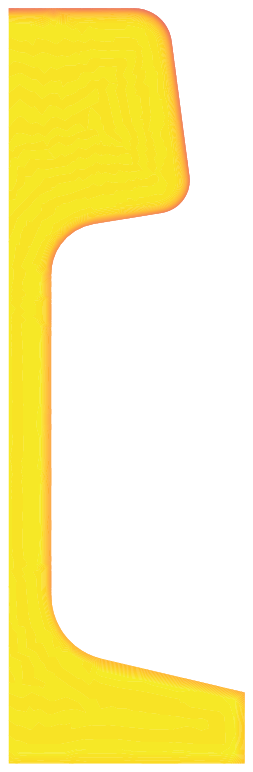}
\includegraphics[width=0.45\textwidth,trim={6cm 1cm 6cm 1cm},clip]{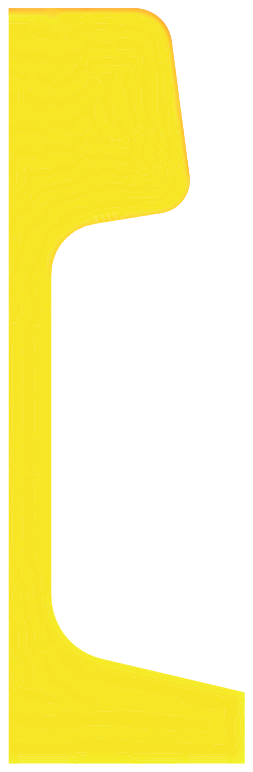}
\end{center}
    \caption{State 5}
  \end{subfigure}
  \hspace{0.7em}
  \begin{subfigure}[b]{0.22\textwidth}
    \begin{center}
\includegraphics[width=0.45\textwidth,trim={6cm 1cm 6cm 1cm},clip]{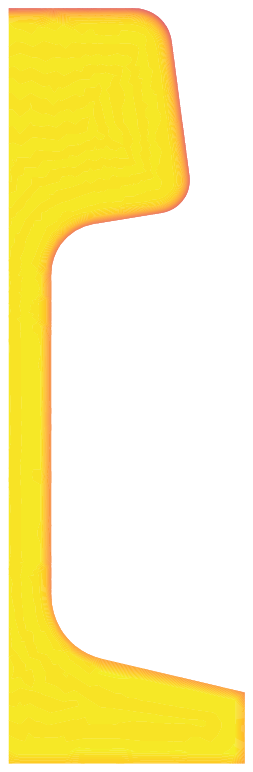}
\includegraphics[width=0.45\textwidth,trim={6cm 1cm 6cm 1cm},clip]{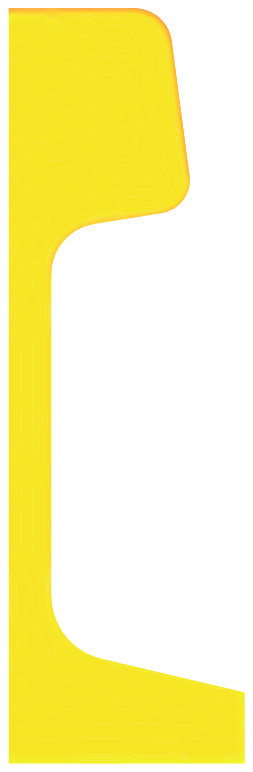}
\end{center}
    \caption{State 6}
  \end{subfigure}
  \hspace{0.7em}
  \begin{subfigure}[b]{0.22\textwidth}
    \begin{center}
\includegraphics[width=0.45\textwidth,trim={6cm 1cm 6cm 1cm},clip]{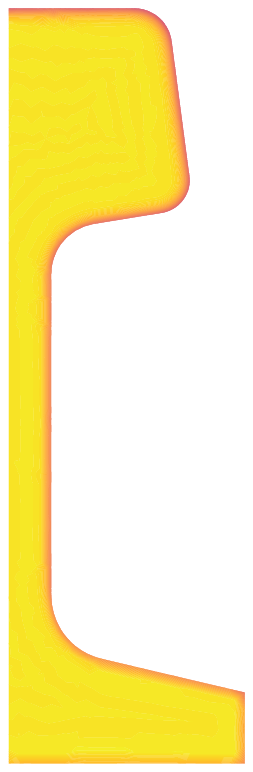}
\includegraphics[width=0.45\textwidth,trim={6cm 1cm 6cm 1cm},clip]{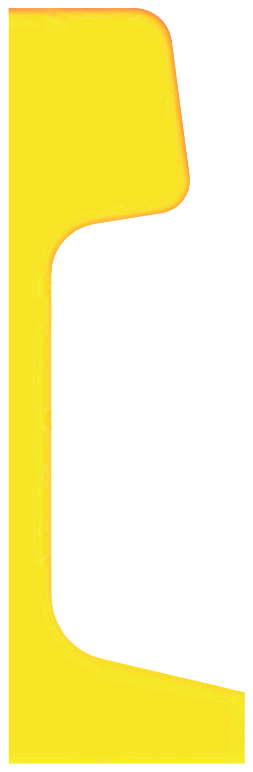}
\end{center}
    \caption{State 7}
  \end{subfigure}
  \hspace{0.7em}
  \begin{subfigure}[b]{0.22\textwidth}
    \begin{center}
\includegraphics[width=0.45\textwidth,trim={6cm 1cm 6cm 1cm},clip]{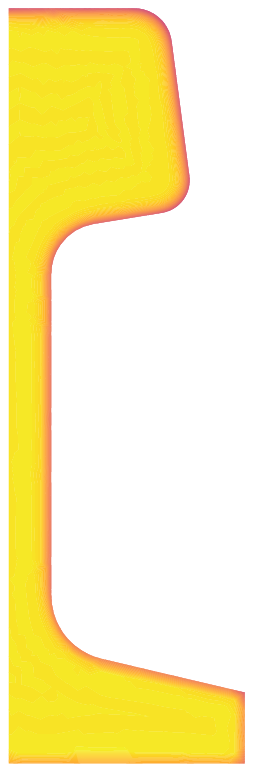}
\includegraphics[width=0.45\textwidth,trim={6cm 1cm 6cm 1cm},clip]{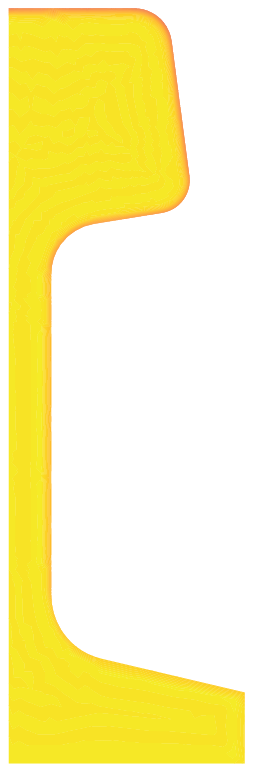}
\end{center}
    \caption{State 8}
  \end{subfigure}

   \begin{subfigure}[b]{0.22\textwidth}
    \begin{center}
\includegraphics[width=0.45\textwidth,trim={6cm 1cm 6cm 1cm},clip]{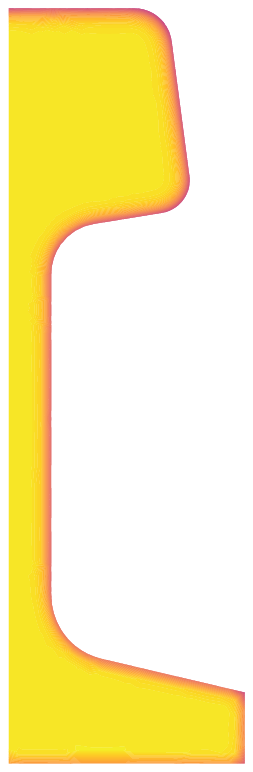}
\includegraphics[width=0.45\textwidth,trim={6cm 1cm 6cm 1cm},clip]{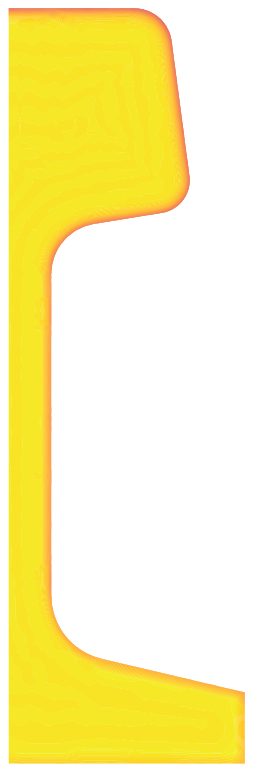}
\end{center}
    \caption{State 9}
  \end{subfigure}
  \hspace{0.7em}
  \begin{subfigure}[b]{0.22\textwidth}
    \begin{center}
\includegraphics[width=0.45\textwidth,trim={6cm 1cm 6cm 1cm},clip]{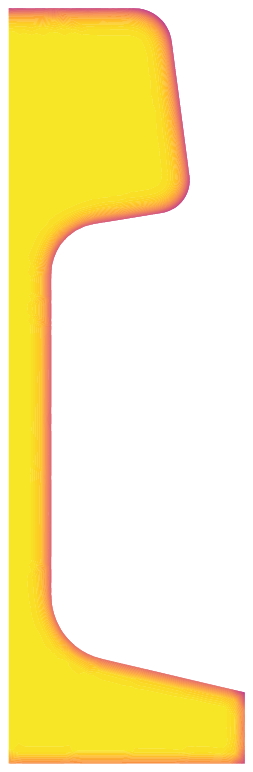}
\includegraphics[width=0.45\textwidth,trim={6cm 1cm 6cm 1cm},clip]{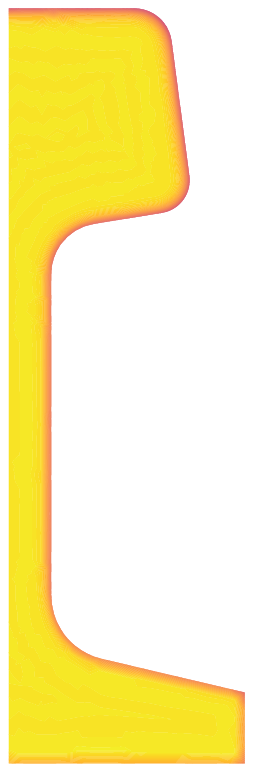}
\end{center}
    \caption{State 10}
  \end{subfigure}
  \hspace{0.7em}
  \begin{subfigure}[b]{0.22\textwidth}
    \begin{center}
\includegraphics[width=0.45\textwidth,trim={6cm 1cm 6cm 1cm},clip]{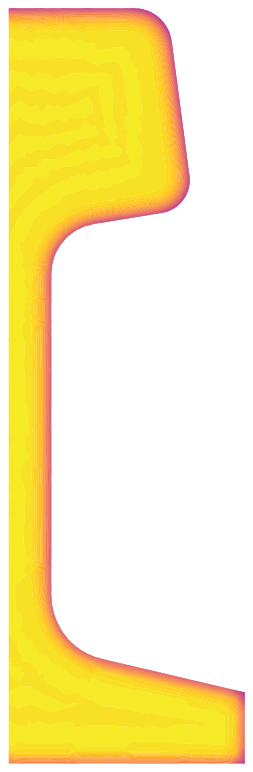}
\includegraphics[width=0.45\textwidth,trim={6cm 1cm 6cm 1cm},clip]{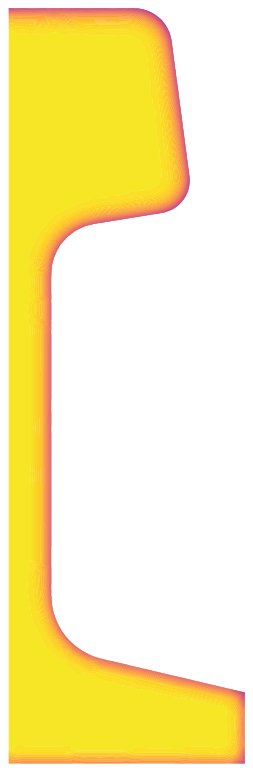}
\end{center}
    \caption{State 11}
  \end{subfigure}
  \hspace{0.7em}
  \begin{subfigure}[b]{0.22\textwidth}
    \begin{center}
\includegraphics[width=0.45\textwidth,trim={6cm 1cm 6cm 1cm},clip]{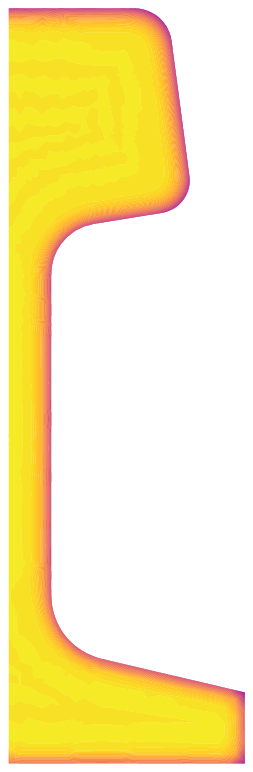}
\includegraphics[width=0.45\textwidth,trim={6cm 1cm 6cm 1cm},clip]{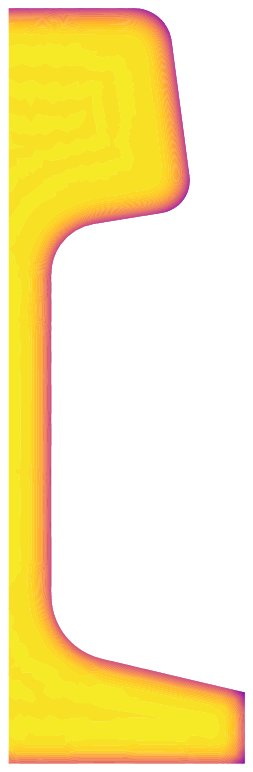}
\end{center}
    \caption{State 12}
  \end{subfigure}

   \begin{subfigure}[b]{0.22\textwidth}
    \begin{center}
\includegraphics[width=0.45\textwidth,trim={6cm 1cm 6cm 1cm},clip]{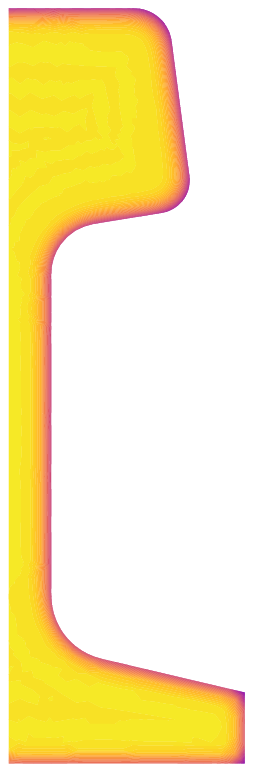}
\includegraphics[width=0.45\textwidth,trim={6cm 1cm 6cm 1cm},clip]{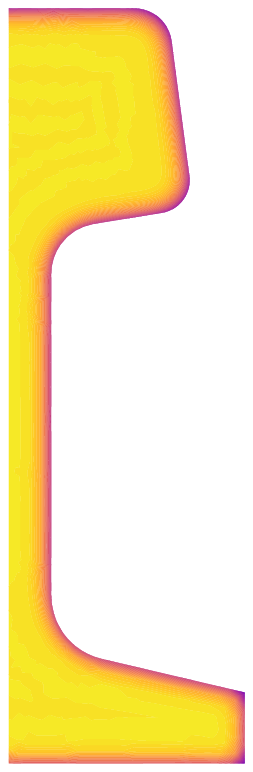}
\end{center}
    \caption{State 13}
  \end{subfigure}
  \hspace{0.7em}
  \begin{subfigure}[b]{0.22\textwidth}
    \begin{center}
\includegraphics[width=0.45\textwidth,trim={6cm 1cm 6cm 1cm},clip]{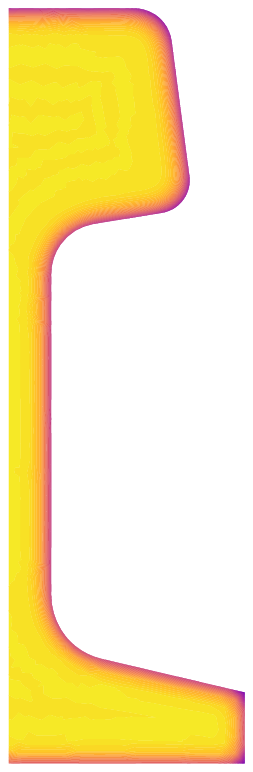}
\includegraphics[width=0.45\textwidth,trim={6cm 1cm 6cm 1cm},clip]{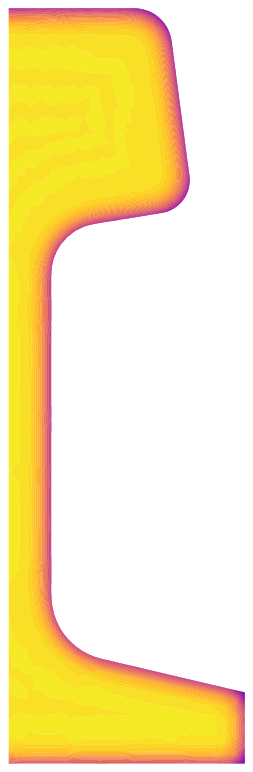}
\end{center}
    \caption{State 14}
  \end{subfigure}
  \hspace{-3em}
  \begin{subfigure}[b]{0.5\textwidth}
    \begin{center}
\includegraphics[width=0.2\textwidth,trim={6cm 1cm 6cm 1cm},clip]{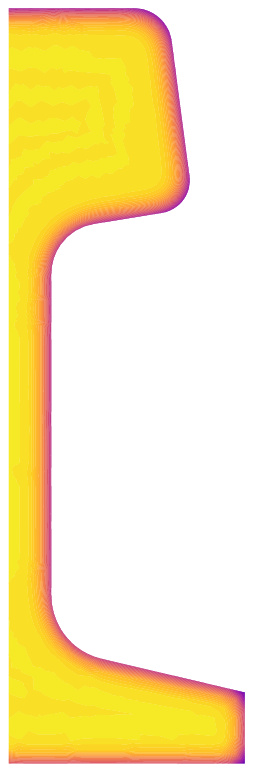}
\hspace{-1em}
\includegraphics[width=0.385\textwidth,trim={8cm 1cm 1cm 1cm},clip]{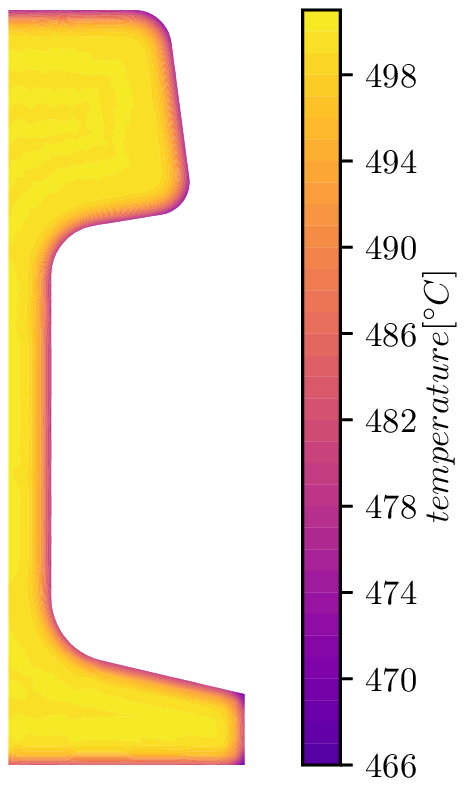}
\end{center}
    \caption{State 15}
    
  \end{subfigure}
  \caption{Cooling of steel profiles (Section~\ref{subsec:SteelRail}). High-dimensional states selected by sampling equidistant times (left) and by active operator inference (right) from the dictionary for $n=7$. For equidistant sampling, a majority of the states selected have cooler temperatures at the domain boundary with Robin condition. In contrast, active operator inference selects more states at the beginning of the cooling process, which leads to more accurate models in our experiments.}
  \label{fig:SteelProfile_DOE_States}
\end{figure}

\subsection{Diffusive Lotka-Volterra model for population dynamics of fish species} \label{subsec:DiffLotkaVolterra} Section~\ref{subsec:LotkaVolterraModel} discusses the model and the problem setup while Section~\ref{subsec:LotkaVolterraResults} summarizes the results of the numerical experiments.

\begin{figure}
  \begin{subfigure}[b]{0.32\textwidth}
    \begin{center}
\hspace*{-0.5cm}\includegraphics[width=1.3\textwidth]{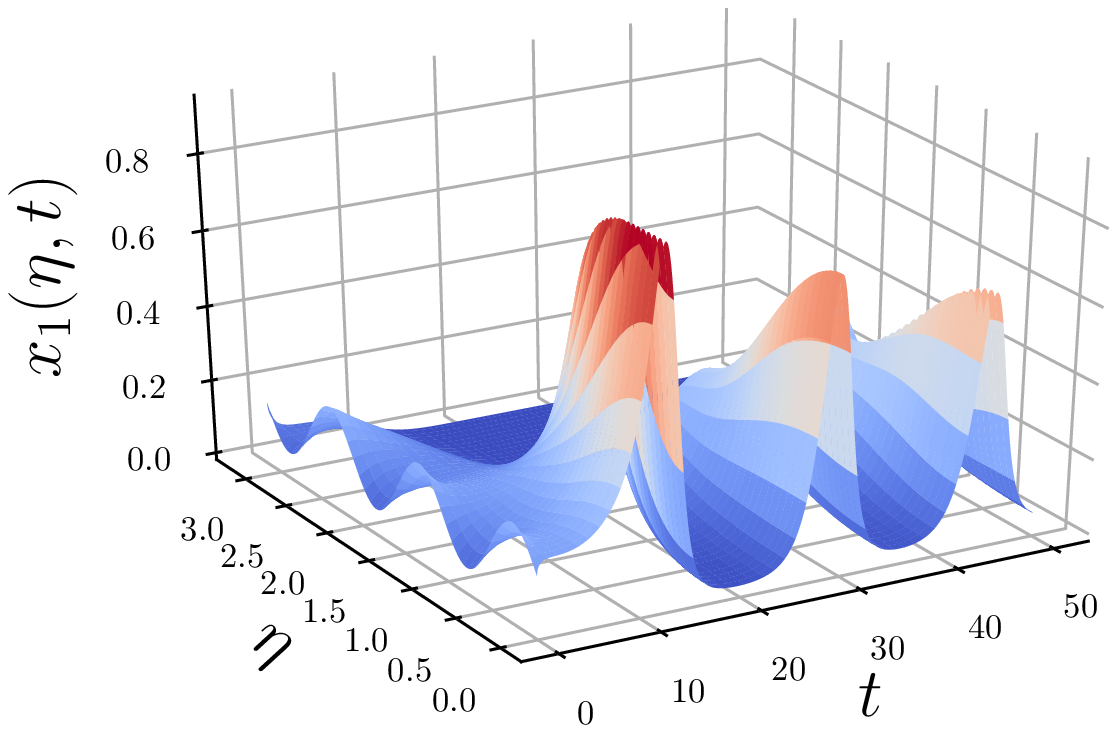}
\end{center}
  \end{subfigure}
  \begin{subfigure}[b]{0.32\textwidth}
   \begin{center}
\hspace*{-0.5cm}\includegraphics[width=1.3\textwidth]{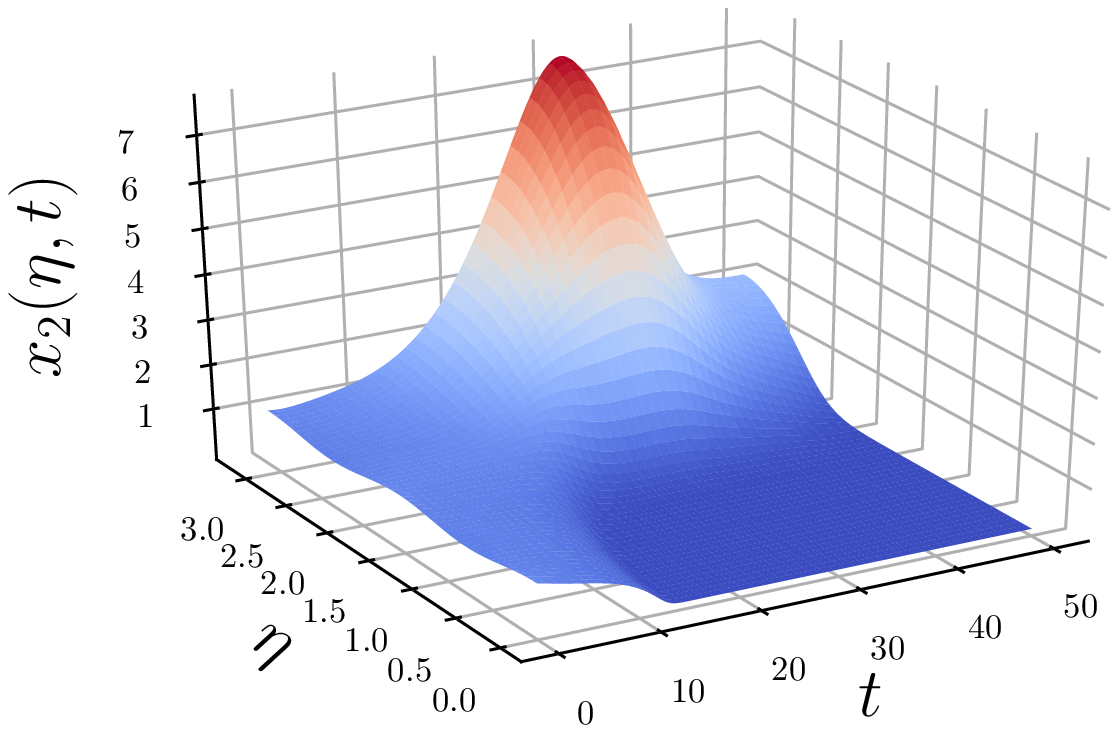}
\end{center}
  \end{subfigure}
  \begin{subfigure}[b]{0.32\textwidth}
  \begin{center}
\hspace*{-0.5cm}\includegraphics[width=1.3\textwidth]{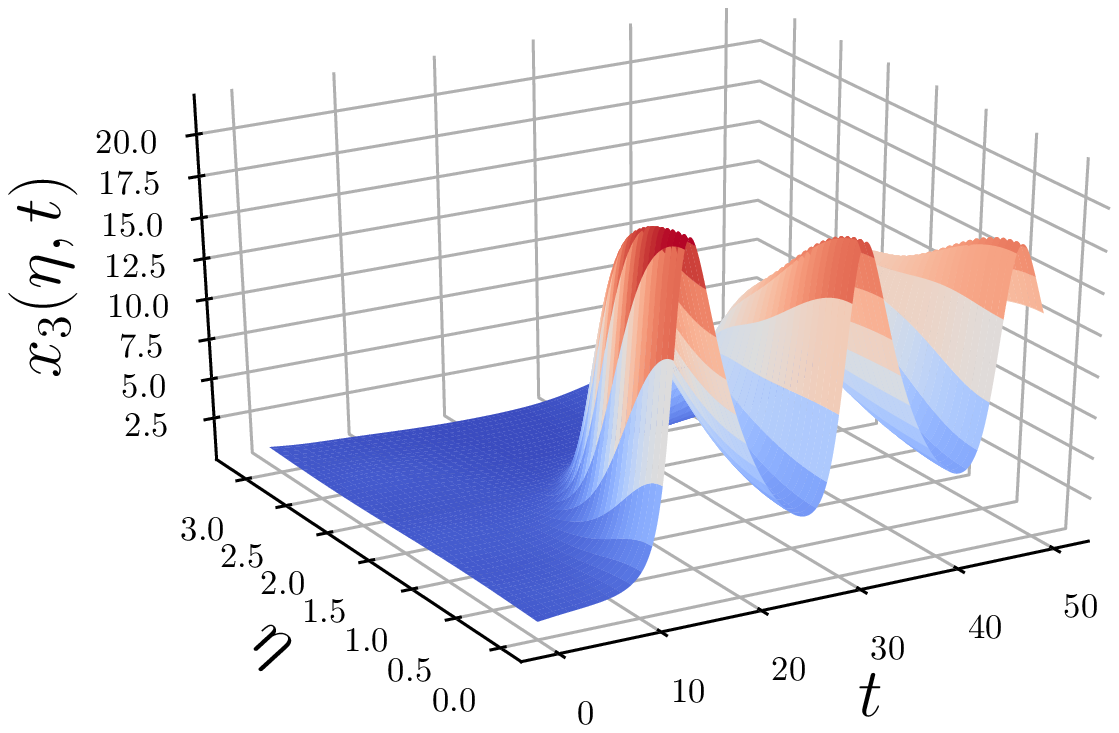}
\end{center}
  \end{subfigure}
  \caption{High-dimensional system trajectory of the population dynamics of fish species (Section~\ref{subsec:DiffLotkaVolterra}) with the test initial condition.}
  \label{fig:LotkaVolterra_FOM}
\end{figure}

\begin{figure}
  \begin{subfigure}[b]{0.45\textwidth}
    \begin{center}
{{\Large\resizebox{1.15\columnwidth}{!}{\input{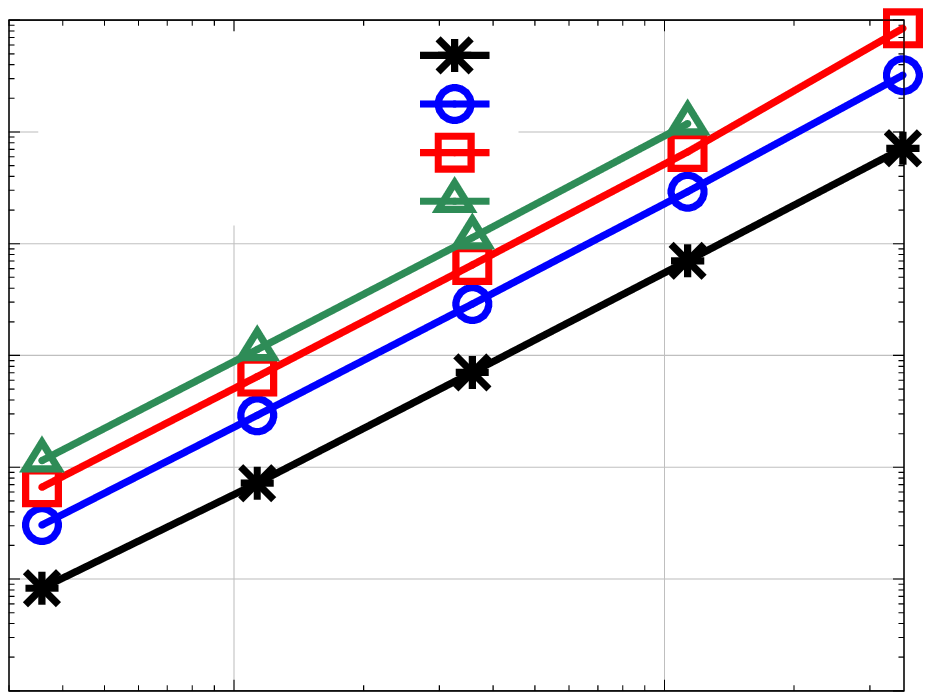}}}}
\end{center}
    \caption{dimension $n=12$}
  \end{subfigure}
  \hspace{1em}
  \begin{subfigure}[b]{0.45\textwidth}
   \begin{center}
{{\Large\resizebox{1.15\columnwidth}{!}{\input{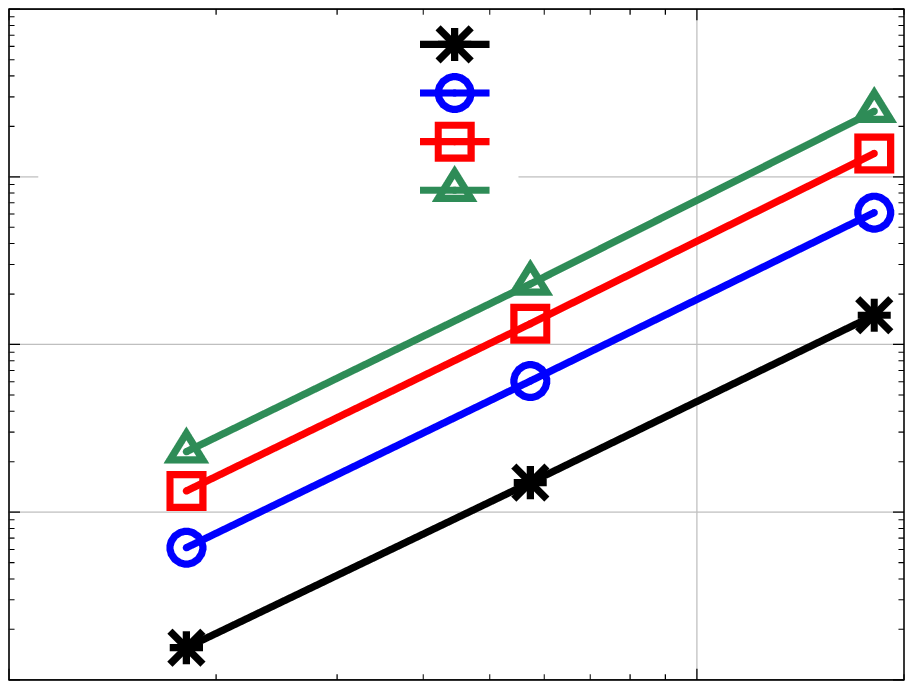}}}}
\end{center}
    \caption{dimension $n=15$}
  \end{subfigure}
  \caption{Population dynamics of fish species (Section~\ref{subsec:DiffLotkaVolterra}). For this quadratic system, an order of magnitude decay in the noise-to-signal ratio causes a two orders of magnitude decay in the estimated bias, demonstrating the bound of Proposition~\ref{prop:nonlinearpoly}.}
  \label{fig:LotkaVolterra_decay}
\end{figure}

\subsubsection{Model description} \label{subsec:LotkaVolterraModel}

Consider the population dynamics of three species of fish species in the Danube river \cite{Kmet1994}. At time $t > 0$ and distance $\eta$ from the mouth of the river, set $x_1(\eta,t), x_2(\eta,t), x_3(\eta,t)$ to be the density of forage fishes, German carp, and predators, respectively. For $\eta \in [0,\pi]$ and $t \in [0,T]$, a diffusive Lotka-Volterra model that describes the interaction between the species is given by 
    \begin{align} \label{eq:LotkaVolterraCont}
        \frac{\partial x_1 (\eta,t)}{\partial t} & = d_1 \frac{\partial^2 x_1(\eta,t)}{\partial \eta^2} + x_1(a_1 - a_2 x_2 - a_3 x_3 ) \\
        \frac{\partial x_2(\eta,t)}{\partial t} & = d_2 \frac{\partial^2 x_2(\eta,t)}{\partial \eta^2} + x_2(a_4 - a_5 x_3) \notag \\
        \frac{\partial x_3(\eta,t)}{\partial t} & = d_3 \frac{\partial^2 x_3 (\eta,t)}{\partial \eta^2} + x_3(a_6 x_1 + a_7 x_2 - a_8) \notag
    \end{align}
    subject to the Neumann boundary condition $\frac{\partial x_i (0,t)}{\partial \eta} = \frac{\partial x_i (\pi,t)}{\partial \eta} = 0$  for $i=1,2,3$. The values of the constants are $a_1 = 1.01, a_2 = 0.93, a_3 = 0.1, a_4 = 0.19, a_5 = 0.2, a_6 = 1, a_7 = 0.05, a_8 = 0.2, d_1 = 0.01, d_2 = 0.03, d_3 =0.009$. 
    
The differential equation \eqref{eq:LotkaVolterraCont} is spatially discretized at 100 equidistant points in $[0,\pi]$. To temporally discretize \eqref{eq:LotkaVolterraCont}, we apply the Crank-Nicolson finite difference scheme to the diffusion term and evaluate the nonlinear term explicitly in time with step size $\delta t = 0.01$, resulting in an implicit-explicit scheme. This leads to the autonomous discrete system \eqref{eq:NoisyHighDimSys} and \eqref{eq:PolySys} with $\ell=2$ where $\bfx_k \in \R^N, N =300$.
 
Set $x_3^* = a_4/a_5$, $x_2^* =  (a_1 a_5 - a_3 a_4)/(a_2 a_5)$, $x_1^* =  (a_8 - a_7x_2^*)/a_6$. Observe that $(x_1,x_2,x_3) = (x_1^*,x_2^*,x_3^*) $ is a spatially homogeneous equilibrium point of \eqref{eq:LotkaVolterraCont}. The basis matrix $\bfV$ is obtained from snapshots $\bfx_k^{\text{basis}}$ of the high-dimensional system initiated at the following 6 conditions $x_{1,i}^{\text{basis}}(\eta,0) = x_1^* + \gamma_{1i}\sin(6 \gamma_{2i} \eta)/10, x_{2,i}^{\text{basis}}(\eta,0)=  x_2^* + \gamma_{3i} \cos(4 \gamma_{4i} \eta)/10, x_{3,i}^{\text{basis}}(\eta,0) =  x_3^* + \gamma_{5i} \sin(2 \gamma_{6i} \eta)/10, i = 1,\dots,6$, where for each $i$, $\gamma_{1i},\dots,\gamma_{6i}$ are realizations of a uniform random variable on $[0,1]$. For each initial condition, the high-dimensional system is simulated until $T=50$ resulting in 30000 elements in $\Dcal$. These initial states represent perturbations around the spatially homogeneous equilibrium. The standard deviations of the noise $\sigma$ are 5 equidistant values in the logarithm scale between $1\times 10^{-4}$ and $1\times 10^{-2}$. For prediction, the initial condition we use is given by $x_1^{\text{test}}(\eta,0) = x_1^* + \sin(6\eta)/10$, $x_2^{\text{test}}(\eta,0) = x_2^* + \cos(4 \eta)/10$, and $x_3^{\text{test}}(\eta,0) = x_3^* + \sin(2\eta)/10$. Figure~\ref{fig:LotkaVolterra_FOM} shows the high-dimensional state trajectories $x_1^{\text{test}}(\eta,t), x_2^{\text{test}}(\eta,t), x_3^{\text{test}}(\eta,t)$ for $t \in [0,50].$

\subsubsection{Results} \label{subsec:LotkaVolterraResults}

\begin{figure}
  \begin{subfigure}[b]{0.45\textwidth}
    \begin{center}
{{\Large\resizebox{1.15\columnwidth}{!}{\input{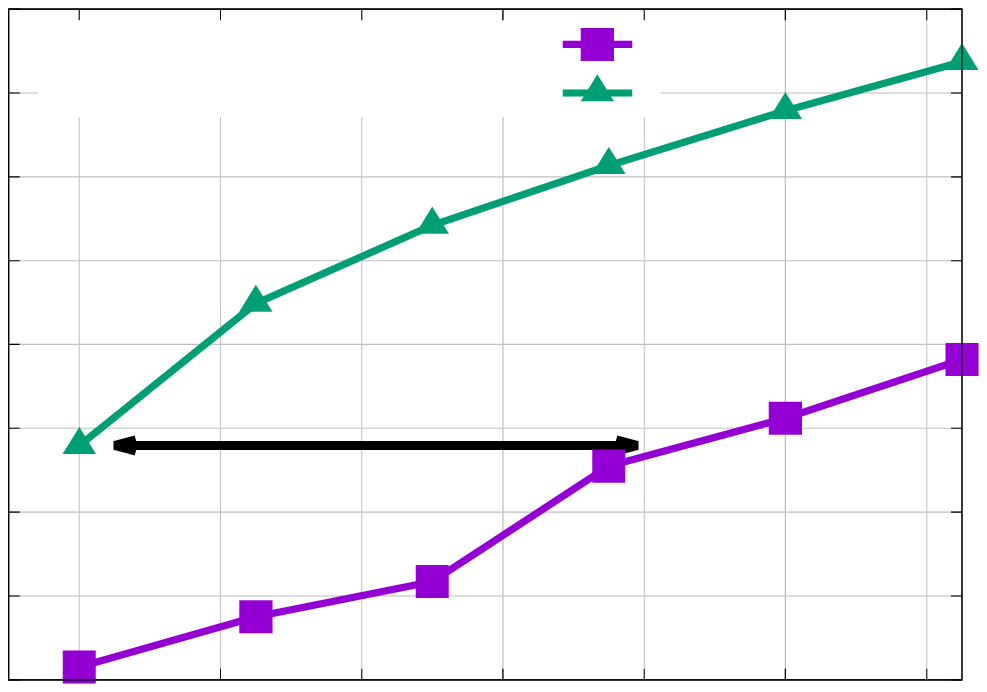}}}}
\end{center}
    \caption{dimension $n=12$}
  \end{subfigure}
  \hspace{1em}
  \begin{subfigure}[b]{0.45\textwidth}
   \begin{center}
{{\Large\resizebox{1.15\columnwidth}{!}{\input{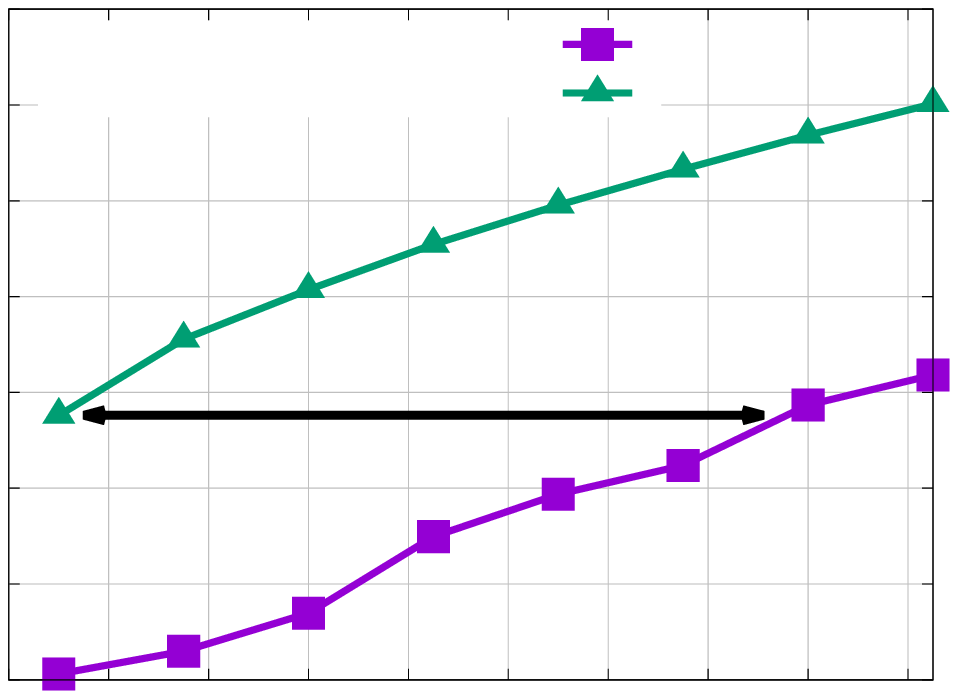}}}}
\end{center}
    \caption{dimension $n=15$}
  \end{subfigure}
  \caption{Population dynamics of fish species (Section~\ref{subsec:DiffLotkaVolterra}). Active operator inference requires up to 2 times fewer queries to the high-dimensional system for generating data than a traditional equidistant-in-time sampling process.}
  \label{fig:LotkaVolterra_DOE_singval}
\end{figure}

A low-dimensional model is inferred from noisy data for dimensions $n \in \{12,15\}$. Active operator inference is applied to select 100 rows for $n = 12$, leading to a data matrix with $\singValMin(\bfD) = 0.2794$. For $n=15$, 150 rows are selected, which results in $\singValMin(\bfD)=0.0552$.  Monte Carlo estimates of the bias and MSE are shown in Figure~\ref{fig:LotkaVolterra_decay} and Figure~\ref{fig:LotkaVolterra_MSE}, respectively. The number of Monte Carlo samples used is $5 \times 10^7$. The plots are consistent with the analysis in Proposition~\ref{prop:nonlinearpoly} and \ref{prop:nonlinearpolyMSE}, particularly for quadratic systems, since we observe that an order decay in the noise-to-signal ratio leads to two orders decay in the estimated bias and MSE. The missing value in Figure~\ref{fig:LotkaVolterra_decay}(a) represents a large bias in $\hbfx_k^{\text{test}}$ which we do not plot and is caused by the accumulation of errors in the learned reduced operators over time.  It represents a non-asymptotic regime in which the constants in the bias dominate the behavior of the noise-to-signal ratio. In Figure~\ref{fig:LotkaVolterra_decay}(b) and \ref{fig:LotkaVolterra_MSE}, results for larger values of $\sigma$ are not shown in the plot for the same reason.

We now compare active operator inference to a traditional equidistant-in-time sampling from the dictionary. The minimum singular value of the data matrix resulting from both approaches is compared in Figure~\ref{fig:LotkaVolterra_DOE_singval}. In this example, active operator inference reduces the number of times the high-dimensional system is queried by up to roughly a factor of two. The estimated bias and the MSE for both approaches at $\sigma = 1\times 10^{-3}$ is shown in Figure~\ref{fig:LotkaVolterra_DOE} and in the right panel of Figure \ref{fig:LotkaVolterra_MSE}. We also plot the estimated MSE at $T=50$ using 10 Monte Carlo samples for $n=15$ in the bottom panel of Figure \ref{fig:LotkaVolterra_MSE}.  Results are not plotted if the corresponding models numerically led to unstable behavior with unbounded errors. Active operator inference provides reasonable numerical predictions in all cases, whereas equidistant sampling quickly leads to models that show unstable behavior. This behavior is amplified for increasing dimension $\nr$. Overall, the results indicate that for polynomially nonlinear systems it becomes even more important than for linear systems to carefully query the high-dimensional system.

\begin{figure}
  \begin{subfigure}[b]{0.45\textwidth}
    \begin{center}
{{\Large\resizebox{1.15\columnwidth}{!}{\input{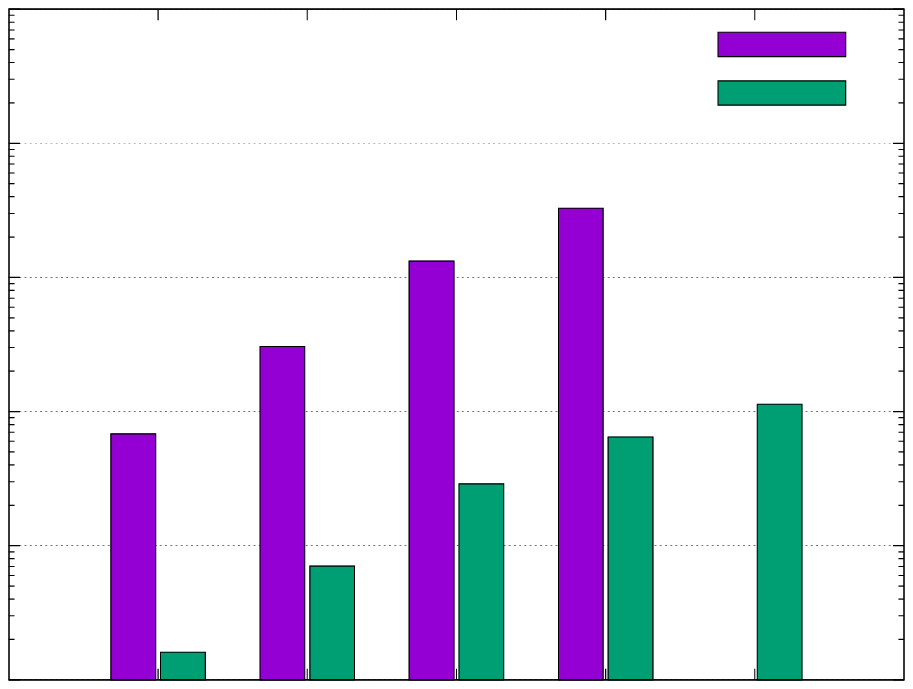}}}}
\end{center}
    \caption{dimension $n=12$}
  \end{subfigure}
  \hspace{1em}
  \begin{subfigure}[b]{0.45\textwidth}
   \begin{center}
{{\Large\resizebox{1.15\columnwidth}{!}{\input{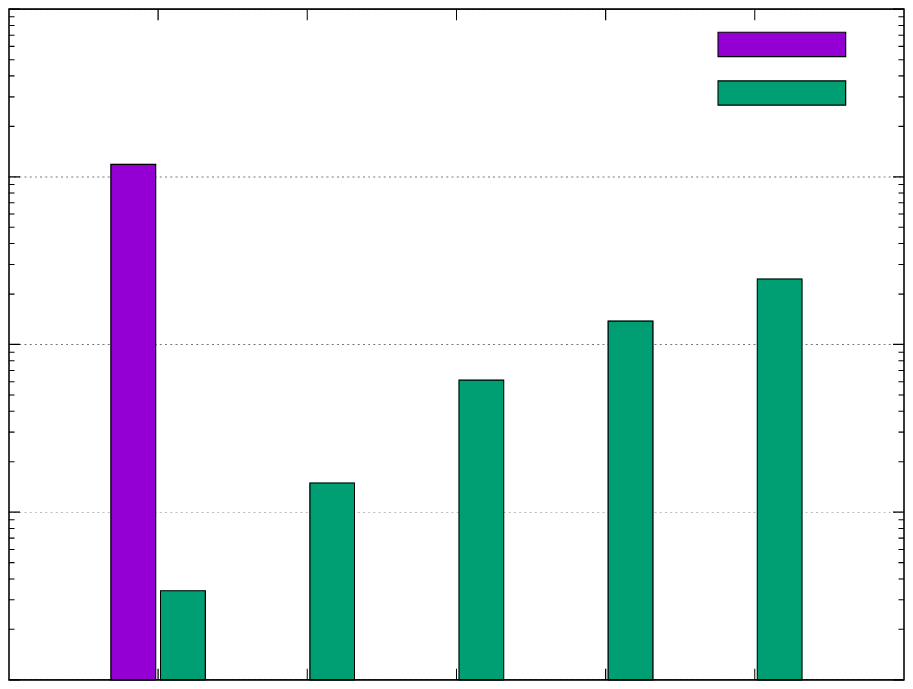}}}}
\end{center}
    \caption{dimension $n=15$}
  \end{subfigure}
  \caption{Population dynamics of fish species (Section~\ref{subsec:DiffLotkaVolterra}). Selecting the data matrix by sampling equidistant in time quickly leads to numerical instabilities in the learned models while the selection obtained with active operator inference leads to models that show stable and accurate behavior in this example.}
  \label{fig:LotkaVolterra_DOE}
\end{figure}

\begin{figure}
  \begin{subfigure}[b]{0.45\textwidth}
    \begin{center}
{{\Large\resizebox{1.15\columnwidth}{!}{\input{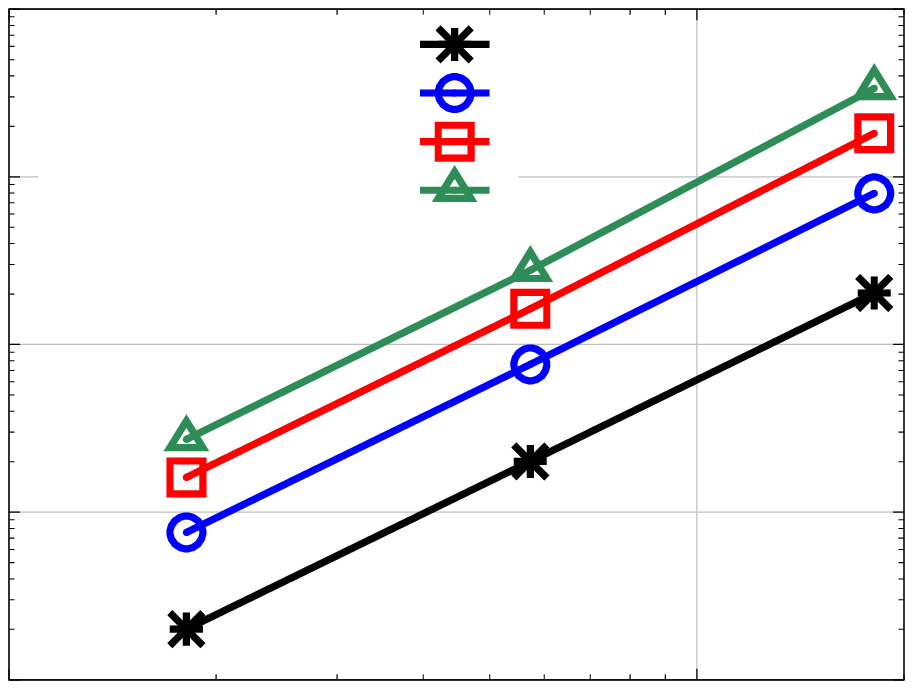}}}}
\end{center}
    \caption{decay of MSE, $n=15$}
  \end{subfigure}
  \hspace{1em}
  \begin{subfigure}[b]{0.45\textwidth}
   \begin{center}
{{\Large\resizebox{1.15\columnwidth}{!}{\input{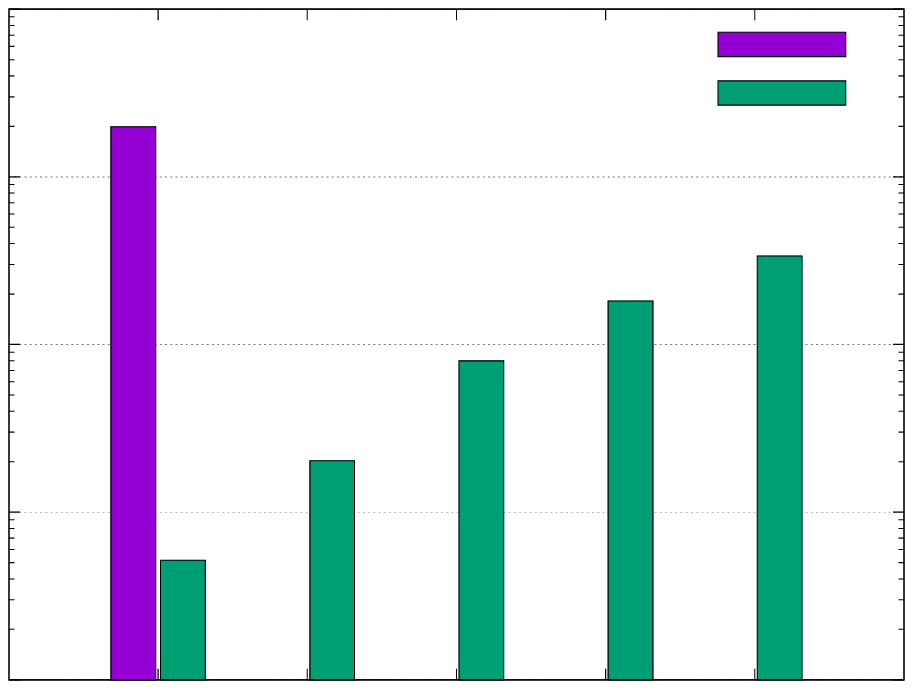}}}}
\end{center}
    \caption{equidistant vs Active OpInf, $n=15$}
  \end{subfigure}
 \begin{center}
  \begin{subfigure}[b]{0.45\textwidth}
    \begin{center}
{{\Large\resizebox{1.15\columnwidth}{!}{\input{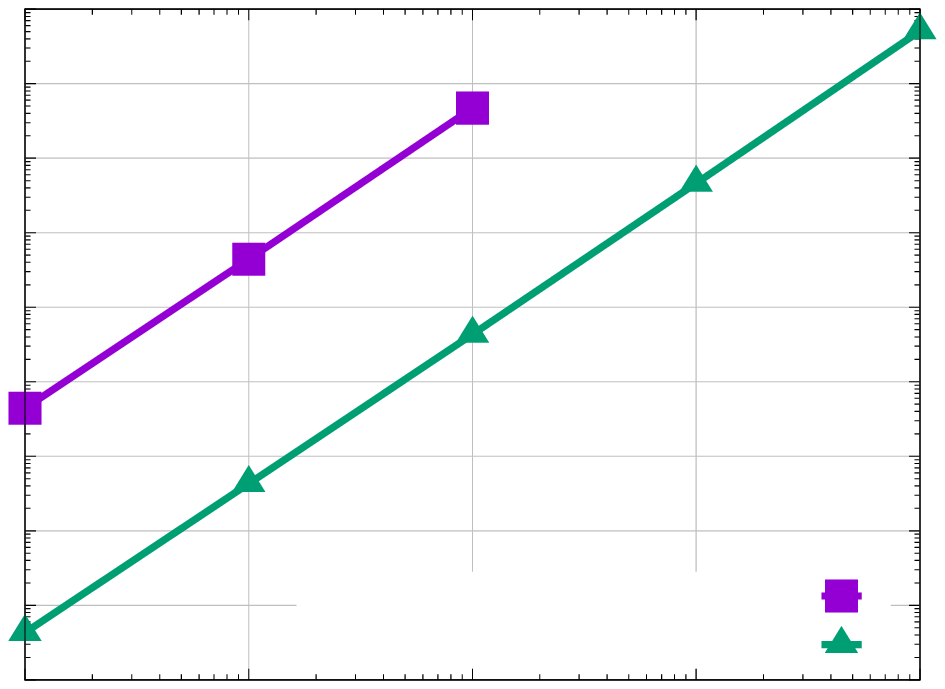}}}}
\end{center}
    \caption{time step 5000}
  \end{subfigure}
 \end{center}
  \caption{Population dynamics of fish species (Section~\ref{subsec:DiffLotkaVolterra}). An order of magnitude decay in the noise-to-signal ratio leads to a two orders of magnitude decay in the estimated MSE, which is in agreement with Proposition \ref{prop:nonlinearpolyMSE}. Sampling the dictionary at equidistant times results in learned models that become numerically unstable while active operator inference leads to models with orders of magnitude lower MSEs.}
  \label{fig:LotkaVolterra_MSE}
\end{figure}

\section{Conclusions}
In this work, we established probabilistic guarantees on predictions made with low-dimensional models learned from noisy data, which motivated an active data sampling approach to reduce the effect of noise. The key ingredient of the analysis and the numerical approach was building a bridge from data-driven modeling via operator inference and re-projection to classical projection-based model reduction. Thus, the proposed approach can be seen as an example of scientific machine learning that demonstrates the benefits of merging traditional scientific computing concepts such as model reduction with learning methods to effectively leverage data.

\section*{Acknowledgements}
We are grateful to Jens Saak for providing the code to generate the computational mesh and the system matrices for the heat transfer problem on steel profiles.  We also thank Jonathan Niles-Weed for directing us to references for deriving upper bounds on moments of the norm of Gaussian random matrices.

This work was partially supported by US Department of Energy, Office of Advanced Scientific Computing
Research, Applied Mathematics Program (Program Manager Dr. Steven Lee), DOE Award DESC0019334,
and by the National Science Foundation under Grant DMS-2012250.

\bibliographystyle{abbrv}
\bibliography{references.bib} 

\appendix
\section{Matrix concentration inequalities} \label{appendix}

    \begin{theorem} [Theorem 5.32 and Proposition 5.34 in \cite{paper:Vershynin2012}]
            Let $\bfA$ be an $N \times n$ matrix whose entries are independent standard normal random variables. Then
            \begin{align} \label{eq:NormGaussBoundExpectation}
                \Exp[\|\bfA\|_2] \le \sqrt{N} + \sqrt{n}
            \end{align}
            and for $t \ge 0$,
            \begin{align} \label{eq:MatrixConcVershynin}
                P\left (\biggl|\|\bfA\|_2 - \Exp[\|\bfA\|_2]\biggr| > t \right) \le 2e^{-t^2/2}.
            \end{align}
          \end{theorem}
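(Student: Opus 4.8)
The plan is to treat the two assertions separately: the concentration inequality \eqref{eq:MatrixConcVershynin} is a direct consequence of Gaussian concentration of measure, whereas the mean bound \eqref{eq:NormGaussBoundExpectation} needs a Gaussian comparison argument. I would prove the concentration bound first, since it is the more mechanical of the two.

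For \eqref{eq:MatrixConcVershynin}, I would view the $Nn$ entries of $\bfA$ as a single standard Gaussian vector in $\R^{Nn}$ and observe that the map $\bfA \mapsto \|\bfA\|_2$ is $1$-Lipschitz with respect to the Euclidean (Frobenius) norm: for any matrix $\bfB$ of the same size, $|\,\|\bfA\|_2 - \|\bfB\|_2\,| \le \|\bfA - \bfB\|_2 \le \|\bfA - \bfB\|_F$. Invoking the concentration of measure for Lipschitz functions of a standard Gaussian vector (which follows, e.g., from the Gaussian isoperimetric inequality or the logarithmic Sobolev inequality) gives $P(|\,\|\bfA\|_2 - \Exp[\|\bfA\|_2]\,| > t) \le 2 e^{-t^2/(2L^2)}$ with Lipschitz constant $L = 1$, which is exactly \eqref{eq:MatrixConcVershynin}.

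For the mean bound \eqref{eq:NormGaussBoundExpectation}, I would apply the Sudakov--Fernique comparison inequality. Writing $\|\bfA\|_2 = \sup_{\bfu \in S^{N-1},\,\bfv \in S^{n-1}} \langle \bfu, \bfA \bfv \rangle$, I introduce the centered Gaussian process $X_{\bfu,\bfv} = \langle \bfu, \bfA\bfv\rangle$ indexed by pairs of unit vectors and the comparison process $Y_{\bfu,\bfv} = \langle \bfg, \bfu\rangle + \langle \bfh, \bfv\rangle$ with $\bfg \sim N(\boldsymbol 0, \bfI)$ in $\R^N$ and $\bfh \sim N(\boldsymbol 0, \bfI)$ in $\R^n$ independent. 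A short computation gives the increment variances $\Exp[(X_{\bfu,\bfv} - X_{\bfu',\bfv'})^2] = 2 - 2\langle \bfu,\bfu'\rangle\langle \bfv,\bfv'\rangle$ and $\Exp[(Y_{\bfu,\bfv} - Y_{\bfu',\bfv'})^2] = (2 - 2\langle \bfu,\bfu'\rangle) + (2 - 2\langle \bfv,\bfv'\rangle)$, and the desired inequality $\Exp[(X_{\bfu,\bfv} - X_{\bfu',\bfv'})^2] \le \Exp[(Y_{\bfu,\bfv} - Y_{\bfu',\bfv'})^2]$ reduces to $(1 - \langle \bfu,\bfu'\rangle)(1 - \langle \bfv,\bfv'\rangle) \ge 0$, which holds because inner products of unit vectors lie in $[-1,1]$. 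Sudakov--Fernique then yields $\Exp[\|\bfA\|_2] = \Exp[\sup X_{\bfu,\bfv}] \le \Exp[\sup Y_{\bfu,\bfv}] = \Exp[\|\bfg\|_2] + \Exp[\|\bfh\|_2]$, and Jensen's inequality together with $\Exp[\|\bfg\|_2^2] = N$, $\Exp[\|\bfh\|_2^2] = n$ gives $\Exp[\|\bfg\|_2] + \Exp[\|\bfh\|_2] \le \sqrt{N} + \sqrt{n}$, which is \eqref{eq:NormGaussBoundExpectation}.

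The main obstacle is the verification that drives Sudakov--Fernique, namely arranging the comparison process so that its increment variances dominate those of $X_{\bfu,\bfv}$; once the factorization into $(1-\langle\bfu,\bfu'\rangle)(1-\langle\bfv,\bfv'\rangle)\ge 0$ is identified, the rest is routine. The concentration half requires no more than invoking Lipschitz--Gaussian concentration with the correct constant.
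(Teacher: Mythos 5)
The paper does not prove this statement at all: it is imported verbatim from Vershynin's lecture notes (Theorem 5.32 and Proposition 5.34 of the cited reference), so there is no in-paper proof to compare against. Your argument is correct and is essentially the standard proof from that reference --- the Sudakov--Fernique (Gordon/Slepian-type) comparison with $Y_{\bfu,\bfv}=\langle\bfg,\bfu\rangle+\langle\bfh,\bfv\rangle$ for the mean bound \eqref{eq:NormGaussBoundExpectation}, including the correct increment-variance factorization $2(1-\langle\bfu,\bfu'\rangle)(1-\langle\bfv,\bfv'\rangle)\ge 0$, and Lipschitz--Gaussian concentration with $L=1$ for \eqref{eq:MatrixConcVershynin} --- so it would serve as a self-contained substitute for the citation.
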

          
\section{Proof of Lemma~\ref{lemma:GaussNormPowerBound}} \label{appendix:GaussNormPowerBoundProof}

By using the triangle inequality for the norm $\Exp[|\cdot|^l]^{1/l}$,
    \begin{align} \label{eq:GaussMatrixLemmaTriangleIneq}
        \left(\Exp[\|\bfG\|_2^l]\right)^{1/l} & = \left(\Exp \left[ \bigl|\|\bfG\|_2 -\Exp[\|\bfG\|_2] +\Exp[\|\bfG\|_2] \bigr|^l \right] \right)^{1/l} \notag \\
        & \le \left(\Exp \left[ \bigl |\|\bfG\|_2 -\Exp[\|\bfG\|_2] \bigr |^l \right] \right)^{1/l} +\Exp[\|\bfG\|_2] \notag \\
        & \le \left(\Exp \left[ \bigl | \|\bfG\|_2 -\Exp[\|\bfG\|_2] \bigr |^l \right] \right)^{1/l} + \sqrt{n} + \sqrt{p}
    \end{align}
    where we have used the bound \eqref{eq:NormGaussBoundExpectation}.
    
    Denote by $\Gamma(\cdot)$ the gamma function. Recall that that $\Gamma(x+1) \le x^x$ for $x \ge 0$ and $\Gamma(x+1) = x \Gamma(x)$. To bound the first term in the right hand side of the inequality \eqref{eq:GaussMatrixLemmaTriangleIneq}, we proceed as follows. For $t \ge 0$, 
    \begin{align*}
        \Exp \left[ \bigl |\|\bfG\|_2 -\Exp[\|\bfG\|_2] \bigr |^l \right] & = l \int_0^{\infty} t^{l-1} P\left( \biggl | \|\bfG\|_2  -\Exp[\|\bfG\|_2] \biggr| \ge t \right) \,dt \\
        &\le   2 l  \int_0^{\infty} t^{l-1} e^{-t^2/2} \,dt  = 2 l \int_0^{\infty} (2u)^{\frac{l-2}{2}} e^{-u} \,du \notag \\
        & = l 2^{l/2} \int_0^{\infty} u^{l/2 - 1} e^{-u} \,du = 2^{l/2 +1} \frac{l}{2} \Gamma \left(\frac{l}{2}\right ) = 2^{l/2 + 1} \Gamma \left( \frac{l}{2} +1 \right) \notag \\
        & \le 2^{l/2 + 1} \left( \frac{l}{2} \right)^{l/2} = 2l^{l/2} \notag
    \end{align*}
    where we utilized the concentration inequality \eqref{eq:MatrixConcVershynin} in Appendix~\ref{appendix} and properties of the gamma function mentioned above.
    
    This implies that $\left(\Exp \left[ \left|\|\bfG\|_2 -\Exp[\|\bfG\|_2] \right|^l \right] \right)^{1/l} \le 2^{1/l}\sqrt{l}$ and the conclusion follows from \eqref{eq:GaussMatrixLemmaTriangleIneq}.

\end{document}